\theoremstyle{thmstyleone}%
\newtheorem{theorem}{Theorem}%  meant for continuous numbers
\newtheorem{lemma}[theorem]{Lemma}%
\theoremstyle{thmstyletwo}%
\theoremstyle{thmstylethree}%
\newtheorem{assumption}{Assumption}%
\DeclareRobustCommand{\eg}{e.g.,\@\xspace}
\DeclareRobustCommand{\ie}{i.e.,\@\xspace}
\DeclareRobustCommand{\wrt}{w.r.t.\@\xspace}
\DeclareRobustCommand{\cf}{cf.\@\xspace}
\DeclareRobustCommand{\iid}{i.i.d.\@\xspace}
\newcommand{\wt}[1]{\widetilde{#1}}
\newcommand{\vtheta}{\boldsymbol{\theta}}
\newcommand{\vphi}{\boldsymbol{\phi}}
\newcommand{\vx}{\boldsymbol{x}}
\newcommand{\vy}{\boldsymbol{y}}
\newcommand{\vh}{\boldsymbol{h}}
\newcommand{\Reals}{\mathbb{R}}
\newcommand{\Naturals}{\mathbb{N}}
\newcommand{\de}{\,\mathrm{d}}
\newcommand{\Hess}{\nabla^2}
\newcommand{\norm}[2][]{\left\Vert #2\right\Vert _{#1}}
\newcommand{\Xspace}{\mathcal{X}}
\newcommand{\Id}{\mathbb{I}}
\newcommand{\indi}[1]{\mathds{1}\left\{#1\right\}}
\DeclareMathOperator*{\EV}{\mathbb{E}}
\newcommand{\EVV}[2][]{\EV_{#1}\left[{#2}\right]}
\newcommand{\Gauss}{\mathcal{N}}
\DeclareMathOperator*{\Var}{\mathbb{V}\text{ar}}
\newcommand{\Pro}{\mathbb{P}}
\newcommand{\simiid}{\stackrel{\text{iid}}{\sim}}
\newcommand{\kl}{\mathop{\mathcal{KL}}}
\newcommand{\Dataset}{\ensuremath{\mathcal{D}}}
\newcommand{\hnabla}{\widehat{\nabla}}
\newcommand{\Aspace}{\mathcal{A}}
\newcommand{\Sspace}{\mathcal{S}}
\newcommand{\Tran}{p}
\newcommand{\Init}{\mu}
\newcommand{\Rew}{r}
\newcommand{\Rmax}{R}
\newcommand{\phimax}{M}
\newcommand{\Stat}{\rho}
\newcommand{\sm}{\xi_1}
\newcommand{\smm}{\xi_2}
\newcommand{\smmm}{\xi_3}
\newcommand{\Vv}{\mathbb{V}}
\begin{document}

\title[Smoothing Policies and Safe Policy Gradients]{Smoothing Policies and Safe Policy Gradients}

%%=============================================================%%
%% Prefix	-> \pfx{Dr}
%% GivenName	-> \fnm{Joergen W.}
%% Particle	-> \spfx{van der} -> surname prefix
%% FamilyName	-> \sur{Ploeg}
%% Suffix	-> \sfx{IV}
%% NatureName	-> \tanm{Poet Laureate} -> Title after name
%% Degrees	-> \dgr{MSc, PhD}
%% \author*[1,2]{\pfx{Dr} \fnm{Joergen W.} \spfx{van der} \sur{Ploeg} \sfx{IV} \tanm{Poet Laureate} 
%%                 \dgr{MSc, PhD}}\email{iauthor@gmail.com}
%%=============================================================%%

\author*[1]{\fnm{Matteo} \sur{Papini}}\email{matteo.papini@upf.edu}

\author[2]{\fnm{Matteo} \sur{Pirotta}}\email{pirotta@fb.com}
%\equalcont{These authors contributed equally to this work.}

\author[3]{\fnm{Marcello} \sur{Restelli}}\email{marcello.restelli@polimi.it}
%\equalcont{These authors contributed equally to this work.}

\affil[1]{Universitat Pompeu Fabra, Barcelona, Spain}
%\affil*[1]{\orgdiv{Department}, \orgname{Organization}, \orgaddress{\street{Street}, \city{City}, \postcode{100190}, \state{State}, \country{Country}}}

\affil[2]{Meta}
%\affil[2]{\orgdiv{Department}, \orgname{Organization}, \orgaddress{\street{Street}, \city{City}, \postcode{10587}, \state{State}, \country{Country}}}

\affil[3]{Politecnico di Milano, Milano, Italy}
%\affil[3]{\orgdiv{Department}, \orgname{Organization}, \orgaddress{\street{Street}, \city{City}, \postcode{610101}, \state{State}, \country{Country}}}

%%==================================%%
%% sample for unstructured abstract %%
%%==================================%%

\abstract{Policy Gradient (PG) algorithms are among the best candidates for the much-anticipated applications of reinforcement learning to real-world control tasks, such as robotics. However, the trial-and-error nature of these methods poses safety issues whenever the learning process itself must be performed on a physical system or involves any form of human-computer interaction. In this paper, we address a specific safety formulation, where both goals and dangers are encoded in a scalar reward signal and the learning agent is constrained to never worsen its performance, measured as the expected sum of rewards. By studying actor-only policy gradient from a stochastic optimization perspective, we establish improvement guarantees for a wide class of parametric policies, generalizing existing results on Gaussian policies. This, together with novel upper bounds on the variance of policy gradient estimators, allows us to identify meta-parameter schedules that guarantee monotonic improvement with high probability. The two key meta-parameters are the step size of the parameter updates and the batch size of the gradient estimates. Through a joint, adaptive selection of these meta-parameters, we obtain a policy gradient algorithm with monotonic improvement guarantees.}

\keywords{Reinforcement Learning, Safe Machine Learning, Policy Optimization, Policy Gradient, Monotonic Improvement}

%%\pacs[JEL Classification]{D8, H51}

%%\pacs[MSC Classification]{35A01, 65L10, 65L12, 65L20, 65L70}

\maketitle

\section{Introduction}\label{sec:intro}

%%Safety WHY: RL for real-world problems
Reinforcement Learning~(RL)~\citep{sutton2018reinforcement} has achieved astounding successes in games~\citep{mnih2015human,silver2018general,OpenAI_dota, alphastarblog}, matching or surpassing human performance in several occasions. However, the much-anticipated applications of RL to real-world tasks such as robotics~\citep{kober2013reinforcement}, autonomous driving~\citep{okuda2014survey} and finance~\citep{li2014online} seem still far. This technological delay may be due to the very nature of RL, which relies on the repeated interaction of the learning machine with the surrounding environment, \eg a manufacturing plant, a trafficked road, a stock market. The trial-and-error process resulting from this interaction is what makes RL so powerful and general. However, it also poses significant challenges in terms of sample efficiency~\citep{recht2018tour} and safety~\citep{amodei2016concrete}.

%%Safety WHAT: three necessary distinctions
%1. Reversible vs irreversible damage
In reinforcement learning, the term \textit{safety} can actually refer to a variety of problems~\citep{garcia2015comprehensive}. The general concern is always the same: avoiding or limiting \textit{damage}. In financial applications, it is typically a loss of money. In robotics and autonomous driving, one should also consider direct damage to people and property. In this work, we do not make assumptions about the nature of the damage, but we assume it is entirely encoded in the scalar reward signal that is presented to the agent in order to evaluate its actions. Other works~\citep[\eg][]{turchetta2016safe} employ a distinct safety signal, separate from rewards.

%3. Safe policy vs safe learning
A further distinction is necessary on the scope of safety constraints with respect to the agent's life. One may simply require the final behavior, the one that is deployed at the end of the learning process, to be safe. This is typically the case when learning is performed in simulation, but the final controller has to be deployed in the real world. The main challenges there are in transferring safety properties from simulation to reality~\citep[\eg][]{tan2018sim}. In other cases, learning must be performed, or at least finalized, on the actual system, because no reliable simulator is available~\citep[\eg][]{peters2008reinforcement}. In such a scenario, safety must be enforced for the whole duration of the learning process. This poses a further challenge, as the agent must necessarily go through a sequence of sub-optimal behaviors before learning its final policy. The problem of learning \textit{while} containing the damage is also known as \textit{safe exploration}~\citep{amodei2016concrete} and will be the focus of this work.\footnote{We only use "safe exploration" in the general sense of~\citep{amodei2016concrete}. Indeed, in this work, we are not concerned with how exploration should be performed to maximize efficiency, but only with ensuring safety in a context where some form of exploration is necessary. 
	Exploration in RL is a very profound problem with a vast research tradition~\citep{flp2019alttutorial}.	
	The problem of efficient exploration under performance-improvement constraints has been studied in the multi-armed-bandit literature under the name of \emph{conservative bandits}~\citep{Wu2016conservative,Kazerouni2017conservativelinear,garcelon2020improved}, with recent extensions to finite MDPs~\citep{garcelon2020conservative}. 
	In RL, safe exploration is mainly concerned with avoiding unsafe states during the learning process~\citep{hans2008safe,pecka2014safe,dalal2018safe,turchetta2016safe,berkenkamp2019safe}.
}

%%Safety HOW
%Some taxonomy of methods
\citet{garcia2015comprehensive}~provide a comprehensive survey on safe RL, where the existing approaches are organized into two main families: methods that modify the exploration process directly in order to explicitly avoid dangerous actions~\citep[\eg][]{gehring2013smart}, and methods that constrain exploration in a more indirect way by modifying the reward optimization process. The former typically require some sort of external knowledge, such as human demonstrations or advice~\citep[\eg][]{abbeel2010autonomous,clouse1992teaching}. In this work, we only assume online access to a sufficiently informative reward signal and prior knowledge of some worst-case constants that are easy to obtain. 
Optimization-based methods (those belonging to the second class) are more suited for this scenario. A particular kind, identified by Garc{\'i}a and Fern{\'a}ndez as \textit{constrained criteria}~\citep{moldovan2012safe,dicastro2012policy,kadota2006discounted}, enforces safety by introducing constraints in the optimization problem, \ie reward maximization.\footnote{Notably, the approach proposed by~\citet{Chow2018lyapunov} lays between the two classes. It relies on the framework of constrained MDPs to guarantee the safety of a behavior policy during training via a set of local, linear constraints defined using an external cost signal. Similar techniques have been used by~\citet{Berkenkamp2017safembrl} to guarantee the ability to re-enter a ``safe region'' during exploration.} 

%Perfomance constraints
A typical constraint is that the agent's performance, \ie the sum of rewards, must never be less than a user-specified threshold~\citep{geibel2005risk, thomas2015high}, which may be the average performance of a trusted \emph{baseline policy}. 
Under the assumption that the reward signal also encodes danger, low performances can be matched with dangerous behaviors, so that the performance threshold works as a safety threshold. This falls into the general framework of \emph{Seldonian machine learning} introduced by~\citet{thomas2019preventing}.

If we only cared about the safety of the final controller, the traditional RL objective --- maximizing cumulated reward --- would be enough. However, most RL algorithms are known to yield oscillating performances \textit{during} the learning phase. Regardless of the final solution, the intermediate ones may violate the threshold, hence yield unsafe behavior. This problem is known as \textit{policy oscillation}~\citep{bertsekas2011approximate,wagner2011reinterpretation}.

%Monotonic Improvement
A similar constraint, which confronts the policy oscillation problem even more directly, is \textit{Monotonic Improvment}~\citep[MI,][]{kakade2002approximately,pirotta2013safe}, and is the one adopted in this work. The requirement is that each new policy implemented by the agent during the learning process does not perform worse than the previous one. In this way, if the initial policy is safe, so will be all the subsequent ones.

%%Contribution
%Policy search
The way safety constraints such as MI can be imposed on the optimization process depends, of course, on what kind of policies are considered as candidates and on how the optimization itself is performed. These two aspects are often tied and will depend on the specific kind of RL algorithm that is employed. \textit{Policy Search} or \emph{Optimization}~\citep[PO,][]{deisenroth2013survey} is a family of RL algorithms where the class of candidate policies is fixed in advance and a direct search for the best one within the class is performed. This makes PO algorithms radically different from value-based algorithms such as Deep Q-Networks~\citep{mnih2015human}, where the optimal policy is a byproduct of a learned value function. Although value-based methods gained great popularity from their successes in games, PO algorithms are better suited for real-world tasks, especially the ones involving cyber-physical systems. The main reasons are the ability of PO methods to deal with high-dimensional continuous state and action spaces, convergence guarantees~\citep{sutton2000policy}, robustness to sensor noise, and the superior control on the set of feasible policies. The latter allows introducing domain knowledge into the optimization process, possibly including some safety constraints.

%Safe PG
In this work, we focus on Policy Gradient methods~\citep[PG,][]{sutton2000policy,peters2008reinforcement}, where the set of candidate policies is a class of parametric distributions and the optimization is performed via stochastic gradient ascent on the performance objective as a function of the policy parameters. In particular, we analyze the prototypical PG algorithm, \textsc{REINFORCE}~\citep{williams1992simple} and see how the MI constraints can be imposed by adaptively selecting its meta-parameters during the learning process. 
To achieve this, we study in more depth the stochastic gradient-based optimization process that is at the core of all PG methods~\citep{robbins1951stochastic}. In particular, we identify a general family of parametric policies that makes the optimization objective Lipschitz-smooth~\citep{nesterov2013introductory} and allows easy upper-bounding of the related smoothness constant. This family, referred to as \textit{smoothing policies}, includes commonly used policy classes from the PG literature, namely Gaussian and Softmax policies.
Using known properties of Lipschitz-smooth functions, we then provide lower bounds on the performance improvement produced by gradient-based updates, as a function of tunable meta-parameters.
This, in turn, allows identifying those meta-parameter schedules that guarantee MI with high probability. 
In previous work, a similar result was achieved only for Gaussian policies~\citep{pirotta2013adaptive,papini2017adaptive}.\footnote{See Section~\ref{sec:related} for a discussion of related approaches, like the popular TRPO~\citep{schulman2015trust}.}

%Step size and batch size
The meta-parameters studied here are the \textit{step size} of the policy updates, or learning rate, and the \textit{batch size} of gradient estimates, \ie the number of trials that are performed within a single policy update. These meta-parameters, already present in the original \textsc{REINFORCE} algorithm, are typically selected by hand and fixed for the whole learning process~\citep{duan2018benchmarking}. Besides guaranteeing monotonic improvement, our proposed method removes the burden of selecting these meta-parameters. This safe, automatic selection within the \textsc{REINFORCE} algorithmic framework yields SPG, our Safe Policy Gradient algorithm.

The paper is organized as follows: in Section~\ref{sec:pre} we introduce the necessary background on Markov decision processes, policy optimization, and smooth functions. In Section~\ref{sec:smoothpg}, we introduce smoothing policies and show the useful properties they induce on the policy optimization problem, most importantly a lower bound on the performance improvement yielded by an arbitrary policy parameter update (Theorem~\ref{th:main}). In Section~\ref{sec:metaparams}, we exploit these properties to select the step size of \textsc{REINFORCE} in a way that guarantees MI with high probability when the batch size is fixed, then we achieve similar results with an adaptive batch size. In Section~\ref{sec:algo}, we design a monotonically improving policy gradient algorithm with adaptive batch size, called Safe Policy Gradient (SPG), and show how the latter can also be adapted to weaker improvement constraints. In Section~\ref{sec:related}, we offer a detailed comparison of our contributions with the most closely related literature. In Section~\ref{sec:exp} we empirically evaluate SPG on simulated control tasks. Finally, we discuss the limitations of our approach and propose directions for future work in Section~\ref{sec:end}.

\section{Preliminaries}\label{sec:pre}
In this section, we revise continuous Markov Decision Processes~\citep[MDPs,][]{puterman2014markov}, actor-only Policy Gradient algorithms~\citep[PG, ][]{deisenroth2013survey}, and some general properties of smooth functions.

\subsection{Markov Decision Processes}\label{ssec:mpd}
A Markov Decision Process~\citep[MDP,][]{puterman2014markov} is a tuple ${\mathcal{M}=\langle\Sspace,\Aspace,\Tran,\Rew,\gamma,\Init\rangle}$, comprised of a measurable state space
${\Sspace}$, a measurable action space ${\Aspace}$, a Markovian transition kernel ${\Tran:\Sspace\times\Aspace\to\Delta_\Sspace}$, where $\Delta_\Sspace$ denotes the set of probability distributions over $\Sspace$, a reward function ${\Rew:\Sspace\times\Aspace\to\Reals}$, a discount factor~$\gamma\in(0,1)$ and an initial-state distribution $\Init\in\Delta_\Sspace$.
We only consider bounded-reward MDPs, and denote with $\Rmax\ge\sup_{s\in\Sspace,a\in\Aspace}\vert \Rew(s,a)\vert $ (a known upper bound on) the maximum absolute reward. This is the only prior knowledge we have on the task.
The MDP is used to model the interaction of a rational agent with the environment.
We model the agent's behavior with a policy $\pi:\Sspace\to\Delta_\Aspace$, a stochastic mapping from states to actions. 
The initial state is drawn as $s_0\sim\mu$. For each time step $t=0,1,\dots$, the agent draws   an action $a_t\sim\pi(\cdot\vert s_t)$, conditional on the current state $s_t$. Then, the agent obtains a reward $r_{t+1}=\Rew(s_t,a_t)$ and the state of the environment transitions to $s_{t+1}\sim\Tran(\cdot\vert s_t, a_t)$. The goal of the agent is to maximize the expected sum of discounted rewards, or \textit{performance measure}:
\begin{align}
J(\pi) \coloneqq \EVV{\sum_{t=0}^\infty\gamma^t r_{t+1} \vert  s_0\sim\mu, a_t\sim\pi(\cdot\vert s_t), s_{t+1}\sim p(\cdot\vert s_t,a_t)}.
\end{align}
We focus on continuous MDPs, where states and actions are real vectors: $\Sspace\subseteq\Reals^{d_{\Sspace}}$ and $\Aspace\subseteq\Reals^{d_{\Aspace}}$. However, all the results naturally extend to the discrete case by replacing integrals with summations. See~\citep{puterman2014markov,bertsekas2004stochastic} on matters of measurability and integrability, which just require common technical assumptions. We slightly abuse notation by denoting probability measures (assumed to be absolutely continuous) and density functions with the same symbol.

Given an MDP, the purpose of RL is to find an optimal policy $\pi^*\in\arg\max_{\pi}J(\pi)$ without knowing the transition kernel $\Tran$ and the reward function $\Rew$ in advance, but only through interaction with the environment. To better characterize this optimization objective, it is convenient to introduce further quantities. We denote with $p_{\pi}$ the transition kernel of the Markov Process induced by policy $\pi$, \ie $\Tran_{\pi}(\cdot\vert s) \coloneqq \int_{\Aspace}\pi(a\vert s) \Tran(\cdot\vert s,a)\de a$.
The $t$-step transition kernel under policy $\pi$ is defined inductively as follows:
\begin{align}
&\Tran_{\pi}^{0}(\cdot\vert s) = \indi{s=s'},\nonumber \\
&\Tran_{\pi}^{1}(\cdot\vert s) \coloneqq \Tran_{\pi}(\cdot\vert s),\nonumber\\
&\Tran_{\pi}^{t+1}(\cdot\vert s) \coloneqq  \int_{\Sspace}\Tran_{\pi}^{t}(s'\vert s)\Tran_{\pi}(\cdot\vert s')\de s',\label{eq:pt}
\end{align}
for all $s\in\Sspace$ and $t\ge1$. The $t$-step transition kernel allows to define the following \emph{conditional state-occupancy} measure:
\begin{align}
\Stat_{s}^{\pi}(\cdot) = (1-\gamma)\sum_{t=0}^{\infty}\gamma^t\Tran_{\pi}^t(\cdot\vert s),\label{def:dmupi}
\end{align}
measuring the (discounted) probability of visiting a state starting from $s$ and following policy $\pi$.
The following property of $\Stat_{s}^\pi$ ---a variant of the \emph{generalized eigenfunction property} by~\citet[][Lemma 20]{ciosek2020expected}---will be useful (proof in Appendix~\ref{app:mdp}):
\begin{restatable}{proposition}{eigenfunction}\label{th:eigenfunction}
	Let $\pi$ be any policy and $f$ be any integrable function on $\Sspace$ satisfying the following recursive equation:
	\begin{equation*}
	f(s) = g(s) + \gamma\int_{\Sspace}\Tran_{\pi}(s'\vert s)f(s')\de s',
	\end{equation*}
	for all $s\in\Sspace$ and some integrable function $g$ on $\Sspace$. Then:
	\begin{equation*}
	f(s) = \frac{1}{1-\gamma}\int_{\Sspace}\Stat^{\pi}_{s}(s')g(s')\de s',
	\end{equation*}
	for all $s\in\Sspace$.
\end{restatable}

The state-value function $V^{\pi}(s)=\EVV[\pi]{\sum_{t=0}^{\infty}\Rew(S_t,A_t)\vert S_0=s}$ is the discounted sum of rewards obtained, in expectation, by following policy $\pi$ from state $s$, and satisfies Bellman's equation~\citep{puterman2014markov}:
\begin{align}\label{def:v}
V^{\pi}(s) = \EVV[a\sim\pi(\cdot\vert s)]{\Rew(s,a) + \gamma \EVV[s'\sim\Tran(\cdot\vert s,a)]{V^{\pi}(s')}},
\end{align}
Similarly, the action-value function:
\begin{align}\label{def:q}
Q^{\pi}(s,a) = \Rew(s,a) + \gamma\EVV[s'\sim\Tran(\cdot\vert s,a)]{V^{\pi}(s')},
\end{align} 
is the discounted sum of rewards obtained, in expectation, by taking action $a$ in state $s$ and following $\pi$ afterwards. 

The two value functions are closely related:
\begin{align}
&V^{\pi}(s) = \int_{\Aspace}\pi(a\vert s)Q^{\pi}(s,a)\de a,\label{eq:vtoq}\\
&Q^{\pi}(s,a) = \Rew(s,a) + \gamma\int_{\mathcal{S}}p(s'\vert s,a)V_{\pi}(s')\de s'.\label{eq:qtov}
\end{align}
For bounded-reward MDPs, the value functions are bounded for every policy~$\pi$:
\begin{align}
&\norm[\infty]{V^{\pi}} \leq \norm[\infty]{Q^{\pi}} \leq \frac{\Rmax}{1-\gamma},\label{eq:boundv}
\end{align}
where $\norm[\infty]{V^{\pi}} = \sup_{s\in\Sspace}\vert V^{\pi}(s)\vert $ and $\norm[\infty]{Q^{\pi}} = \sup_{s\in\Sspace,a\in\Aspace} \vert Q^{\pi}(s,a)\vert $.
Using the definition of state-value function we can rewrite the performance measure as follows:
\begin{align}\label{eq:defj}
J(\pi) = \int_{\Sspace}\Init(s)V^{\pi}(s)\de s = \frac{1}{1-\gamma}\int_{\Sspace}\Stat^\pi(s)\int_\Aspace\pi(a\vert s)\Rew(s,a)\de a\de s,
\end{align}
where: 
\begin{equation}\label{eq:dmupi}
\Stat^{\pi}(\cdot) = \int_{\Sspace}\Init(s)\Stat_{s}^{\pi}(\cdot) \de s,
\end{equation}
is the state-occupancy probability under the starting-state distribution $\mu$.
%Policy search methods actively search for the policy that maximizes the performance measure, typically within a specific class of policies.

\subsection{Parametric policies}
In this work, we only consider parametric policies. Given a $d$-dimensional parameter vector $\vtheta\in\Theta\subseteq\Reals^d$, a parametric policy is a stochastic mapping from states to actions parametrized by $\vtheta$, denoted with $\pi_{\vtheta}$. The search for the optimal policy is thus limited to the policy class \linebreak$\Pi_{\Theta} = \left\{\pi_{\vtheta}\mid \vtheta \in \Theta\right\}$. This corresponds to finding an optimal parameter, \ie\linebreak $\vtheta^*\in\arg\max_{\vtheta\in\Theta}J(\pi_{\vtheta})$. For ease of notation, we often write $\vtheta$ in place of $\pi_{\vtheta}$ in function arguments and superscripts, \eg $J(\vtheta)$, $\Stat^{\vtheta}(s)$ and $V^{\vtheta}(s)$ in place of $J(\pi_{\vtheta})$, $\Stat^{\pi_{\vtheta}}$ and $V^{\pi_{\vtheta}}(s)$, respectively.\footnote{Note that $J:\Theta\to\Reals$, as a function of policy parameters, may have a different geometry than $J:\Pi_{\Theta}\to\Reals$, as a function of the policy. In particular, policy parametrization can be an additional source of non-convexity.}
We restrict our attention to policies that are twice differentiable \wrt $\vtheta$, for which the gradient $\nabla_{\vtheta}\pi_{\vtheta}(a\vert s)$ and the Hessian $\nabla_{\vtheta}^2\pi_{\vtheta}(a\vert s)$ are defined everywhere and finite.
For ease of notation, we omit the $\vtheta$ subscript in $\nabla_{\vtheta}$ when clear from the context. Given any twice-differentiable scalar function $f:\Theta\to\Reals$, we denote with $D_if$ the $i$-th gradient component, \ie $\frac{\partial f}{\partial \theta_i}$, and with $D_{ij}f$ the Hessian element of coordinates $(i,j)$, \ie $\frac{\partial^2f}{\partial\theta_i\partial\theta_j}$. We also write $\nabla f(\vtheta)$ to denote $\left.\nabla_{\widetilde{\vtheta}} f(\widetilde{\vtheta})\right\vert_{\widetilde{\vtheta} = \vtheta}$ when this does not introduce any ambiguity.

The Policy Gradient Theorem~\citep{sutton2000policy,konda1999actor} allows us to characterize the gradient of the performance measure $J(\vtheta)$ as an expectation over states and actions visited under $\pi_{\vtheta}$:\footnote{As observed by~\cite{nota2020policy}, it is important that the state-occupancy measure is discounted as in~\eqref{def:dmupi} for the Policy Gradient Theorem to hold. An intuitive way to see the discounted occupancy $\rho^\pi(s)$ is as the probability of visiting state $s$ in an indefinite-horizon undiscounted MDP that is reset to the initial state distribution with probability $1-\gamma$ at each step.}
\begin{align}\label{eq:thepgt}
\nabla J(\vtheta) = \frac{1}{1-\gamma}\int_{\Sspace}\Stat^{\vtheta}(s)\int_{\Aspace}\pi_{\vtheta}(a\vert s)\nabla\log\pi_{\vtheta}(a\vert s)Q^{\vtheta}(s,a)\de a \de s.
\end{align}
The gradient of the log-likelihood $\nabla\log\pi_{\vtheta}(\cdot\vert s)$ is called \textit{score function}, while the Hessian of the log-likelihood $\Hess \log\pi_{\vtheta}(\cdot\vert s)$ is sometimes called \textit{observed information}.

\subsection{Actor-only policy gradient}
In practice, we always consider finite episodes of length $T$. We call this the \textit{effective horizon} of the MDP, chosen to be sufficiently large so that the problem does not lose generality.\footnote{We consider infinite-horizon discounted MDPs in our theoretical analysis, but consider a finite horizon when introducing specific policy gradient estimators. This mismatch is justified by the following result: when the reward is uniformly bounded by $\Rmax$, by setting $T = O \left( \log(\Rmax/\epsilon) / (1-\gamma) \right)$, the discounted truncated sum of rewards is $\epsilon$-close to the infinite sum~\citep[see, \eg][Sec. 2.3.3]{kakade2003sample}. See Appendix~\ref{app:geom} for a way to remove this bias by randomizing the horizon.}
We denote with ${\tau\coloneqq(s_0,a_0,s_1,a_1,\dots,s_{T-1},a_{T-1})}$ a \textit{trajectory}, \ie a sequence of states and actions of length $T$ such that $s_0\sim\Init$, $a_t\sim\pi(\cdot\vert s_t)$, $s_{t}\sim \Tran(\cdot\vert s_{t-1}, a_{t-1})$ for $t=0,\dots,T-1$ and some policy $\pi$. In this context, the performance measure of a parametric policy $\pi_{\vtheta}$ can be defined as:
\begin{align}
J(\vtheta) = \EVV[\tau\sim p_{\vtheta}]{\sum_{t=0}^{T-1}\gamma^t\Rew(s_t,a_t)},
\end{align}
where $p_{\vtheta}(\tau)$ is the probability density of the trajectory $\tau$ that can be generated by following policy $\pi_{\vtheta}$, \ie $p_{\vtheta}(\tau)=\mu(s_0)\pi_{\vtheta}(a_0\vert s_0)p(s_1\vert s_0,a_0)\dots \pi_{\vtheta}(a_{T-1}\vert s_{T-1})$. Let $\Dataset\sim p_{\vtheta}$ be a batch $\{\tau_1,\tau_2,\dots,\tau_N\}$ of $N$ trajectories generated with $\pi_{\vtheta}$, \ie $\tau_i\sim p_{\vtheta}$ \iid for $i=1,\dots,N$. Let $\hnabla J(\vtheta{;}\Dataset)$ be an estimate of the policy gradient $\nabla J(\vtheta)$ based on $\Dataset$.
Such an estimate can be used to perform stochastic gradient ascent on the performance objective~$J(\vtheta)$:
\begin{align}\label{eq:spg}
\vtheta' \gets \vtheta + \alpha\hnabla J(\vtheta{;}\Dataset),
\end{align}
where $\alpha\geq0$ is a \textit{step size} and $N=\vert\Dataset\vert$ is called \textit{batch size}. 
This yields an \textit{Actor-only Policy Gradient} method, summarized in Algorithm \ref{alg:pg}.

\begin{algorithm}[t]
	\caption{Actor-only policy gradient}
	\label{alg:pg}
	\begin{algorithmic}[1] 
		\State \textbf{Input:} initial policy parameters $\vtheta_0$, step size $\alpha$, batch size $N$, number of iterations $K$
		\For{$k=0,\dots,K-1$}
		\State Collect $N$ trajectories with $\vtheta_k$ to obtain dataset $\Dataset_k$
		\State Compute policy gradient estimate $\hnabla J(\vtheta_k{;} \Dataset_k)$
		\State Update policy parameters as $\vtheta_{k+1} \gets \vtheta_k + \alpha\hnabla J(\vtheta_k{;}\Dataset_k)$
		\EndFor
	\end{algorithmic}
\end{algorithm}

Under mild conditions, this algorithm is guaranteed to converge to a local optimum~\citep{sutton2000policy}. This is reasonable since the objective $J(\vtheta)$ is non-convex in general.\footnote{Recent works show that policy gradient algorithms can converge to \emph{globally} optimal policies in some interesting special cases~\citep{bhandari2019global,zhang2020sample,agarwal2020optimality}.}
As for the gradient estimator, we can use \textsc{REINFORCE}~\citep{williams1992simple,glynn86stochastic}:\footnote{In the literature, the term \textsc{REINFORCE} is often used to denote actor-only policy gradient methods in general. In this paper, \textsc{REINFORCE} refers to the algorithm by~\citet{williams1992simple}, which also applies to more general stochastic optimization problems.}
\begin{align}\label{eq:reinforce}
\hnabla J(\vtheta{;}\Dataset) = \frac{1}{N}\sum_{i=1}^{N}\left(\sum_{t=0}^{T-1}\gamma^t\Rew(a_t^i,s_t^i) - b\right)\left(\sum_{t=0}^{T-1}\nabla\log\pi_{\vtheta}(a_t^i\vert s_t^i)\right),
\end{align}
or its refinement, \textsc{G(PO)MDP}~\citep{baxter2001infinite}, which typically suffers from less variance~\citep{peters2008reinforcement}:
\begin{align}\label{eq:gpomdp}
\hnabla J(\vtheta{;}\Dataset) = \frac{1}{N}\sum_{i=1}^{N}\sum_{t=0}^{T-1}\left[\left(\gamma^t\Rew(a_t^i,s_t^i) - b_t\right)\sum_{h=0}^{t}\nabla\log\pi_{\vtheta}(a_h^i\vert s_h^i)\right],
\end{align}
where the superscript on states and actions denotes the $i$-th trajectory of the dataset and $b$ is a (possibly time-dependent and vector-valued) control variate, or \textit{baseline}. Both estimators are unbiased for any action-independent baseline.\footnote{Also valid action-dependent baselines have been proposed. See~\citep{tucker2018mirage} for a discussion.}
\citet{peters2008reinforcement} prove that Algorithm \ref{alg:pg} with the \textsc{G(PO)MDP} estimator is equivalent to Monte-Carlo PGT~\citep[Policy Gradient Theorem,][]{sutton2000policy}, and provide variance-minimizing baselines for both \textsc{REINFORCE} and \textsc{G(PO)MDP}.

Algorithm~\ref{alg:pg} is called \textit{actor-only} to discriminate it from \textit{actor-critic} policy gradient  algorithms~\citep{konda1999actor}, where an approximate value function, or \textit{critic}, is employed in the gradient computation.
In this work, we will focus on actor-only algorithms, for which safety guarantees are more easily proven.\footnote{The distinction is not so sharp, as a critic can be seen as a baseline and vice-versa. We call \textit{critic} an \textit{explicit} value function estimate used in policy gradient estimation. %Moreover, actor-only methods are necessarily episodic. Actor-critic algorithms tend to have the time step, not the episode, as their atomic update, although typically resolving to equally large batch updates~\citep{duan2018benchmarking}.
} 
%A remarkable exception is the proof of convergence of actor-critic methods with compatible function approximation provided by~\citet{Bhatnagar2009actorcritic}.
Generalizations of Algorithm \ref{alg:pg} include reducing the variance of gradient estimates through baselines and other stochastic-optimization techniques~\citep[\eg][]{papini2018stochastic,shen2019hessian,xu2020sample} using a vector step size~\citep{yu2006fast, papini2017adaptive}; making the step size \textit{adaptive}, \ie iteration and/or data-dependent~\citep{pirotta2013adaptive}; making the batch size $N$ also adaptive~\citep{papini2017adaptive}; applying a preconditioning matrix to the gradient, as in Natural Policy Gradient~\citep{kakade2002natural} and second-order methods~\citep{furmston2012unifying}.

\subsection{Smooth functions}\label{ssec:smoothfn}
In the following we denote with $\norm[p]{\vx}$ the $\ell_p$-norm of vector $\vx$, which is the Euclidean norm for $p=2$.
. For a matrix $A$, $\norm[p]{A}=\sup\{\norm[p]{Ax}:\norm[p]{x}=1\}$ denotes the induced norm, which is the spectral norm for $p=2$. When the $p$ subscript is omitted, we always mean $p=2$.

Let $g:\mathcal{X}\subseteq\Reals^d\to\Reals^n$ be a (non-convex) vector-valued function. We call $g$ \textit{Lipschitz continuous} if there exists $L>0$ such that, for every $\vx,\vx'\in\Xspace$:
\begin{align}
\norm[]{g(\vx')-g(\vx)} \leq L\norm[]{\vx'-\vx}.
\end{align} 
Let $f:\mathcal{X}\subseteq\Reals^d\to\Reals$ be a real-valued differentiable function. We call $f$ \textit{Lipschitz smooth} if its gradient is Lipschitz continuous, \ie there exists $L>0$ such that, for every $\vx,\vx'\in\Xspace$:
\begin{align}
\norm[]{\nabla f(\vx')- \nabla f(\vx)} \leq L\norm[]{\vx'-\vx}.
\end{align}
Whenever we want to specify the Lipschitz constant $L$ of the gradient, we call $f$ $L$-smooth.\footnote{The Lipschitz constant is usually defined as the \textit{smallest} constant satisfying the Lipschitz condition. In this paper, we accept \textit{any} constant for which the Lipschitz condition holds.}
We also call $L$ the \emph{smoothness constant} of $f$.
For a twice-differentiable function, the following holds:\footnote{The results from this section are well known in the optimization literature~\citep{nesterov2013introductory}. However, proofs of Lemma \ref{lem:mvi} and \ref{lem:smooth} are reported in Appendix~\ref{app:smooth} for the sake of completeness.}
\begin{restatable}{proposition}{mvi}\label{lem:mvi}
	Let $\Xspace$ be a convex subset of $\Reals^d$ and $f:\mathcal{X}\to\Reals$ be a twice-differentiable function. If the Hessian is uniformly bounded in spectral norm by $L>0$, \ie $\sup_{\vx\in\Xspace}\norm[2]{\Hess f(\vx)} \leq L$, then $f$ is $L$-smooth.
\end{restatable}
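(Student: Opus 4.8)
The statement is the classical fact that a uniform spectral-norm bound on the Hessian controls the Lipschitz constant of the gradient. Since $f$ is twice differentiable, its gradient $\nabla f$ is a differentiable map from $\Xspace$ to $\Reals^m$, and because $\Xspace$ is convex the whole segment $\{\vx + t(\vx'-\vx) : t\in[0,1]\}$ lies in $\Xspace$ for any $\vx,\vx'\in\Xspace$. The plan is to reduce the vector inequality to a one-dimensional mean value argument along this segment, so that I never need continuity or integrability of the Hessian---only its pointwise existence and the assumed uniform bound.

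Fix $\vx,\vx'\in\Xspace$ and set $\vv = \nabla f(\vx') - \nabla f(\vx)$. If $\vv = \boldsymbol{0}$ the claim is immediate, so assume $\vv\neq\boldsymbol{0}$. I would introduce the scalar auxiliary function
\begin{align}
	\phi(t) = \langle \vv, \nabla f(\vx + t(\vx'-\vx))\rangle, \qquad t\in[0,1].
\end{align}
By the chain rule, $\phi$ is differentiable on $[0,1]$ with derivative $\phi'(t) = \langle \vv, \Hess f(\vx + t(\vx'-\vx))(\vx'-\vx)\rangle$, where I use that the Jacobian of the map $\vx\mapsto\nabla f(\vx)$ is exactly the Hessian $\Hess f$. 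Applying the scalar mean value theorem to $\phi$ yields some $\bar{t}\in(0,1)$ with $\phi(1)-\phi(0) = \phi'(\bar{t})$.

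The left-hand side equals $\phi(1)-\phi(0) = \langle \vv, \nabla f(\vx')-\nabla f(\vx)\rangle = \norm{\vv}^2$ by linearity of the inner product. For the right-hand side, Cauchy--Schwarz together with the definition of the induced ($L^2$) spectral norm gives
\begin{align}
	\phi'(\bar{t}) \leq \norm{\vv}\,\norm{\Hess f(\vx + \bar{t}(\vx'-\vx))(\vx'-\vx)} \leq \norm{\vv}\,\norm[2]{\Hess f(\vx + \bar{t}(\vx'-\vx))}\,\norm{\vx'-\vx}.
\end{align}
Invoking the uniform bound $\sup_{\vx\in\Xspace}\norm[2]{\Hess f(\vx)}\leq L$ and combining the two displays, I obtain $\norm{\vv}^2 \leq L\,\norm{\vv}\,\norm{\vx'-\vx}$. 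Dividing through by $\norm{\vv}>0$ gives $\norm{\nabla f(\vx')-\nabla f(\vx)} \leq L\norm{\vx'-\vx}$, which is precisely the $L$-smoothness claim.

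The only genuinely delicate point is deliberately sidestepped by this route: a naive argument would write $\nabla f(\vx')-\nabla f(\vx) = \int_0^1 \Hess f(\vx+t(\vx'-\vx))(\vx'-\vx)\de t$ and bound under the integral sign, but this needs the Hessian to be at least integrable along the segment, which twice-differentiability alone does not guarantee. Passing to the scalar function $\phi$ and using the elementary mean value theorem avoids any such regularity assumption, so I expect no real obstacle beyond correctly identifying the Jacobian of $\nabla f$ with $\Hess f$ and exploiting the clean cancellation of the factor $\norm{\vv}$.
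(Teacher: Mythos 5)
Your proof is correct, but it takes a genuinely different route from the paper's. The paper defines $g(\lambda)=\nabla f(\vx+\lambda\vh)$ with $\vh=\vx'-\vx$, writes $\nabla f(\vx')-\nabla f(\vx)=\int_0^1\vh^T\Hess f(\vx+\lambda\vh)\de\lambda$ via the Fundamental Theorem of Calculus, and then pushes the norm inside the integral, bounding each integrand by $\norm[2]{\Hess f}\norm{\vh}\leq L\norm{\vh}$. You instead scalarize first: pairing the gradient difference $\vv=\nabla f(\vx')-\nabla f(\vx)$ against itself through the auxiliary function $\phi(t)=\langle\vv,\nabla f(\vx+t\vh)\rangle$, applying the one-dimensional mean value theorem, and cancelling a factor of $\norm{\vv}$. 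Both arguments are sound and rest on the same two ingredients (convexity of $\Xspace$ to stay on the segment, and the identification of the Jacobian of $\nabla f$ with $\Hess f$). What your route buys is exactly what you point out: the mean value theorem needs only pointwise differentiability of $\nabla f$ along the segment, whereas the paper's integral representation tacitly requires $\lambda\mapsto\Hess f(\vx+\lambda\vh)\vh$ to be integrable, which bare twice-differentiability does not formally guarantee (in practice the Hessians in this paper are continuous, so the gap is harmless). What the paper's route buys is uniformity with the proof of the quadratic bound (Lemma~\ref{lem:smooth}), which genuinely needs the integral representation; reusing the same template keeps the appendix short. One small caveat for your write-up: the mean value theorem gives an equality $\phi(1)-\phi(0)=\phi'(\bar t)$, and you bound $\phi'(\bar t)$ from above, which is all you need since the left-hand side $\norm{\vv}^2$ is nonnegative — but it is worth stating the chain as $\norm{\vv}^2=\phi'(\bar t)\leq L\norm{\vv}\norm{\vh}$ explicitly so the direction of the inequality is unambiguous.
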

Lipschitz smooth functions admit a quadratic bound on the deviation from linear behavior:
\begin{restatable}[Quadratic Bound]{proposition}{smooth}\label{lem:smooth}
	Let $\Xspace$ be a convex subset of $\Reals^d$ and $f:\mathcal{X}\to\Reals$ be an $L$-smooth function. Then, for every $\vx,\vx'\in\Xspace$:
	\begin{align}
	\left\vert f(\vx ') - f(\vx ) - \left\langle \vx '- \vx , \nabla f(\vx )\right\rangle\right\vert \leq \frac{L}{2}\norm{\vx' - \vx }^2,
	\end{align}
	where $\langle\cdot,\cdot\rangle$ denotes the dot product.
\end{restatable}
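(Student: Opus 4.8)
The plan is to reduce the quantity inside the absolute value to an integral, along the segment joining $\vx$ to $\vx'$, of the gradient's deviation from its value at $\vx$, and then to control that deviation directly through the $L$-smoothness hypothesis. Because $\Xspace$ is convex, the segment $\vx_t \coloneqq \vx + t(\vx' - \vx)$ lies in $\Xspace$ for every $t\in[0,1]$, so $f$ is differentiable at each $\vx_t$ and the scalar map $\phi(t)\coloneqq f(\vx_t)$ is well defined and differentiable on $[0,1]$. This is the one place where convexity of the domain is essential.

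First I would apply the fundamental theorem of calculus to $\phi$, whose derivative is $\phi'(t) = \langle \nabla f(\vx_t), \vx'-\vx\rangle$ by the chain rule, obtaining
\begin{align}
  f(\vx') - f(\vx) = \int_0^1 \langle \nabla f(\vx_t), \vx'-\vx\rangle \de t.
\end{align}
Writing the linear term as a constant integrand, $\langle \vx'-\vx, \nabla f(\vx)\rangle = \int_0^1 \langle \nabla f(\vx), \vx'-\vx\rangle \de t$, and subtracting, the target expression collapses to
\begin{align}
  f(\vx') - f(\vx) - \langle \vx'-\vx, \nabla f(\vx)\rangle = \int_0^1 \langle \nabla f(\vx_t) - \nabla f(\vx),\, \vx'-\vx\rangle \de t.
\end{align}

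Next I would take absolute values, move the modulus inside the integral via the triangle inequality, and bound each integrand by Cauchy--Schwarz, yielding the factor $\norm{\nabla f(\vx_t) - \nabla f(\vx)}\,\norm{\vx'-\vx}$. The $L$-smoothness assumption then controls the first factor: since $\vx_t - \vx = t(\vx'-\vx)$, we have $\norm{\nabla f(\vx_t) - \nabla f(\vx)} \le L\norm{\vx_t-\vx} = Lt\norm{\vx'-\vx}$. Substituting gives an integrand bounded by $Lt\norm{\vx'-\vx}^2$, and the elementary integration $\int_0^1 Lt\de t = L/2$ delivers exactly $\frac{L}{2}\norm{\vx'-\vx}^2$.

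I do not expect a genuine obstacle, as this is the standard integral-form argument for smooth functions. The only points meriting mild care are justifying the chain-rule/FTC step along the segment (precisely where convexity of $\Xspace$ and differentiability of $f$ enter) and pulling the absolute value inside the integral \emph{before} invoking Cauchy--Schwarz, so that the single-sided Lipschitz bound suffices to establish the inequality for both signs simultaneously.
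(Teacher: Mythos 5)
Your proof is correct and follows essentially the same route as the paper's: parametrize the segment via convexity, apply the fundamental theorem of calculus, rewrite the deviation as $\int_0^1 \langle \nabla f(\vx_t) - \nabla f(\vx), \vx'-\vx\rangle \de t$, and bound it with the triangle inequality, Cauchy--Schwarz, and the Lipschitz condition to obtain the factor $\int_0^1 Lt \de t = L/2$. No meaningful differences to report.
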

This bound is often useful for optimization purposes~\citep{nesterov2013introductory}.

\section{Smooth Policy Gradient}\label{sec:smoothpg}
In this section, we provide lower bounds on performance improvement based on general assumptions on the policy class.

\subsection{Smoothing policies}\label{ssec:smoothpol}
We introduce a family of parametric stochastic policies having properties that we deem desirable for policy-gradient learning.
We call them \emph{smoothing}, as they are characterized by the smoothness of the performance measure:
\begin{restatable}{definition}{smoothpol}\label{def:smooth}
	Let $\Pi_{\Theta}=\{\pi_{\vtheta}\mid \vtheta\in\Theta\}$ be a class of twice-differentiable parametric stochastic policies, where $\Theta\subset\Reals^d$ is convex. We call it \emph{smoothing} if there exist non-negative constants $\sm,\smm,\smmm$ such that, for every state and in expectation over actions, the Euclidean norm of the score function:
	\begin{align}\label{eq:sm1}
	& \sup_{s\in\Sspace}\mathbb{E}_{a\sim\pi_{\vtheta}(\cdot\vert s)} \Big[ \norm{\nabla\log\pi_{\vtheta}(a\vert s)} \Big] \leq \sm, 
	\end{align}
	the squared Euclidean norm of the score function:
	\begin{align}\label{eq:sm2}
	& \sup_{s\in\Sspace}\mathbb{E}_{a\sim\pi_{\vtheta}(\cdot\vert s)} \Big[ \norm{\nabla\log\pi_{\vtheta}(a\vert s)}^2 \Big] \leq \smm, 
	\end{align}
	and the spectral norm of the observed information:
	\begin{align}\label{eq:sm3}
	& \sup_{s\in\Sspace}\mathbb{E}_{a\sim\pi_{\vtheta}(\cdot\vert s)} \Big[\norm{\Hess \log\pi_{\vtheta}(a\vert s)}\Big] \leq \smmm, 
	\end{align}
	are upper-bounded.
\end{restatable}
Note that the definition requires that the bounding constants $\sm,\smm, \smmm$ be independent of the policy parameters and the state. For this reason, the existence of such constants depends on the policy parameterization.\footnote{Notice that, by Jensen's inequality, one can always remove the first requirement~\eqref{eq:sm1} by letting $\sm=\sqrt{\smm}$, as observed by~\cite{yuan2021general}. However, a smaller value of $\sm$ can sometimes be obtained. See Lemma~\ref{lem:gauss} for an example.}
We call a policy class $(\sm,\smm,\smmm)$-smoothing when we want to specify the bounding constants.
In Appendix \ref{sec:polclass}, we show that some of the most commonly used policies, such as the Gaussian policy for continuous actions and the Softmax policy for discrete actions, are smoothing. The smoothing constants for these classes are reported in Table~\ref{tab:smoothing}.
In the following sections, we will exploit the smoothness of the performance measure induced by smoothing policies to develop a monotonically improving policy gradient algorithm. However, smoothing policies have other interesting properties. For instance, variance upper bounds for REINFORCE/G(PO)MDP with Gaussian policies~\citep{zhao2011analysis,pirotta2013adaptive} can be generalized to smoothing policies (see Appendix~\ref{sec:var} for details). Other nice properties of smoothing policies, such as Lipschitzness of the performance measure, are discussed in~\cite[][Lemma D.1]{yuan2021general}.

\begin{table}[t]
	\caption{Smoothing constants $\sm,\smm,\smmm$ and smoothness constant $L$ for Gaussian and Softmax policies, where $\phimax$ is an upper bound on the Euclidean norm of the feature function, $\Rmax$ is the maximum absolute-value reward, $\gamma$ is the discount factor, $\sigma$ is the standard deviation of the Gaussian policy and $\tau$ is the temperature of the Softmax policy. We also report the improved smoothness constant by~\cite{yuan2021general} as $L^\star$.}
	\centering
	\begin{tabular}{lcc}
		\toprule
		& \textbf{Gaussian} & \textbf{Softmax} \\
		\midrule
		$\sm$ & $\frac{2\phimax}{\sqrt{2\pi}\sigma}$ & $\frac{2\phimax}{\tau}$\\
		%\midrule	
		$\smm$ & $\frac{\phimax^2}{\sigma^2}$ & $\frac{4\phimax^2}{\tau^2}$\\
		%\midrule	
		$\smmm$ & $\frac{\phimax^2}{\sigma^2}$ & $\frac{2\phimax^2}{\tau^2}$\\
		%\midrule		
		$L$ & $\frac{2\phimax^2\Rmax}{\sigma^2(1-\gamma)^2}\left(1+\frac{2\gamma}{\pi(1-\gamma)}\right)$ & $\frac{2\phimax^2\Rmax}{\tau^2(1-\gamma)^2}\left(3+\frac{4\gamma}{1-\gamma}\right)$\\
		%\midrule		
		$L^\star$ & $\frac{2\phimax^2\Rmax}{\sigma^2(1-\gamma)^2}$ & $\frac{6\phimax^2\Rmax}{\tau^2(1-\gamma)^2}$\\
		%\midrule
		\bottomrule
	\end{tabular}
	\label{tab:smoothing}
\end{table}

\subsection{Policy Hessian}\label{ssec:polhess}
We now show that the Hessian of the performance measure $\Hess J(\vtheta)$ for a smoothing policy has bounded spectral norm.
We start by writing the policy Hessian for a general parametric policy as follows. The result is well known~\citep{kakade2001optimizing}, but we report a proof in Appendix~\ref{app:pol} for completeness. Also, note that our smoothing-policy assumption is weaker than the typical one (uniformly bounded policy derivatives). See Appendix~\ref{sec:leibniz} for details.
\begin{restatable}{proposition}{polhess}\label{lem:polhess}
	Let $\pi_{\vtheta}$ be a smoothing policy. The Hessian of the performance measure is:
	\begin{align*}
	\Hess J(\vtheta) 
	&= \frac{1}{1-\gamma}\EV_{\substack{s\sim \Stat^{\vtheta}\\a\sim\pi_{\vtheta}(\cdot\vert s)}}
	\Big[ 
	\nabla\log\pi_{\vtheta}(a\vert s)\nabla^{\top} Q^{\vtheta}(s,a)
	+\nabla Q^{\vtheta}(s,a)\nabla^{\top}\log\pi_{\vtheta}(a\vert s)\\
	&\qquad+\left(\nabla\log\pi_{\vtheta}(a\vert s)\nabla^{\top}\log\pi_{\vtheta}(a\vert s)
	+\Hess \log\pi_{\vtheta}(a\vert s)  \right)Q^{\vtheta}(s,a) \Big].
	\end{align*}
\end{restatable}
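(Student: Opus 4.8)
The plan is to derive the policy Hessian by differentiating the Policy Gradient Theorem expression twice, but the cleaner route is to start from the gradient written in the ``log-trick'' form and apply the product rule carefully. Recall from the Policy Gradient Theorem that
\begin{align*}
\nabla J(\vtheta) = \EV_{\substack{s\sim\Stat_{\vtheta}\\a\sim\pi_{\vtheta}(\cdot|s)}}\Big[\nabla\log\pi_{\vtheta}(a|s)\,Q_{\vtheta}(s,a)\Big].
\end{align*}
The naive approach of differentiating this directly is treacherous because the state-occupancy measure $\Stat_{\vtheta}$ itself depends on $\vtheta$, so $\nabla$ would have to act on $\de\Stat_{\vtheta}(s)$ as well. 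The idea that avoids this is to work at the level of the state-value function and exploit the recursive structure, using Lemma~\ref{lem:rec} to ``re-summarize'' the recursion into an expectation over $\Stat_{\vtheta}$ only at the very end.

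First I would fix a state $s$ and write the gradient of the state-value function $\nabla V_{\vtheta}(s)$ using the standard recursive identity obtained from the Bellman equation~\eqref{def:v}: differentiating $V_{\vtheta}(s)=\int_a\de\pi_{\vtheta}(a|s)Q_{\vtheta}(s,a)$ and using the log-trick on the $\nabla\pi_{\vtheta}$ term yields a recursion of the form $\nabla V_{\vtheta}(s)=g(s)+\gamma\int_{s'}\de p_{\vtheta}(s'|s)\nabla V_{\vtheta}(s')$ for an appropriate vector-valued $g$. Next I would differentiate this expression a second time with respect to $\vtheta$. The second differentiation produces four groups of terms: the derivative hitting the score function gives $\Hess\log\pi_{\vtheta}(a|s)\,Q_{\vtheta}$; the derivative hitting $Q_{\vtheta}$ inside the log-trick term gives the two cross terms $\nabla\log\pi_{\vtheta}\nabla^T Q_{\vtheta}$ and its transpose; the expansion of $\nabla\pi_{\vtheta}=\pi_{\vtheta}\nabla\log\pi_{\vtheta}$ differentiated once more contributes the outer-product term $\nabla\log\pi_{\vtheta}\nabla^T\log\pi_{\vtheta}\,Q_{\vtheta}$; and the remaining piece is again a discounted recursion in $\Hess V_{\vtheta}(s')$.

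The main obstacle — and the step demanding the most care — is organizing this second-order product rule so that the residual term is exactly $\gamma\int_{s'}\de p_{\vtheta}(s'|s)\Hess V_{\vtheta}(s')$, matching the hypothesis of Lemma~\ref{lem:rec} with the instantaneous term $g(s)$ equal to the integrand appearing in the claimed Hessian. Once the recursion is in the form $\Hess V_{\vtheta}(s)=g(s)+\gamma\int_{s'}\de p_{\vtheta}(s'|s)\Hess V_{\vtheta}(s')$, I would invoke Lemma~\ref{lem:rec} to conclude $\Hess V_{\vtheta}(s)=\int_{s'}\de\Stat_{\vtheta}(s'|s)\,g(s')$. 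Finally, I would use $J(\vtheta)=\int_s\de\Init(s)V_{\vtheta}(s)$ from~\eqref{eq:defj}, differentiate twice under the integral, and substitute the expression for $\Hess V_{\vtheta}$, folding $\int_s\de\Init(s)\int_{s'}\de\Stat_{\vtheta}(s'|s)$ into the occupancy measure $\Stat_{\vtheta}$ defined in~\eqref{def:dmupi} to obtain the stated expectation over $s\sim\Stat_{\vtheta}$, $a\sim\pi_{\vtheta}(\cdot|s)$. The bookkeeping of which $Q$-gradient terms are full gradients $\nabla Q_{\vtheta}$ versus score-weighted contributions is where a sign or a transpose can easily be dropped, so I would verify the symmetry of the resulting matrix as a consistency check.
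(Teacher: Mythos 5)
Your proposal is correct and follows essentially the same route as the paper: differentiate $V_{\vtheta}(s)=\int_a\de\pi_{\vtheta}(a|s)Q_{\vtheta}(s,a)$ twice using the log trick, organize the result as a recursion $\Hess V_{\vtheta}(s)=g(s)+\gamma\int_{s'}\de p_{\vtheta}(s'|s)\Hess V_{\vtheta}(s')$ via the Bellman relation~\eqref{eq:qtov}, unroll it with Lemma~\ref{lem:rec}, and integrate against $\Init$ using~\eqref{eq:defj} and~\eqref{def:dmupi}. The only cosmetic difference is bookkeeping: in the paper the two cross terms arise from keeping $\nabla^T Q_{\vtheta}$ explicit and converting $\Hess Q_{\vtheta}$ at the end, whereas in your organization one of them surfaces through $\nabla p_{\vtheta}(s'|s)$ — both yield the same integrand.
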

For smoothing policies, we can bound the policy Hessian in terms of the constants from Definition~\ref{def:smooth}:
\begin{restatable}[]{lemma}{polhessbound}\label{lem:polhessbound}
	Given a $(\sm,\smm,\smmm)$-smoothing policy $\pi_{\vtheta}$, the spectral norm of the policy Hessian can be upper-bounded as follows:
	\begin{align*}
	\norm{\Hess J(\vtheta)} \leq \frac{\Rmax}{(1-\gamma)^2}\left(
	\frac{2\gamma\sm^2}{1-\gamma} + \smm + \smmm
	\right).
	\end{align*}
\end{restatable}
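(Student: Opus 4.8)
The plan is to start from the closed-form policy Hessian of Lemma~\ref{lem:polhess}, pass to spectral norms, and bound its three groups of terms one at a time. Moving the norm inside the state--action expectation (the spectral norm is convex, so this is just the triangle inequality for integrals), it suffices to control the spectral norm of the matrix-valued integrand uniformly in $(s,a)$. The two mixed terms $\nabla\log\pi_{\vtheta}(a|s)\nabla^T Q_{\vtheta}(s,a)$ and $\nabla Q_{\vtheta}(s,a)\nabla^T\log\pi_{\vtheta}(a|s)$ are rank-one outer products, so each has spectral norm exactly $\norm{\nabla\log\pi_{\vtheta}(a|s)}\,\norm{\nabla Q_{\vtheta}(s,a)}$; likewise $\norm{\nabla\log\pi_{\vtheta}(a|s)\nabla^T\log\pi_{\vtheta}(a|s)}=\norm{\nabla\log\pi_{\vtheta}(a|s)}^2$, while the observed-information term contributes $\norm{\Hess\log\pi_{\vtheta}(a|s)}\,|Q_{\vtheta}(s,a)|$. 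The value factor is immediately handled by $|Q_{\vtheta}(s,a)|\leq\Rmax/(1-\gamma)$ from~(\ref{eq:boundv}), and, once I take the expectation over $a\sim\pi_{\vtheta}(\cdot|s)$, the three score quantities are precisely the ones controlled by the smoothing constants of Definition~\ref{def:smooth}: $\EVV[a]{\norm{\nabla\log\pi_{\vtheta}(a|s)}}\leq\sm$, $\EVV[a]{\norm{\nabla\log\pi_{\vtheta}(a|s)}^2}\leq\smm$, and $\EVV[a]{\norm{\Hess\log\pi_{\vtheta}(a|s)}}\leq\smmm$.

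The one quantity not directly supplied by the hypotheses is $\norm{\nabla Q_{\vtheta}(s,a)}$, and bounding it is the crux of the argument. I would obtain it from a uniform bound on the state-value gradient $\nabla V_{\vtheta}$. Differentiating $V_{\vtheta}(s)=\int_a\de\pi_{\vtheta}(a|s)Q_{\vtheta}(s,a)$ with the log-trick and substituting~(\ref{eq:qtov}) gives the recursion
\begin{align*}
\nabla V_{\vtheta}(s) = \int_a\de\pi_{\vtheta}(a|s)\nabla\log\pi_{\vtheta}(a|s)Q_{\vtheta}(s,a) + \gamma\int_{s'}\de p_{\vtheta}(s'|s)\nabla V_{\vtheta}(s'),
\end{align*}
which has exactly the form handled by Lemma~\ref{lem:rec} (applied componentwise, with $g(s)=\int_a\de\pi_{\vtheta}(a|s)\nabla\log\pi_{\vtheta}(a|s)Q_{\vtheta}(s,a)$). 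Hence $\nabla V_{\vtheta}(s)=\int_{s'}\de\Stat_{\vtheta}(s'|s)\int_a\de\pi_{\vtheta}(a|s')\nabla\log\pi_{\vtheta}(a|s')Q_{\vtheta}(s',a)$, and bounding with $|Q_{\vtheta}|\leq\Rmax/(1-\gamma)$, $\EVV[a]{\norm{\nabla\log\pi_{\vtheta}}}\leq\sm$, and the occupancy mass $\int_{s'}\de\Stat_{\vtheta}(s'|s)=1/(1-\gamma)$ from~(\ref{eq:dmunorm}) yields $\norm[\infty]{\nabla V_{\vtheta}}\leq\sm\Rmax/(1-\gamma)^2$. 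Since $\nabla Q_{\vtheta}(s,a)=\gamma\int_{s'}\de p(s'|s,a)\nabla V_{\vtheta}(s')$ and $p(\cdot|s,a)$ integrates to one, this propagates to the uniform bound $\norm{\nabla Q_{\vtheta}(s,a)}\leq\gamma\sm\Rmax/(1-\gamma)^2$.

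It then remains to assemble. Taking the action expectation of the integrand and inserting the four uniform bounds, the mixed terms contribute $2\sm\cdot\gamma\sm\Rmax/(1-\gamma)^2$, the squared-score term $\smm\Rmax/(1-\gamma)$, and the observed-information term $\smmm\Rmax/(1-\gamma)$; summing and factoring out $\Rmax/(1-\gamma)$ gives exactly $\tfrac{\Rmax}{1-\gamma}\big(\tfrac{2\gamma\sm^2}{1-\gamma}+\smm+\smmm\big)$ uniformly in $s$, which carries over to $\norm{\Hess J(\vtheta)}$ through the outer averaging over states. I expect the bound on $\nabla Q_{\vtheta}$ to be the only real obstacle: it is the single place where one must leave the pointwise smoothing hypotheses and re-enter the recursive structure of the value functions, and the delicate part there is the bookkeeping of the $1/(1-\gamma)$ factors --- in particular recognising, via the discounted-occupancy normalisation~(\ref{eq:dmunorm}), that $\nabla V_{\vtheta}$ scales as $1/(1-\gamma)^2$ rather than $1/(1-\gamma)$.
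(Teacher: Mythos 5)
Your route is essentially the paper's own: the same decomposition of the Hessian from Lemma~\ref{lem:polhess} into two rank-one mixed terms, a squared-score term, and an observed-information term; the same intermediate bounds $\norm{\nabla V_{\vtheta}(s)}\leq\sm\Rmax/(1-\gamma)^2$ and $\norm{\nabla Q_{\vtheta}(s,a)}\leq\gamma\sm\Rmax/(1-\gamma)^2$ obtained through Lemma~\ref{lem:rec}; and the same use of the smoothing constants after taking the action expectation. Everything up to and including the per-state bound $\EV_{a\sim\pi_{\vtheta}(\cdot|s)}\bigl[\norm{\cdot}\bigr]\leq\frac{\Rmax}{1-\gamma}\bigl(\frac{2\gamma\sm^2}{1-\gamma}+\smm+\smmm\bigr)$ is correct.

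The gap is in your final sentence. The outer ``averaging over states'' in Lemma~\ref{lem:polhess} is against the discounted occupancy $\Stat_{\vtheta}$, which by~(\ref{eq:dmunorm}) is \emph{not} a probability measure: its total mass is $1/(1-\gamma)$. A bound that is uniform in $s$ therefore does not simply ``carry over''; it gets multiplied by that mass, and what actually follows from your estimates is $\norm{\Hess J(\vtheta)}\leq\frac{\Rmax}{(1-\gamma)^2}\bigl(\frac{2\gamma\sm^2}{1-\gamma}+\smm+\smmm\bigr)$. This is exactly what the paper's own proof concludes, and it is the constant used as $L$ in Lemma~\ref{lem:smoothj}, Theorem~\ref{th:main} and everywhere downstream; the leading factor $\frac{\Rmax}{1-\gamma}$ in the printed statement of the lemma appears to be a typo rather than the bound actually established. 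Note that you handled precisely this normalisation correctly one level down, when you invoked~(\ref{eq:dmunorm}) to get the $1/(1-\gamma)^2$ scaling of $\nabla V_{\vtheta}$ --- the same factor must be paid again at the outermost integration.
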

\begin{proof}
	By the Policy Gradient Theorem~\citep[see the proof of Theorem 1 by][]{sutton2000policy}:
	\begin{align}
	\nabla V^{\vtheta}(s)
	&=  \frac{1}{1-\gamma}\int_{\Sspace}\Stat_{s}^{\vtheta}(s')\int_{\Aspace}\pi_{\vtheta}(a\vert s')\nabla\log\pi_{\vtheta}(a\vert s')Q^{\vtheta}(s,a) \de a\de s'.\label{eq:gradv}
	\end{align}
	Using~\eqref{eq:gradv}, we bound the gradient of the value function in Euclidean norm:
	\begin{align}
	\norm{\nabla V^{\vtheta}(s)} 
	&\leq  
	\frac{1}{1-\gamma}\EVV[\substack{s'\sim \Stat_{s}^{\vtheta}\\a\sim\pi_{\vtheta}(\cdot\vert s')}]{\norm{\nabla\log\pi_{\vtheta}(a\vert s') Q^{\vtheta}(s',a)}} \nonumber\\
	&\leq \frac{\Rmax}{(1-\gamma)^2}\EVV[\substack{s'\sim \Stat^{\vtheta}_s\\a\sim\pi_{\vtheta}(\cdot\vert s')}]{\norm{\nabla\log\pi_{\vtheta}(a\vert s')}}\label{p:hessbound.13}\\
	&\leq \frac{\Rmax}{(1-\gamma)^2}\sup_{s'\in\Sspace}\EVV[a\sim\pi_{\vtheta}(\cdot\vert s')]{\norm{\nabla\log\pi_{\vtheta}(a\vert s')}}\nonumber\\
	&\leq \frac{\sm\Rmax}{(1-\gamma)^2},\label{p:hessbound.17}
	\end{align}
	where (\ref{p:hessbound.13}) is from the Cauchy-Schwarz inequality and (\ref{eq:boundv}), and (\ref{p:hessbound.17}) is from the smoothing-policy assumption. Next, we bound the gradient of the action-value function. From~\eqref{eq:qtov}:
	\begin{align}
	\norm{\nabla Q^{\vtheta}(s,a)} 
	&=\norm{\nabla \left(\Rew(s,a)+\gamma\EVV[s'\sim p(\cdot\vert s,a)]{V^{\vtheta}(s')}\right)} \\
	&=\gamma\norm{\EVV[s'\sim p(\cdot\vert s,a)]{\nabla V^{\vtheta}(s')}}\label{eq:xchange} \\
	&\le \gamma \EVV[s'\sim p(\cdot\vert s,a)]{\norm{\nabla V^{\vtheta}(s)}}
	\leq \frac{\gamma\sm\Rmax}{(1-\gamma)^2},
	\label{p:hessbound.4}
	\end{align}
	where the interchange of gradient and expectation in~\eqref{eq:xchange} is justified by the smoothing-policy assumption (see Appendix~\ref{sec:leibniz} for details) and (\ref{p:hessbound.4}) is from (\ref{p:hessbound.17}).
	Finally, from Proposition \ref{lem:polhess}:
	\begin{align}
	(1-\gamma)\norm{\Hess J(\vtheta)} 
	&\leq 
	\EVV[\substack{s\sim \Stat^{\vtheta}\\a\sim\pi_{\vtheta}(\cdot\vert s)}]{
		\norm{\nabla\log\pi_{\vtheta}(a\vert s)\nabla^{\top} Q^{\vtheta}(s,a)}}
	\nonumber\\&\qquad
	+\EVV[\substack{s\sim \Stat^{\vtheta}\\a\sim\pi_{\vtheta}(\cdot\vert s)}]{\norm{\nabla Q^{\vtheta}(s,a)\nabla^{\top}\log\pi_{\vtheta}(a\vert s)}}\nonumber\\
	&\qquad+\EVV[\substack{s\sim \Stat^{\vtheta}\\a\sim\pi_{\vtheta}(\cdot\vert s)}]{\norm{\nabla\log\pi_{\vtheta}(a\vert s)\nabla^{\top}\log\pi_{\vtheta}(a\vert s)
			Q^{\vtheta}(s,a)}} \nonumber\\
	&\qquad+\EVV[\substack{s\sim \Stat^{\vtheta}\\a\sim\pi_{\vtheta}(\cdot\vert s)}]{\norm{\Hess \log\pi_{\vtheta}(a\vert s)Q^{\vtheta}(s,a)}} \label{p:hessbound.14}\\
	&\leq
	2\EVV[\substack{s\sim \Stat^{\vtheta}\\a\sim\pi_{\vtheta}(\cdot\vert s)}]{
		\norm{\nabla\log\pi_{\vtheta}(a\vert s)}\norm{\nabla Q^{\vtheta}(s,a)}}
	\nonumber\\&\qquad
	+\EVV[\substack{s\sim \Stat^{\vtheta}\\a\sim\pi_{\vtheta}(\cdot\vert s)}]{\norm{\nabla\log\pi_{\vtheta}(a\vert s)}^2
		\left\vert Q^{\vtheta}(s,a)\right\vert } \nonumber\\
	&\qquad+\EVV[\substack{s\sim \Stat^{\vtheta}\\a\sim\pi_{\vtheta}(\cdot\vert s)}]{\norm{\Hess \log\pi_{\vtheta}(a\vert s)}\left\vert Q^{\vtheta}(s,a)\right\vert } \label{p:hessbound.15}\\
	&\leq
	\frac{2\gamma\sm\Rmax}{(1-\gamma)^2}\EVV[\substack{s\sim \Stat^{\vtheta}\\a\sim\pi_{\vtheta}(\cdot\vert s)}]{
		\norm{\nabla\log\pi_{\vtheta}(a\vert s)}}
	\nonumber\\&\qquad
	+\frac{\Rmax}{1-\gamma}\EVV[\substack{s\sim \Stat^{\vtheta}\\a\sim\pi_{\vtheta}(\cdot\vert s)}]{\norm{\nabla\log\pi_{\vtheta}(a\vert s)}^2} \nonumber\\
	&\qquad+\frac{\Rmax}{1-\gamma}\EVV[\substack{s\sim \Stat^{\vtheta}\\a\sim\pi_{\vtheta}(\cdot\vert s)}]{\norm{\Hess \log\pi_{\vtheta}(a\vert s)}} \label{p:hessbound.16}\\
	&\leq \frac{\Rmax}{(1-\gamma)}\left(
	\frac{2\gamma\sm^2}{1-\gamma} + \smm + \smmm
	\right),
	\end{align}
	where (\ref{p:hessbound.14}) is from Jensen inequality (all norms are convex) and the triangle inequality, (\ref{p:hessbound.15}) is from $\norm{\vx\vy^{\top}} = \norm{\vx}\norm{\vy}$ for any two vectors $\vx$ and $\vy$, (\ref{p:hessbound.16}) is from (\ref{eq:boundv}) and (\ref{p:hessbound.4}), and the last inequality is from the smoothing-policy assumption.
\end{proof}

\subsection{Smooth Performance}\label{ssec:perfbound}
For a smoothing policy, the performance measure $J(\vtheta)$ is Lipschitz smooth with a smoothness constant that only depends on the smoothing constants, the reward magnitude, and the discount factor. This result is of independent interest as it can be used to establish convergence rates for policy gradient algorithms~\citep{yuan2021general}.
\begin{restatable}{lemma}{smoothj}\label{lem:smoothj}
	Given a $(\sm,\smm,\smmm)$-smoothing policy class $\Pi_{\Theta}$, the performance measure $J(\vtheta)$ is $L$-smooth with the following smoothness constant:
	\begin{align}
	L = \frac{\Rmax}{(1-\gamma)^2}\left(
	\frac{2\gamma\sm^2}{1-\gamma} + \smm + \smmm
	\right).
	\end{align} 
\end{restatable}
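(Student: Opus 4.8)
The plan is to obtain $L$-smoothness of $J$ directly from a uniform spectral-norm bound on its Hessian, letting the two preceding lemmas carry the argument. Concretely, I would invoke Lemma~\ref{lem:mvi}, which states that a twice-differentiable function on a convex domain whose Hessian is uniformly bounded in spectral norm by $L$ is $L$-smooth; the task then reduces to verifying its three hypotheses for $f = J$ on $\Theta$.

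First I would check the regularity and domain conditions. Twice-differentiability of $J$ follows from the standing assumption that $\pi_{\vtheta}$ is twice-differentiable in $\vtheta$: the value functions inherit this regularity through their Bellman recursions, and $J(\vtheta) = \int_{s}\de\Init(s)V_{\vtheta}(s)$ is merely their integral against the parameter-free initial distribution. Convexity of $\Theta$ is built into the definition of a smoothing policy class (Definition~\ref{def:smooth}), so the domain hypothesis of Lemma~\ref{lem:mvi} holds as well. The substantive input is Lemma~\ref{lem:polhessbound}, whose proof bounds $\norm{\Hess J(\vtheta)}$ uniformly in $\vtheta$ by exactly $\frac{\Rmax}{(1-\gamma)^2}\left(\frac{2\gamma\sm^2}{1-\gamma} + \smm + \smmm\right)$, \ie precisely the constant $L$ asserted here. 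Since this holds for every $\vtheta \in \Theta$, we have $\sup_{\vtheta\in\Theta}\norm[2]{\Hess J(\vtheta)} \leq L$, and feeding this into Lemma~\ref{lem:mvi} yields that $\nabla J$ is $L$-Lipschitz, which is the claim.

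The one point I would take care to confirm is that the power of $(1-\gamma)$ appearing in the Hessian bound agrees with the one in the target constant $L$ — this is really the entire content of the reduction. The factor $1/(1-\gamma)^2$ arises because the smoothing bounds of Definition~\ref{def:smooth} are stated in expectation over actions at a fixed state, so that integrating them against the \emph{unnormalized} occupancy measure $\Stat_{\vtheta}$ (which integrates to $1/(1-\gamma)$) contributes one extra factor of $1/(1-\gamma)$ on top of those already carried by the bounds on $Q_{\vtheta}$ and $\nabla Q_{\vtheta}$. Once the exponents are confirmed to coincide, no further computation is needed and the lemma follows immediately from the composition of Lemmas~\ref{lem:polhessbound} and~\ref{lem:mvi}.
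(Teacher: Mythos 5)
Your proof is correct and follows the same route as the paper: Lemma~\ref{lem:polhessbound} supplies the uniform spectral-norm bound on $\Hess J(\vtheta)$ over the convex set $\Theta$, and Lemma~\ref{lem:mvi} converts it into $L$-smoothness. Your care over the exponent of $(1-\gamma)$ is warranted --- the \emph{statement} of Lemma~\ref{lem:polhessbound} displays a $\Rmax/(1-\gamma)$ prefactor while its proof actually derives $\Rmax/(1-\gamma)^2$ --- and the constant you use is the one established in that proof, which is exactly the $L$ asserted here.
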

\begin{proof}
	From Lemma \ref{lem:polhessbound}, $L$ is a bound on the spectral norm of the policy Hessian. From Lemma \ref{lem:mvi}, this is a valid Lipschitz constant for the policy gradient, hence the performance measure is $L$-smooth.
\end{proof}
The smoothness of the performance measure, in turn, yields the following property on the guaranteed performance improvement: 
\begin{restatable}[]{theorem}{mainth}\label{th:main}
	Let $\Pi_{\Theta}$ be a $(\sm,\smm,\smmm)$-smoothing policy class. For every $\vtheta, \vtheta'\in\Theta$:
	\begin{align*}
	J(\vtheta') - J(\vtheta)  \geq \left\langle\Delta\vtheta,\nabla J(\vtheta)\right\rangle - \frac{L}{2}\norm{\Delta\vtheta}^2,
	\end{align*}
	where $\Delta\vtheta = \vtheta'-\vtheta$ and $L = \frac{\Rmax}{(1-\gamma)^2}\left(
	\frac{2\gamma\sm^2}{1-\gamma} + \smm + \smmm
	\right)$.
\end{restatable}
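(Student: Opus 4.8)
The plan is to treat this as essentially an immediate corollary of the smoothness already established, rather than re-deriving anything from the policy-gradient machinery. The key observation is that Lemma~\ref{lem:smoothj} already certifies that $J$ is $L$-smooth with precisely the constant $L = \frac{\Rmax}{(1-\gamma)^2}\left(\frac{2\gamma\sm^2}{1-\gamma} + \smm + \smmm\right)$ appearing in the statement. Moreover, the smoothing assumption (Definition~\ref{def:smooth}) requires $\Theta$ to be convex, so all hypotheses needed to invoke the general quadratic bound on $\Theta$ are in place.

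First I would apply Lemma~\ref{lem:smooth} (the Quadratic Bound) to the function $f = J$ on the convex domain $\Theta$, with $\vx = \vtheta$ and $\vx' = \vtheta'$. This yields the two-sided estimate
\begin{align*}
    \left|J(\vtheta') - J(\vtheta) - \left\langle \Delta\vtheta, \nabla J(\vtheta)\right\rangle\right| \leq \frac{L}{2}\norm{\Delta\vtheta}^2,
\end{align*}
where $\Delta\vtheta = \vtheta' - \vtheta$. Next I would discard the upper half of the absolute-value bound and keep only the lower side, namely $J(\vtheta') - J(\vtheta) - \left\langle\Delta\vtheta, \nabla J(\vtheta)\right\rangle \geq -\frac{L}{2}\norm{\Delta\vtheta}^2$, and rearrange to obtain exactly the claimed inequality.

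There is no real obstacle remaining at this stage: the entire content of the theorem is carried by the chain Lemma~\ref{lem:polhessbound} $\to$ Lemma~\ref{lem:mvi} $\to$ Lemma~\ref{lem:smoothj}, which bounds the spectral norm of the policy Hessian in terms of the smoothing constants $\sm,\smm,\smmm$ and then converts that uniform bound into a Lipschitz constant for $\nabla J$. The step that does the conceptual work — and where any subtlety would hide — is therefore already behind us, namely the uniform bound on $\norm{\Hess J(\vtheta)}$ derived via the Policy Hessian identity of Lemma~\ref{lem:polhess}. Once that bound holds uniformly over the convex set $\Theta$, the passage to $L$-smoothness and then to the quadratic bound is purely mechanical, and the desired lower bound on the performance improvement follows simply by retaining the one-sided consequence of the two-sided quadratic estimate.
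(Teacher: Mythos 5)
Your proposal is correct and matches the paper's own proof exactly: Theorem~\ref{th:main} is obtained by applying the Quadratic Bound (Lemma~\ref{lem:smooth}) to $J$ with the Lipschitz constant from Lemma~\ref{lem:smoothj}, and keeping the lower side of the two-sided estimate. Your additional remarks about convexity of $\Theta$ and the chain through Lemmas~\ref{lem:polhessbound} and~\ref{lem:mvi} are accurate but simply spell out what the paper leaves implicit.
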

\begin{proof}
	It suffices to apply Lemma \ref{lem:smooth} with the Lipschitz constant from Lemma \ref{lem:smoothj}.
\end{proof}
The smoothness constant $L$ for Gaussian and Softmax policies is reported in Table~\ref{tab:smoothing}.

In the following, we will exploit this property of smoothing policies to enforce safety guarantees on the policy updates performed by Algorithm~\ref{alg:pg}, \ie stochastic gradient ascent updates. However, Theorem~\ref{th:main} applies to any policy update $\Delta\vtheta\in\Reals^d$ as long as $\vtheta+\Delta\vtheta\in\Theta$.

Very recently,~\citet[][Lemma 4.4]{yuan2021general} provided an improved smoothness constant for smoothing policies:
\begin{equation}
	L^\star = \frac{\Rmax(\smm+\smmm)}{(1-\gamma)^2}.
\end{equation}
This is a significant step forward since it improves the dependence on the effective horizon by a $(1-\gamma)^{-1}$ factor. In Table~\ref{tab:smoothing} we report explicit expressions for $L^\star$ in the case of linear Gaussian and Softmax policies. We will use these superior smoothness constant in the numerical simulations of Section~\ref{sec:exp}.

\section{Optimal Safe Meta-Parameters}\label{sec:metaparams}
In this section, we provide  a step size for Algorithm~\ref{alg:pg} that maximizes a lower bound on the performance improvement for smoothing policies. This yields safety in the sense of Monotonic Improvement (MI), \ie non-negative performance improvements at each policy update:
\begin{equation}
	J(\vtheta_{k}) - J(\vtheta_{k+1}) \ge 0,
\end{equation}
at least with high probability.

In policy optimization, at each learning iteration $k$, we ideally want to find the policy update $\Delta\vtheta$ that maximizes the new performance $J(\vtheta_{k}+\Delta\vtheta)$, or equivalently:
\begin{align}\label{eq:maxdiff}
\max_{\Delta\vtheta} J(\vtheta_{k}+\Delta\vtheta) - J(\vtheta_k),
\end{align}
since $J(\vtheta_{k})$ is fixed. Unfortunately, the performance of the updated policy cannot be known in advance.\footnote{The performance of the updated policy could be estimated with off-policy evaluation techniques, but this would introduce an additional, non-negligible source of variance. The idea of using off-policy evaluation to select meta-parameters was explored by~\citet{paul2019fast}.} For this reason, we replace the optimization objective in (\ref{eq:maxdiff}) with a lower bound, \ie a \textit{guaranteed improvement}. 
In particular, taking Algorithm~\ref{alg:pg} as our starting point, we maximize the guaranteed improvement of a policy gradient update (line 5) by selecting optimal meta-parameters.
The solution of this meta-optimization problem provides a lower bound on the actual performance improvement. As long as this is always non-negative, MI is guaranteed.

\subsection{Adaptive Step Size -- Exact Framework}
To decouple the pure optimization aspects of this problem from gradient estimation issues, we first consider an \textit{exact} policy gradient update, \ie $\vtheta_{k+1}\gets\vtheta_k + \alpha\nabla J(\vtheta_k)$, where we assume to have a first-order oracle, \ie to be able to compute the exact policy gradient $\nabla J(\vtheta_k)$. This assumption is clearly not realistic, and will be removed in Section \ref{sec:adastepapp}. In this simplified framework, performance improvement can be guaranteed \textit{deterministically}. Furthermore, the only relevant meta-parameter is the step size $\alpha$ of the update.  
We first need a lower bound on the performance improvement $J(\vtheta_{k+1}) - J(\vtheta_k)$.
For a smoothing policy, we can use the following:
\begin{restatable}[]{theorem}{ltwobound}\label{th:ltwobound}
	Let $\Pi_{\Theta}$ be a $(\sm,\smm,\smmm)$-smoothing policy class. Let $\vtheta_k\in\Theta$ and $\vtheta_{k+1} = \vtheta_k + \alpha\nabla J(\vtheta_k)$, where $\alpha>0$. Provided $\vtheta_{k+1}\in\Theta$, the performance improvement of $\vtheta_{k+1}$ \wrt $\vtheta_k$ can be lower bounded as follows:
	\begin{align*}
	J(\vtheta_{k+1}) - J(\vtheta_k) \geq \alpha\norm[]{\nabla J(\vtheta_k)}^2
	- \alpha^2\frac{L}{2}\norm[]{\nabla J(\vtheta_k)}^2 \coloneqq B(\alpha{;}\vtheta_k), 
	\end{align*}
	where $L = \frac{\Rmax}{(1-\gamma)^2}\left(
	\frac{2\gamma\sm^2}{1-\gamma} + \smm + \smmm
	\right)$.
\end{restatable}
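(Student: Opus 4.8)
The plan is to instantiate Theorem~\ref{th:main} at the specific displacement produced by an exact gradient step, so that the desired bound falls out by substitution. Concretely, I would set $\vtheta = \vtheta_k$ and $\vtheta' = \vtheta_{k+1} = \vtheta_k + \alpha\nabla J(\vtheta_k)$, which makes the displacement $\Delta\vtheta = \vtheta_{k+1} - \vtheta_k = \alpha\nabla J(\vtheta_k)$. The hypothesis $\vtheta_{k+1}\in\Theta$ (together with $\vtheta_k\in\Theta$ and the convexity of $\Theta$ built into the smoothing assumption) ensures both endpoints lie in the parameter space, which is exactly the precondition under which the quadratic lower bound of Theorem~\ref{th:main} is valid. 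That theorem, in turn, rests on the $L$-smoothness of $J$ established in Lemma~\ref{lem:smoothj}, so the Lipschitz constant $L$ appearing in the statement is inherited directly and needs no further work.

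From there the argument is a direct computation. The first-order term specializes to $\left\langle\Delta\vtheta,\nabla J(\vtheta_k)\right\rangle = \alpha\left\langle\nabla J(\vtheta_k),\nabla J(\vtheta_k)\right\rangle = \alpha\norm{\nabla J(\vtheta_k)}^2$, using linearity of the dot product in its first argument and the definition of the Euclidean norm. The quadratic penalty specializes to $\frac{L}{2}\norm{\Delta\vtheta}^2 = \frac{L}{2}\norm{\alpha\nabla J(\vtheta_k)}^2 = \alpha^2\frac{L}{2}\norm{\nabla J(\vtheta_k)}^2$, by positive homogeneity of the norm (here $\alpha>0$). Subtracting the second from the first recovers precisely $B(\alpha) = \alpha\norm{\nabla J(\vtheta_k)}^2 - \alpha^2\frac{L}{2}\norm{\nabla J(\vtheta_k)}^2$, as claimed.

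I do not expect a genuine obstacle: all the analytic content was already discharged in proving smoothness of $J$ and the quadratic bound, and we are in the exact-gradient (oracle) regime, so no estimation error or concentration argument enters. The only point worth flagging is a bookkeeping one, namely to explicitly invoke the membership $\vtheta_{k+1}\in\Theta$ so that the smoothness bound is legitimately applied along the segment joining $\vtheta_k$ and $\vtheta_{k+1}$; this is what licenses the clean substitution above and is the sole hypothesis that must be checked before the calculation.
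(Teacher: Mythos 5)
Your proposal is correct and matches the paper's proof exactly: the paper likewise obtains this as a special case of Theorem~\ref{th:main} with $\Delta\vtheta=\alpha\nabla J(\vtheta_k)$, and your explicit substitution of the first-order and quadratic terms is just the spelled-out version of that one-line argument. No gaps.
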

\begin{proof}
	This is a direct consequence of Theorem~\ref{th:main} with $\Delta\vtheta=\alpha\nabla J(\vtheta_k)$.
\end{proof}
This bound is in the typical form of performance improvement bounds~\citep[\eg][]{kakade2002approximately,pirotta2013adaptive,schulman2015trust,cohen2018diverse}: a positive term accounting for the anticipated advantage of $\vtheta_{k+1}$ over $\vtheta_k$, and a penalty term accounting for the mismatch between the two policies, which makes the anticipated advantage less reliable. In our case, the mismatch is measured by the curvature of the performance measure \wrt the policy parameters, via the smoothness constant $L$. 
This lower bound is quadratic in $\alpha$, hence we can easily find the optimal step size~$\alpha^*$.
\begin{restatable}[]{corollary}{safestep}\label{cor:safestep}
	Let $B(\alpha{;}\vtheta_k)$ be the guaranteed performance improvement of an exact policy gradient update, as defined in Theorem~\ref{th:ltwobound}.
	Under the same assumptions, $B(\alpha{;}\vtheta_k)$ is maximized by the constant step size $\alpha^*=\frac{1}{L}$, which guarantees the following non-negative performance improvement:
	\begin{align*}
	J(\vtheta_{k+1}) - J(\vtheta_k) \geq \frac{\norm{\nabla J(\vtheta_k)}^2}{2L}.
	\end{align*}
\end{restatable}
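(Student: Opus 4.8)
The plan is to optimize the quadratic function $B(\alpha) = \alpha\norm{\nabla J(\vtheta_k)}^2 - \alpha^2\frac{L}{2}\norm{\nabla J(\vtheta_k)}^2$ over the step size $\alpha$. This is an elementary single-variable calculus exercise: since $B$ is a concave parabola in $\alpha$ (the coefficient of $\alpha^2$, namely $-\frac{L}{2}\norm{\nabla J(\vtheta_k)}^2$, is non-positive), its unique maximizer is found by setting the derivative to zero.

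Concretely, I would compute
\begin{align*}
	\frac{\de B}{\de\alpha} = \norm{\nabla J(\vtheta_k)}^2 - \alpha L\norm{\nabla J(\vtheta_k)}^2,
\end{align*}
and set it equal to zero. Assuming $\nabla J(\vtheta_k)\neq 0$ (otherwise the bound is trivially zero and any step is safe), we may divide through by $\norm{\nabla J(\vtheta_k)}^2$, yielding $1 - \alpha L = 0$, hence $\alpha^* = \frac{1}{L}$. The fact that this is a maximum rather than a minimum follows from concavity (equivalently, the second derivative $-L\norm{\nabla J(\vtheta_k)}^2$ is non-positive).

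Substituting $\alpha^*=\frac{1}{L}$ back into $B(\alpha)$ gives the guaranteed improvement:
\begin{align*}
	B\left(\tfrac{1}{L}\right) = \frac{1}{L}\norm{\nabla J(\vtheta_k)}^2 - \frac{1}{L^2}\cdot\frac{L}{2}\norm{\nabla J(\vtheta_k)}^2 = \frac{\norm{\nabla J(\vtheta_k)}^2}{2L},
\end{align*}
which is manifestly non-negative since $L>0$. Combining this with the lower bound from Theorem~\ref{th:ltwobound} (which states $J(\vtheta_{k+1}) - J(\vtheta_k) \geq B(\alpha)$ for any admissible $\alpha$, in particular for $\alpha^*$) completes the argument.

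I do not anticipate any genuine obstacle here, as the statement is a direct corollary obtained by plugging a specific value into an already-established bound. The only point requiring mild care is the edge case $\nabla J(\vtheta_k)=0$, where the optimization is degenerate; but in that case the claimed improvement bound reads $0\geq 0$, which holds trivially, so the conclusion remains valid. One might also note that the optimal $\alpha^*=\frac{1}{L}$ is independent of the current gradient, which is the pleasant feature that makes the step size a \emph{constant} schedule rather than an adaptive one in this exact framework.
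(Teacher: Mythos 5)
Your proposal is correct and follows essentially the same route as the paper: the paper's proof simply maximizes the quadratic $B(\alpha)$, identifies $\alpha^*=\frac{1}{L}$ as the maximizer, and invokes Theorem~\ref{th:ltwobound}. You merely spell out the calculus and the degenerate case $\nabla J(\vtheta_k)=0$ in more detail, which the paper leaves implicit.
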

\begin{proof}
	We just maximize $B(\alpha{;} \vtheta_k)$ as a (quadratic) function of $\alpha$. The global optimum $B(\alpha^*{;}\vtheta_k) = \frac{\norm{\nabla J(\vtheta_k)}^2}{2L}$ is attained by $\alpha^*=\frac{1}{L}$. The improvement guarantee follows from Theorem~\ref{th:ltwobound}.
\end{proof}

\subsection{Adaptive Step Size -- Approximate Framework}\label{sec:adastepapp}
In practice, we cannot compute the exact gradient $\nabla J(\vtheta_k)$, but only an estimate~$\hnabla  J(\vtheta{;}\Dataset)$ obtained from a finite dataset $\Dataset$ of trajectories. In this section, $N$ denotes the \emph{fixed} size of $\Dataset$.
To find the optimal step size, we just need to adapt the performance-improvement lower bound of Theorem~\ref{th:ltwobound} to stochastic-gradient updates. Since sample trajectories are involved, this new lower bound will only hold with high probability.
To establish statistical guarantees, we make the following assumption on how the (unbiased) gradient estimate concentrates around its expected value:
\begin{assumption}\label{asm:conc}
	Fixed a parameter $\vtheta\in\Theta$, a batch size $N\in\Naturals$ and a failure probability $\delta\in(0,1)$, with probability at least $1-\delta$: 
	\begin{equation*}
	\norm{\hnabla J(\vtheta{;}\Dataset)-\nabla J(\vtheta)} \le \frac{\epsilon(\delta)}{\sqrt{N}},
	\end{equation*}
	where $\vert\Dataset\vert$ is a dataset of $N$ \iid trajectories collected with $\pi_{\vtheta}$ and $\epsilon:(0,1)\to\Reals$ is a known function.
\end{assumption}
We will discuss how this assumption is satisfied in cases of interest in Section~\ref{sec:algo} and Appendix~\ref{app:bounded}.
%
%\begin{align}
%\Var(\vv) = \Tr\left(\Cov(\vv,\vv)\right) = \EVV{\norm{\vv - \EVV{\vv}}^2}.
%\end{align}
%%
%For the policy gradient:
%\begin{equation}
%\Var(\hnabla J(\vtheta{;}\Dataset)) = \EVV[\Dataset\sim p_{\vtheta}]{\norm{\hnabla J(\vtheta{;}\Dataset) - \nabla J(\vtheta)}^2}.
%\end{equation}
%We use the following shorthand for the variance of the gradient estimate computed from a single trajectory:
%\begin{equation}
%\Vv(\vtheta) \coloneqq \EVV[\tau\sim p_{\vtheta}]{\norm{\hnabla J(\vtheta{;}\{\tau\}) - \nabla J(\vtheta)}^2}.
%\end{equation}
%
%Our assumption is then:
%\begin{restatable}{assumption}{boundedvar}\label{ass:err}
%	The variance of the (one-trajectory) policy gradient estimator is bounded by some constant $\maxstd>0$:
%	\begin{align*}
%		\Var(\hnabla J(\vtheta)) \leq \maxstd^2,
%	\end{align*}
%	for all $\vtheta\in\Theta$.
%\end{restatable}
%%
%Of course, $\Var(\hnabla J(\vtheta{;}\Dataset))=\Vv(\vtheta)/N$ for $|\Dataset|=N$. Variance upper bounds are known for the REINFORCE and G(PO)MDP estimators in the case of Gaussian policies~\citep{zhao2011analysis,pirotta2013adaptive}. In Appendix~\ref{sec:var} we generalize these results to smoothing policies. The values of these upper bounds are reported in Table~\ref{tab:var}. Note how the G(PO)MDP estimator removes the explicit linear dependence from the task horizon $T$.
%
%
Under the above assumption, we can adapt Theorem~\ref{th:ltwobound} to the stochastic-gradient case as follows:
\begin{restatable}[]{theorem}{storacle}\label{th:storacle}
	Let $\Pi_{\Theta}$ be a $(\sm,\smm,\smmm)$-smoothing policy class.
	Let $\vtheta_{k}\in\Theta\subseteq\Reals^d$ 
	and $\vtheta_{k+1} = \vtheta_{k} + \alpha\hnabla  J(\vtheta_{k}{;} \Dataset_k)$, 
	where $\alpha\geq0$, $N=\vert\Dataset_k\vert\geq 1$. 
	Under Assumption~\ref{asm:conc}, provided $\vtheta_{k+1}\in\Theta$, the performance improvement of $\vtheta_{k+1}$ \wrt $\vtheta_{k}$ can be lower bounded, with probability at least $1-\delta_k$, as follows:
	\begin{equation}
	\label{eq:appbound}
	\begin{aligned}
	J(\vtheta_{k+1}) - J(\vtheta_{k}) &\geq \alpha\left(\norm{\hnabla  J(\vtheta_{k}{;} \Dataset_k)}- \frac{\epsilon(\delta_k)}{\sqrt{N}}\right)\nonumber\\&\qquad\times\max\left\{
	\norm{\hnabla  J(\vtheta_{k}{;} \Dataset_k)}, \frac{\norm{\hnabla  J(\vtheta_{k}{;} \Dataset_k)} + \frac{\epsilon(\delta_k)}{\sqrt{N}}}{2}
	\right\} \\
	&\qquad- \frac{\alpha^2L}{2}
	\norm{\hnabla  J(\vtheta_{k}{;} \Dataset_k)}^2\coloneqq \wt{B}_k(\alpha{;}N),
	\end{aligned}
	\end{equation}
	where $L = \frac{\Rmax}{(1-\gamma)^2}\left(
	\frac{2\gamma\sm^2}{1-\gamma} + \smm + \smmm
	\right)$.
\end{restatable}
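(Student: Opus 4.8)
The plan is to reduce the statement to a lower bound on the inner product between the estimated and the exact gradient, and then control that inner product through the estimation error of Assumption~\ref{ass:err}. First I would apply the quadratic bound of Theorem~\ref{th:main} with the specific update $\Delta\vtheta = \alpha\hnabla_N J(\vtheta)$, which (using that $\vtheta'\in\Theta$ by assumption) immediately gives the deterministic inequality
\begin{align*}
J(\vtheta') - J(\vtheta) \geq \alpha\left\langle \hnabla_N J(\vtheta), \nabla J(\vtheta)\right\rangle - \frac{\alpha^2 L}{2}\norm{\hnabla_N J(\vtheta)}^2.
\end{align*}
The quadratic penalty here already coincides with the second term of $\wt{B}(\alpha,N)$, so the whole problem collapses to producing the claimed lower bound on $\langle \hnabla_N J(\vtheta), \nabla J(\vtheta)\rangle$ and multiplying it by $\alpha\geq0$.

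Next I would invoke Assumption~\ref{ass:err}, which guarantees that with probability at least $1-\delta$ the event $\norm{\nabla J(\vtheta) - \hnabla_N J(\vtheta)} \leq \epsilon_{\delta}/\sqrt{N}$ holds; all subsequent inequalities are derived on this event. On it I would establish two separate lower bounds on the inner product. Writing $\langle \hnabla_N J, \nabla J\rangle = \norm{\hnabla_N J}^2 + \langle \hnabla_N J, \nabla J - \hnabla_N J\rangle$ and applying Cauchy--Schwarz together with the error bound yields the first,
\begin{align*}
\left\langle \hnabla_N J, \nabla J\right\rangle \geq \norm{\hnabla_N J}\left(\norm{\hnabla_N J} - \tfrac{\epsilon_{\delta}}{\sqrt{N}}\right).
\end{align*}
Using instead the polarization identity $\langle \hnabla_N J, \nabla J\rangle = \tfrac12\big(\norm{\hnabla_N J}^2 + \norm{\nabla J}^2 - \norm{\nabla J - \hnabla_N J}^2\big)$, discarding $\norm{\nabla J}^2\geq0$ and bounding the error term yields the second,
\begin{align*}
\left\langle \hnabla_N J, \nabla J\right\rangle \geq \tfrac12\left(\norm{\hnabla_N J}^2 - \tfrac{\epsilon_{\delta}^2}{N}\right) = \left(\norm{\hnabla_N J} - \tfrac{\epsilon_{\delta}}{\sqrt{N}}\right)\frac{\norm{\hnabla_N J} + \tfrac{\epsilon_{\delta}}{\sqrt{N}}}{2}.
\end{align*}

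Finally I would combine the two by a case split on the sign of $\norm{\hnabla_N J} - \epsilon_{\delta}/\sqrt{N}$, which recovers exactly the $\max$ appearing in the statement: when $\norm{\hnabla_N J}\geq\epsilon_{\delta}/\sqrt{N}$ the maximum equals $\norm{\hnabla_N J}$ and the Cauchy--Schwarz bound applies, whereas when $\norm{\hnabla_N J}<\epsilon_{\delta}/\sqrt{N}$ the maximum equals $(\norm{\hnabla_N J}+\epsilon_{\delta}/\sqrt{N})/2$ and the polarization bound applies. Multiplying the resulting inner-product bound by $\alpha\geq0$ and re-attaching the quadratic penalty gives $\wt{B}(\alpha,N)$, valid on the high-probability event, hence with probability at least $1-\delta$.

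The one delicate point I anticipate is that the shared factor $\norm{\hnabla_N J}-\epsilon_{\delta}/\sqrt{N}$ can be negative, so the $\max$ cannot simply be pulled out of the product and the case analysis is what makes the combination rigorous. It is worth noting, moreover, that the Cauchy--Schwarz bound is in fact the tighter of the two everywhere (their difference equals $\tfrac12(\norm{\hnabla_N J}-\epsilon_{\delta}/\sqrt{N})^2\geq0$), so the $\max$ form merely produces a convenient, slightly conservative expression in the noise-dominated regime where the guaranteed improvement is in any case non-positive; tracking the high-probability event cleanly through to the final inequality is then the only remaining bookkeeping.
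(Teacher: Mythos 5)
Your proposal is correct and follows essentially the same route as the paper's proof: apply Theorem~\ref{th:main} with $\Delta\vtheta=\alpha\hnabla_N J(\vtheta)$, establish the two lower bounds on $\langle\hnabla_N J(\vtheta),\nabla J(\vtheta)\rangle$, and unify them via the case split on the sign of $\norm{\hnabla_N J(\vtheta)}-\epsilon_{\delta}/\sqrt{N}$ to obtain the $\max$ form. The only (immaterial) difference is that you obtain the first inner-product bound directly from Cauchy--Schwarz on the decomposition $\langle\hnabla_N J,\nabla J\rangle=\norm{\hnabla_N J}^2+\langle\hnabla_N J,\nabla J-\hnabla_N J\rangle$, whereas the paper derives the algebraically identical quantity from the law of cosines combined with the reverse triangle inequality on $\norm{\nabla J(\vtheta)}$.
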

\begin{proof}
	Consider the good event $E_k=\left\{\norm{\hnabla J(\vtheta{;}\Dataset)-\nabla J(\vtheta)} \le {\epsilon(\delta_k)}/{\sqrt{N}}\right\}$. By Assumption~\ref{asm:conc}, $E_k$ holds with probability at least $1-\delta_k$. For the rest of the proof, we will assume $E_k$ holds.
	
	Let $\epsilon_k\coloneqq\epsilon(\delta_k)/\sqrt{N}$ for short. Under $E_k$, by the triangular inequality:
	\begin{align}
	\norm{\nabla J(\vtheta_{k})} &\geq \norm{\hnabla  J(\vtheta_{k}{;} \Dataset_k)} - \norm{\nabla J(\vtheta_{k}) - \hnabla  J(\vtheta_{k}{;} \Dataset_k)} \nonumber\\
	&\geq \norm{\hnabla  J(\vtheta_{k}{;} \Dataset_k)} - \epsilon_k,\label{eq:16.5}
	\end{align}
	thus:
	\begin{align}
	\norm{\nabla J(\vtheta_{k})}^2 \geq 	\max\left\{\norm{\hnabla  J(\vtheta_{k}{;} \Dataset_k)} - \epsilon_k, 0\right\}^2.\label{eq:16.8}
	\end{align}
	Then, by the polarization identity:
	\begin{align}
	&\left\langle\hnabla J(\vtheta_{k}{;} \Dataset_k),\nabla J(\vtheta_{k})\right\rangle = \frac{1}{2}\left(\norm{\hnabla J(\vtheta_{k}{;} \Dataset_k)}^2 + \norm{\nabla J(\vtheta_{k})}^2 
	\nonumber\right.\\&\qquad\qquad\qquad\qquad\qquad\qquad
	\left.- \norm{\nabla J(\vtheta_{k}) - \hnabla  J(\vtheta_{k}{;} \Dataset_k)}^2\right)\nonumber\\
	&\qquad\qquad\geq \frac{1}{2}\left(\norm{\hnabla J(\vtheta_{k}{;} \Dataset_k)}^2 + \max\left\{\norm{\hnabla  J(\vtheta_{k}{;} \Dataset_k)} - \epsilon_k, 0\right\}^2 - \epsilon_k^2 \right),\nonumber
	\end{align}
	where the latter inequality is from (\ref{eq:16.8}).
	We first consider the case in\linebreak which $\norm{\hnabla  J(\vtheta_{k}{;} \Dataset_k)} > \epsilon_k$:
	\begin{align}
	\left\langle\hnabla J(\vtheta_{k}{;} \Dataset_k),\nabla J(\vtheta_{k})\right\rangle
	&\geq \frac{1}{2}\left(\norm{\hnabla J(\vtheta_{k}{;} \Dataset_k)}^2 + \left(\norm{\hnabla  J(\vtheta_{k}{;} \Dataset_k)} - \epsilon_k\right)^2 - \epsilon^2_k \right) \nonumber\\
	&= \left(\norm{\hnabla J(\vtheta_{k}{;} \Dataset_k)} - \epsilon_k\right)\norm{\hnabla J(\vtheta_{k}{;} \Dataset_k)}.
	\end{align}
	Then, we consider the case in which $\norm{\hnabla J(\vtheta_{k}{;} \Dataset_k)} \leq \epsilon_k$:
	\begin{align}
	\left\langle\hnabla J(\vtheta_{k}{;} \Dataset_k),\nabla J(\vtheta_{k})\right\rangle
	&\geq \frac{1}{2}\left(\norm{\hnabla J(\vtheta_{k}{;} \Dataset_k)}^2 -  \epsilon^2_k \right) \\
	&=\left(\norm{\hnabla J(\vtheta_{k}{;} \Dataset_k)} - \epsilon_k\right)\frac{\norm{\hnabla J(\vtheta_{k}{;} \Dataset_k)} + \epsilon_k}{2}.
	\end{align}
	The two cases can be unified as follows:
	\begin{align}
	&\left\langle\hnabla J(\vtheta_{k}{;} \Dataset_k),\nabla J(\vtheta_{k})\right\rangle
	\geq \left(\norm{\hnabla  J(\vtheta_{k}{;} \Dataset_k)}- \epsilon_k\right)\nonumber\\&\qquad\qquad\qquad\qquad\qquad\qquad\times\max\left\{
	\norm{\hnabla  J(\vtheta_{k}{;} \Dataset_k)}, \frac{\norm{\hnabla  J(\vtheta_{k}{;} \Dataset_k)} + \epsilon_k}{2}
	\right\}\label{eq:16.7}.
	\end{align}
	From Theorem~\ref{th:main} with $\Delta\vtheta=\alpha\hnabla J(\vtheta_{k}{;} \Dataset_k)$ we obtain:
	\begin{align}
	J(\vtheta_{k+1}) - J(\vtheta_{k}) &\geq \left\langle\vtheta_{k+1}-\vtheta_{k}, \nabla J(\vtheta_{k})\right\rangle - \frac{L}{2}\norm{\vtheta_{k+1}-\vtheta_{k}}^2 \nonumber\\
	& = \alpha\left\langle\hnabla J(\vtheta_{k}{;} \Dataset_k), \nabla J(\vtheta_{k})\right\rangle - \frac{\alpha^2L}{2}\norm{\hnabla J(\vtheta_{k}{;} \Dataset_k)}^2 \nonumber\\
	&\geq \alpha\left(\norm{\hnabla  J(\vtheta_{k}{;} \Dataset_k)}- \epsilon_k\right)\nonumber\\&\quad\times\max\left\{
	\norm{\hnabla  J(\vtheta_{k}{;} \Dataset_k)}, \frac{\norm{\hnabla  J(\vtheta_{k}{;} \Dataset_k)} + \epsilon_k}{2}
	\right\} \nonumber\\&\qquad- \frac{\alpha^2L}{2}\norm{\hnabla J(\vtheta_{k}{;} \Dataset_k)}^2,
	\end{align}
	where the last inequality is from (\ref{eq:16.7}).
\end{proof}

From Theorem~\ref{th:storacle} we can easily obtain an optimal step size, as done in the exact setting, provided the batch size is sufficiently large:
\begin{restatable}[]{corollary}{safestepapp}\label{cor:safestepapp}
	Let $\wt{B}(\alpha,N{;}\vtheta_k)$ be the guaranteed performance improvement of a stochastic policy gradient update, as defined in Theorem~\ref{th:storacle}.
	Under the same assumptions, provided the batch size satisfies:
	\begin{align}
	N\geq \frac{\epsilon^2(\delta_k)}{\norm{\hnabla J(\vtheta_k{;} \Dataset_k)}^2},\label{eq:largebatch}
	\end{align}
	$\wt{B}(\alpha, N{;}\vtheta_k)$ is maximized by the following adaptive step size:
	\begin{align}\label{eq:12.3}
	\alpha^*_k = \frac{1}{L}\left(
	1 - \frac{\epsilon(\delta_k)}{\sqrt{N}\norm{\hnabla J(\vtheta_k{;} \Dataset_k)}}
	\right),
	\end{align}
	which guarantees, with probability at least $1-\delta_k$, the following non-negative performance improvement:
	\begin{align}\label{eq:12.4}
	J(\vtheta_{k+1}) - J(\vtheta_k) \geq \frac{\left(
		\norm{\hnabla  J(\vtheta_{k}{;} \Dataset_k)} - \frac{\epsilon(\delta_k)}{\sqrt{N}}
		\right)^2}{2L}.
	\end{align}
\end{restatable}
\begin{proof}
	Let $N_0=\epsilon^2(\delta_k)\norm{\hnabla J(\vtheta_k{;} \Dataset_k)}^{-2}$. When $N\leq N_0$, the second argument of the $\max$ operator in (\ref{eq:16.7}) is selected. In this case, no positive improvement can be guaranteed and the optimal non-negative step size is $\alpha=0$. Thus, we focus on the case $N>N_0$. In this case, the first argument of the $\max$ operator is selected. Optimizing $\wt{B}(\alpha,N)$ as a function of $\alpha$ alone, which is again quadratic, yields (\ref{eq:12.3}) as the optimal step size and (\ref{eq:12.4}) as the maximum guaranteed improvement. 
\end{proof}
In this case, the optimal step size is adaptive, \ie time-varying and data-dependent.
The constant, optimal step size for the exact case (Corollary~\ref{cor:safestep}) is recovered in the limit of infinite data, \ie $N\to\infty$. In the following we discuss why this adaptive step size should not be used in practice, and propose an alternative solution.

\subsection{Adaptive Batch Size}\label{sec:adabatch}
The safe step size from Corollary~\ref{cor:safestepapp} requires the batch size to be large enough. As soon as the condition~\eqref{eq:largebatch} fails to hold, the user is left with the decision whether to interrupt the learning process or collect more data --- an undesirable property for a fully autonomous system. To avoid this, a large batch size must be selected \emph{from the start}, which results in a pointless waste of data in the early learning iterations. Even so, Equation~\eqref{eq:largebatch}, used as a stopping condition, would be susceptible to random oscillations of the stochastic gradient magnitude, interrupting the learning process prematurely.

As observed in~\citep{papini2017adaptive}, controlling also the batch size $N$ of the gradient estimation can be advantageous. Intuitively, a larger batch size yields a more reliable estimate, which in turn allows a safer policy gradient update.
The larger the batch size, the higher the guaranteed improvement, which would lead to selecting the highest possible value of $N$. However, we must take into account the cost of collecting the trajectories, which is non-negligible in real-world problems (\eg robotics). 
%However, we must also ensure that the prescribed batch size is \emph{measurable}, \ie that $N_k$ can be computed from the data available at the \emph{beginning} of the $k$-th epoch. Indeed, the batch size must be decided before collecting the data, while the step size can be determined afterwards as in Section~\ref{sec:adastepapp}.
For this reason, we would like the meta-parameters to maximize the \emph{per-trajectory} performance improvement:
\begin{equation}
\alpha_k,N_k = \arg\max_{\alpha,N} \frac{J(\vtheta_k+\alpha\hnabla J(\vtheta_k{;}\Dataset)) - J(\vtheta_k)}{N},
\end{equation}
where $\Dataset$ is a dataset of $N$ i.i.d. trajectories sampled with $\pi_{\vtheta_k}$. 
We can then use the lower bound from Theorem~\ref{th:storacle} to find the jointly optimal safe step size and batch size, similarly to what was done in~\citep{papini2017adaptive} for the special case of Gaussian policies:

\begin{restatable}[]{corollary}{safebatch}\label{cor:safebatch}
	Let $\wt{B}_k(\alpha{;}N)$ be the lower bound on the performance improvement of a stochastic policy gradient update, as defined in Theorem~\ref{th:storacle}.
	Under the same assumptions, the continuous relaxation of ${\wt{B}_k(\alpha{;}N)}/{N}$ is maximized by the following step size $\alpha^*$ and batch size $N_k^*$:
	\begin{align}\label{eq:both}
	\begin{cases}
	& \alpha^* =\frac{1}{2L}\\
	& N^*_k = \frac{4\epsilon^2(\delta_k)}{\norm{\hnabla J(\vtheta_{k}{;}\Dataset_k)}^2}.
	\end{cases}
	\end{align}
	Using $\alpha^*$ and $\lceil N^*_k\rceil$ in the stochastic gradient ascent update guarantees, with probability at least $1-\delta_k$, the following non-negative performance improvement:
	\begin{align}\label{eq:guarantee}
	J(\vtheta_{k+1}) - J(\vtheta_k) \geq \frac{\norm{\hnabla J(\vtheta_k{;}\Dataset_k)}^2}{8L}.
	\end{align}
\end{restatable}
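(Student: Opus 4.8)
The plan is to reduce the joint maximization of the continuous relaxation of $\wt{B}(\alpha,N)/N$ to two nested one-dimensional problems that we already know how to solve. Throughout, I would treat the estimated gradient norm $\norm{\hnabla_N J(\vtheta_k)}$ as a fixed positive constant, say $c$, which is exactly what the continuous relaxation amounts to; the self-referential dependence of the batch size on the estimate is then resolved at the algorithmic level. As in the proof of Corollary~\ref{cor:safestepapp}, I would first dispose of the degenerate regime $N \le \epsilon_\delta^2/c^2$, where the second argument of the $\max$ in (\ref{eq:appbound}) is active, no positive improvement can be guaranteed, and the ratio is maximized (at value $0$) by $\alpha = 0$. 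Hence it suffices to work in the regime $N > \epsilon_\delta^2/c^2$, where the first argument of the $\max$ is selected and $\wt{B}(\alpha,N) = \alpha(c - \epsilon_\delta/\sqrt{N})c - \tfrac{\alpha^2 L}{2}c^2$.

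Since the $1/N$ prefactor does not affect the optimization over $\alpha$ at fixed $N$, I would maximize over $\alpha$ first, reusing Corollary~\ref{cor:safestepapp}: the optimum is $\alpha^*(N) = \tfrac{1}{L}(1 - \epsilon_\delta/(\sqrt{N}\,c))$, at which $\wt{B}$ equals $(c - \epsilon_\delta/\sqrt{N})^2/(2L)$. The outer problem is therefore to maximize $(c - \epsilon_\delta/\sqrt{N})^2/(2LN)$ over $N > \epsilon_\delta^2/c^2$. The clean way to carry this out is the substitution $u = 1/\sqrt{N}$, which ranges over $(0, c/\epsilon_\delta)$ and turns the objective into $u^2(c - \epsilon_\delta u)^2/(2L)$. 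On this interval the inner factor $u(c - \epsilon_\delta u)$ is positive, so maximizing the square is equivalent to maximizing the downward parabola $u(c - \epsilon_\delta u)$, whose vertex is at $u = c/(2\epsilon_\delta)$. Translating back gives $N^*_k = 4\epsilon_\delta^2/c^2$, and plugging this value into $\alpha^*(N)$ collapses the bracket to $1 - \tfrac{1}{2}$, yielding the promised $\alpha^* = 1/(2L)$; both match (\ref{eq:both}).

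To read off the improvement guarantee (\ref{eq:guarantee}), I would substitute $\epsilon_\delta/\sqrt{N^*_k} = c/2$ into the per-update bound of Corollary~\ref{cor:safestepapp}, obtaining $(c - c/2)^2/(2L) = c^2/(8L)$, i.e.\ $\norm{\hnabla_{N_k}J(\vtheta_k)}^2/(8L)$. The last point to settle is the ceiling: the statement prescribes $\lceil N^*_k\rceil$ rather than the real-valued $N^*_k$, and it fixes the step to $\alpha^* = 1/(2L)$ instead of re-optimizing $\alpha$ for the rounded batch size. Here I would argue by monotonicity: with $\alpha$ held at $1/(2L)$, one checks that $\wt{B}(1/(2L), N) = \tfrac{3c^2}{8L} - \tfrac{c\,\epsilon_\delta}{2L\sqrt{N}}$ is increasing in $N$, so rounding the batch size up from $N^*_k$ to $\lceil N^*_k\rceil$ can only increase the guaranteed improvement; since it already equals $c^2/(8L)$ at $N^*_k$, the bound (\ref{eq:guarantee}) holds a fortiori at $\lceil N^*_k\rceil$.

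The only genuinely delicate point is conceptual rather than computational: the optimal batch size $N^*_k$ depends on $\norm{\hnabla_{N_k}J(\vtheta_k)}$, which is itself estimated from $N_k$ trajectories, so the prescription is implicit. In the present statement this is sidestepped by the continuous relaxation, under which the gradient norm is treated as the exogenous constant $c$; I would emphasize that the monotonicity argument above is precisely what makes the relaxation safe, since collecting \emph{at least} $N^*_k$ trajectories still yields a non-negative guaranteed improvement. The algorithmic resolution of this fixed point is deferred to the SPG procedure of Section~\ref{sec:algo}.
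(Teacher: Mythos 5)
Your proof is correct: it reaches the same optimizers, the same optimal value $\norm{\hnabla_{N}J(\vtheta_k)}^4/(32L\epsilon_{\delta}^2)$, and the same guarantee as the paper, but the execution of the joint maximization is genuinely different. The paper handles the $\max$ in (\ref{eq:appbound}) by solving two separate constrained programs via KKT conditions --- one using the first argument of the $\max$ on $N>\epsilon_{\delta}^2/c^2$ (writing $c$ for the estimated gradient norm, as you do), and one using the second argument, which is always a valid lower bound, on all $N>0$; the latter yields the strictly worse optimum $c^4/(54L\epsilon_{\delta}^2)$ at $\alpha=1/(3L)$, $N=3\epsilon_{\delta}^2/c^2$, so the first solution is kept. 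You instead partition the domain by which argument of the $\max$ is actually active, observe that on the degenerate branch $\wt{B}\leq 0$ so its contribution to the supremum is $0$ and is dominated, and then solve the remaining problem by nesting the optimizations: Corollary~\ref{cor:safestepapp} gives the inner maximum over $\alpha$ in closed form, and the substitution $u=1/\sqrt{N}$ reduces the outer problem to maximizing a downward parabola on $(0,c/\epsilon_{\delta})$. This buys a fully explicit and elementary derivation where the paper's KKT computation is only asserted; the paper's version has the minor side benefit of exhibiting the suboptimal second branch and its constants. Your handling of the ceiling (explicit monotonicity of $\wt{B}(1/(2L),\cdot)$ in $N$ on the active branch) and your remark on the fixed-point nature of $N^*_k$, resolved algorithmically in Section~\ref{sec:algo}, coincide with the paper's treatment.
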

\begin{proof}
	Fix $k$ and let $\Upsilon(\alpha,N)=\wt{B}_k(\alpha{;}N)/N$ and $N_0=\epsilon^2(\delta_k)\big/\norm{\hnabla J(\vtheta_k{;} \Dataset_k)}^2$. 	We consider the continuous relaxation of $\Upsilon(\alpha,N)$, where $N$ can be any positive real number. 
	For $N\geq N_0$, the first argument of the $\max$ operator in (\ref{eq:appbound}) can be selected. Note that the second argument is always a valid choice, since it is a lower bound on the first one for every $N\geq1$. Thus, we separately solve the following constrained optimization problems:
	\begin{flalign}\label{eq:prob1}
	&&\begin{cases}
	&\max_{\alpha,N}\frac{1}{N}\left(\alpha\norm{\hnabla J(\vtheta_k{;} \Dataset_k)}\left(\norm{\hnabla J(\vtheta_k{;} \Dataset_k)}-\frac{\epsilon(\delta_k)}{\sqrt{N}}\right)\right.\\&\qquad\qquad\qquad\left.- \alpha^2\frac{L}{2}\norm{\hnabla J(\vtheta_k{;} \Dataset_k)}^2\right) \\
	&\text{s.t.} \quad\alpha\geq 0, \\
	&\phantom{\text{s.t.}} \quad N > \frac{\epsilon^2(\delta_k)}{\norm{\hnabla J(\vtheta_k{;} \Dataset_k)}^2}, 
	\end{cases} && && && && && &&
	\end{flalign}
	and:
	\begin{flalign}\hspace{-10pt}
	&&\label{eq:prob2}
	\begin{cases}
	&\max_{\alpha,N}\frac{1}{N}\left(\frac{\alpha}{2}\left(\norm{\hnabla J(\vtheta_k{;} \Dataset_k)}^2-\frac{\epsilon^2(\delta_k)}{N}\right) - \alpha^2\frac{L}{2}\norm{\hnabla J(\vtheta_k{;} \Dataset_k)}^2\right) \\
	&\text{s.t.} \quad\alpha\geq 0, \\
	&\phantom{\text{s.t.}} \quad N > 0.
	\end{cases} 
	\end{flalign}
	Both problems can be solved in closed form using KKT conditions. The first one (\ref{eq:prob1}) yields $\Upsilon^* = \norm{\hnabla J(\vtheta_k{;} \Dataset_k)}^4\big/\left(32L\epsilon^2(\delta_k)\right)$ with the values of $\alpha^*$ and $N^*_k$ given in (\ref{eq:both}). The second one (\ref{eq:prob2}) yields a worse optimum $\Upsilon^* = \norm{\hnabla J(\vtheta_k{;} \Dataset_k)}^4\big/\left(54L\epsilon^2(\delta_k)\right)$ with $\alpha=\frac{1}{3L}$ and $N=3\epsilon^2(\delta)\big/\norm{\hnabla J(\vtheta_k{;} \Dataset_k)}^2$. Hence, we keep the first solution. From Theorem~\ref{th:storacle}, using $\alpha^*$ and $N^*_k$ would guarantee $J(\vtheta_{k+1}) - J(\vtheta_k) \geq \norm{\hnabla J(\vtheta_k{;}\Dataset_k)}^2\big/\left(8L\right)$. Of course, only integer batch sizes can be used. However, for $N\geq N_0$, the right-hand side of (\ref{eq:appbound}) is monotonically increasing in $N$. Since $N_k^*\geq N_0$ and $\lceil N_k^*\rceil\geq N_k^*$, the guarantee (\ref{eq:guarantee}) is still valid when $\alpha^*$ and $\lceil N_k^*\rceil$ are employed in the stochastic gradient ascent update.
\end{proof}
In this case, the optimal step size is constant, and is exactly half the one for the exact case (Corollary~\ref{cor:safestep}). In turn, the batch size is adaptive: when the norm of the (estimated) gradient is small, a large batch size is selected. Intuitively, this allows to counteract the variance of the estimator, which is large relatively to the gradient magnitude. One may worry about the recursive dependence of $N_k^*$ on itself through $\Dataset_k$. We will overcome this issue in the next section.

\section{Algorithm}\label{sec:algo}
In this section, we leverage the theoretical results of the previous sections to design a reinforcement learning algorithm with monotonic improvement guarantees. For the reasons discussed above, we adopt the adaptive-batch-size approach from Section~\ref{sec:adabatch}. 

Corollary~\ref{cor:safebatch} provides a constant step size $\alpha^*$ and a schedule for the batch size $(\lceil N_k^*\rceil )_{k\ge 1}$ that jointly maximize per-trajectory performance improvement under a monotonic-improvement constraint. Plugging these meta-parameters into Algorithm~\ref{alg:pg}, we could obtain a safe policy gradient algorithm. Unfortunately, the closed-form expression for $N_k^*$ provided in~\eqref{eq:both} cannot be used directly. We must compute the batch size \emph{before} collecting the batch of trajectories $\Dataset_k$, but $N_k^*$ depends on $\Dataset_k$ itself. To overcome this issue, we collect trajectories in an incremental fashion until the optimal batch size is achieved. We call this algorithm Safe Policy Gradient (SPG), outlined in Algorithm~\ref{algo:safepg}. The user specifies the failure probability $\delta_k$ for each iteration $k$, while the smoothness constant $L$ and the concentration bound $\epsilon:(0,1)\to\Reals$ can be computed depending on the policy class and the gradient estimator (see Tables~\ref{tab:smoothing} and~\ref{tab:err}).

\begin{algorithm}[t]
	\caption{Safe Policy Gradient (SPG)}
	\label{algo:safepg}
	\begin{algorithmic}[1] 
		\State \textbf{Input:} initial policy parameter $\vtheta_0$, smoothness constant $L$, concentration bound $\epsilon$, failure probabilities $(\delta_k)_{k\ge 1}$, mini-batch size~$n$
		\State $\alpha=\frac{1}{2L}$ \Comment{fixed step size}
		\For{$k=1,2,\dots$}
		\State $i=0$, $\Dataset_{k,0} = \emptyset$
		\Do
		\State $i=i+1$
		\State Collect trajectory $\tau_{k,i}\sim p_{\vtheta_k}$ 
		\State $\Dataset_{k,i}=\Dataset_{k,i-1}\cup\{\tau_{k,i}\}$
		\State Compute policy gradient estimate $g_{k,i} = \hnabla  J(\vtheta_k{;} \Dataset_{k,i})$
		\State $\delta_{k,i}=\frac{\delta_k}{i(i+1)}$
		\doWhile{$i < \frac{4\epsilon^2(\delta_{k,i})}{\norm{g_{k,i}}^2}$}\label{l:stop}
		\State $N_k=i$, $\Dataset_k=\Dataset_{k,i}$ \Comment{adaptive batch size}
		\State Update policy parameters as $\vtheta_{k+1} \gets \vtheta_k + 
		\alpha\hnabla  J(\vtheta_k{;} \Dataset_k)$
		\EndFor
	\end{algorithmic}
\end{algorithm}

We can study the data-collecting process of SPG as a stopping problem. Fixed an iteration $k$, let $\mathcal{F}_{k,i}=\sigma(\{\tau_{k,1},\dots,\tau_{k,i-1}\})$ be the sigma-algebra generated by the first $i$ trajectories collected at that iteration. Let $\EV_{i}[X]$ be short for $\EV[X\vert \mathcal{F}_{i-1}]$.\footnote{In the analysis that follows, expectation without a subscript actually denotes $\EV[\cdot\vert \vtheta_k]$ for a fixed (outer) iteration $k$ of Algorithm~\ref{algo:safepg}. However, we do not need this level of detail since data are discarded at the end of each iteration, dependence between different iterations is only through $\vtheta_k$, and we can analyze each iteration in isolation until the very end of the section.} In Section~\ref{sec:metaparams} and~\ref{sec:adabatch} we assumed the Euclidean norm of the gradient estimation error to be bounded by $\epsilon(\delta)/\sqrt{N}$ with probability $1-\delta$ for some function $\epsilon:(0,1)\to\Reals_+$. For Algorithm~\ref{algo:safepg} to be well-behaved, we need gradient estimates to concentrate \emph{exponentially}, which translates into the following, stronger assumption:
\begin{assumption}\label{asm:expconc}
	Fixed a parameter $\vtheta\in\Theta$, a batch size $N\in\Naturals$ and a failure probability $\delta\in(0,1)$, with probability at least $1-\delta$: 
	\begin{equation*}
	\norm{\hnabla J(\vtheta{;}\Dataset)-\nabla J(\vtheta)} \le \frac{\epsilon(\delta)}{\sqrt{N}},
	\end{equation*}
	where $\vert\Dataset\vert$ is a dataset of $N$ \iid trajectories collected with $\pi_{\vtheta}$ and $\epsilon(\delta)=C\sqrt{d\log(6/\delta)}$ for some problem-dependent constant $C$ that is independent of $\delta$, $d$ and $N$.
\end{assumption}

This is satisfied by REINFORCE/G(PO)MDP with Softmax and Gaussian policies, as shown in Appendix~\ref{app:bounded}. In Table~\ref{tab:err} we summarize the value of the error bound $\epsilon(\delta)$ to be used in the different scenarios.
Equipped with this exponential tail bound we can prove that, at any given (outer) iteration of SPG, the data-collecting process (inner loop) terminates:
\begin{lemma}\label{lem:stopping}
	Fix an iteration $k$ of Algorithm~\ref{algo:safepg} and let $N_k$ the number of trajectories that are collected at that iteration. Under Assumption~\ref{asm:expconc}, provided $\norm{\nabla J(\vtheta_k)}>0$, $\EV[N_k]<\infty$.
\end{lemma}
\begin{proof}
	First, note that $N_k$ is a stopping time \wrt the filtration $(\mathcal{F}_{k,i})_{i\ge1}$. Consider the event $E_{k,i}=\big\{\norm{g_{k,i}-\nabla J(\vtheta_k)}\le{\epsilon(\delta_{k,i})}/{\sqrt{i}}\big\}$. By Assumption~\ref{asm:expconc}, $\Pro(\lnot E_{k,i})\le\delta_{k,i}$.
	This allows to upper bound the expectation of $N_k$ as follows:
	\begin{align}
		\EV[N_k] &\le \EV\left[\sum_{i=1}^\infty\mathbb{I}\left(\sqrt{i}<\frac{2\epsilon(\delta_{k,i})}{\norm{g_{k,i}}}\right)\right]\\
		&=\EV\left[\sum_{i=1}^\infty\mathbb{I}\left(\sqrt{i}<\frac{2\epsilon(\delta_{k,i})}{\norm{g_{k,i}}}, E_{k,i}\right)\right] + \EV\left[\sum_{i=1}^\infty\mathbb{I}\left(\sqrt{i}<\frac{2\epsilon(\delta_{k,i})}{\norm{g_{k,i}}}, \lnot E_{k,i}\right)\right] \\
		&\le\sum_{i=1}^\infty\mathbb{I}\left(\sqrt{i}<\frac{2\epsilon(\delta_{k,i})}{\norm{\nabla J(\vtheta_k)}-\epsilon(\delta_{k,i})/\sqrt{i}}\right) + \sum_{i=1}^\infty\Pro(\lnot E_{k,i}) \\
		&\le \min_{i\ge 1}\left\{\sqrt{i}\ge\frac{2\epsilon(\delta_{k,i})}{\norm{\nabla J(\vtheta_k)}-\epsilon(\delta_{k,i})/\sqrt{i}}\right\} + \sum_{i=1}^\infty \delta_{k,i} \\
		&\le\min_{i\ge 1}\left\{\norm{\nabla J(\vtheta_k)}\sqrt{i} \ge 3\epsilon(\delta_{k,i})\right\}) + \delta_k\sum_{i=1}^\infty \frac{1}{i(i+1)} \\
		&\le\min_{i\ge 1}\left\{\norm{\nabla J(\vtheta_k)}\sqrt{i} \ge 3C\sqrt{d\log(6i(i+1)/\delta_k)}\right\} + 1 \label{pp:by_asm_2} \\
		&\le\min_{i\ge 1}\left\{\norm{\nabla J(\vtheta_k)}^2i \ge 18C^2d\log(6i/\delta_k)\right\} + 1\\
		&\le \left\lceil \frac{36C^2d}{\norm{\nabla J(\vtheta_k)}^2}\log\frac{108C^2d}{\norm{\nabla J(\vtheta_k)}^2\delta_k}\right\rceil +1,
	\end{align}
	where~\eqref{pp:by_asm_2} is by Assumption~\ref{asm:expconc} and the last inequality is by Lemma~\ref{lem:lambert} assuming $\norm{\nabla J(\vtheta_k)}\le C$. If the latter is not true, we still get:
	\begin{align}
	\EV[N_k] &\le \min_{i\ge 1}\left\{\norm{\nabla J(\vtheta_k)}^2i \ge 18C^2d\log(6i/\delta_k)\right\} + 1 \\
	&\le\min_{i\ge 1}\left\{i \ge 18d\log(6i/\delta_k)\right\} + 1 \\
	&\le \lceil 36d\log(108d/\delta_k)\rceil+1.
	\end{align}
\end{proof}

\begin{table}[t]
	\caption{Gradient estimation error bound $\epsilon(\delta)$ for Gaussian and Softmax policies using REINFORCE (RE.), GPOMDP (GP.), or the random-horizon estimator discussed in Appendix~\ref{app:geom} (RH.) as gradient estimator, where $d$ is the dimension of the policy parameter, $\phimax$ is an upper bound on the max norm of the feature function, $\Rmax$ is the maximum absolute-valued reward, $\gamma$ is the discount factor, $T$ is the task horizon, $\sigma$ is the standard deviation of the Gaussian policy and $\tau$ is the temperature of the Softmax policy.}
	\begin{tabular}{ccc}
		\toprule
		&\textbf{Gaussian} & \textbf{Softmax} \\
		\midrule
		\textbf{RE.}&$\frac{4M\Rmax T(1-\gamma^{\top})}{\sigma(1-\gamma)}\sqrt{{14d\log(6/\delta)}}$ & $\frac{4M\Rmax T(1-\gamma^{\top})}{\tau(1-\gamma)}\sqrt{{2d\log(6/\delta)}}$ \\
		\textbf{GP.}&$\frac{4M\Rmax[1-\gamma^{\top} - T(\gamma^{T}-\gamma^{T+1})]}{\sigma(1-\gamma)^2} \sqrt{{14d\log(6/\delta)}}$ & $\frac{4M\Rmax[1-\gamma^{\top} - T(\gamma^{T}-\gamma^{T+1})]}{\tau(1-\gamma)^2}\sqrt{{2d\log(6/\delta)}}$ \\
		\textbf{RH.}&$\frac{4M\Rmax}{\sigma(1-\gamma^{1/2})^2} \sqrt{{14d\log(6/\delta)}}$ & $\frac{4M\Rmax}{\tau(1-\gamma^{1/2})^2}\sqrt{{2d\log(6/\delta)}}$ \\
		\bottomrule
	\end{tabular}
	\label{tab:err}
\end{table}

We can now prove that the policy updates of SPG are safe.
\begin{theorem}\label{th:mi}
	Consider Algorithm~\ref{algo:safepg} applied to a smoothing policy, where $\hnabla J$ is an unbiased policy gradient estimator.
	Under Assumption~\ref{asm:expconc}, for any iteration~${k\ge 1}$, provided $\nabla J(\vtheta_k)\neq0$, with probability at least $1-\delta_k$:
	\begin{equation*}
		J(\vtheta_{k+1}) - J(\vtheta_k) \ge \frac{\norm{\hnabla J(\vtheta_k{;}\Dataset_k)}^2}{8L}\ge 0.
	\end{equation*}
\end{theorem}
\begin{proof}
	Fix an (outer) iteration $k$ of Algorithm~\ref{algo:safepg} and let $g_{k,i}=\hnabla J(\vtheta_k{;}\Dataset_{k,i})$ for short.
	Using an unbiased policy gradient estimator we ensure $\EV_i[g_{k,i}-\nabla J(\vtheta_k)]=0$, so $X_i=g_{k,i}-\nabla J(\vtheta_k)$ is a martingale difference sequence adapted to $(\mathcal{F}_{k,i})_{i\ge1}$. We use an optional stopping argument to show that $g_{k,N_k}$ is an unbiased policy gradient estimate. Lemma~\ref{lem:stopping} shows that $N_k$ is a stopping time \wrt the filtration $(\mathcal{F}_{k,i})_{i\ge1}$ that is finite in expectation. Furthermore, by Assumption~\ref{asm:expconc}, integrating the tail:
	\begin{align}
		{\EV}_i[\norm{X_i}] &= \int_0^\infty \Pro\left(\norm{X}> x\vert \mathcal{F}_{k,i}\right)\de x \\&\le 6 \int_0^\infty \exp(-x^2i/(C^2d)) \de x \\&\le 6C\sqrt{\frac{\pi d}{4i}} \le 6C\sqrt{\frac{\pi d}{4}} &&\text{for all $i\ge 1$}.
	\end{align}
	Hence, by optional stopping (Lemma~\ref{lem:opt_stop}), $\EV[X_{N_k}]=0$.
	Since $X_{N_k}=\hnabla J(\vtheta_k{;}\Dataset_k)-\nabla J(\vtheta_k)$, we have $\EV[\hnabla J(\vtheta_k{;}\Dataset_k)]=\nabla J(\vtheta_k)$. This shows that the policy update of Algorithm~\ref{algo:safepg} is an unbiased policy-gradient update. By the stopping condition:
	\begin{equation}\label{eq:stopped_batchsize}
		N_k \ge \frac{4\epsilon^2(\delta_{k,N_k})}{\norm{\hnabla J(\vtheta_k{;}\Dataset_k)}^2}.
	\end{equation}
	Now consider the following good event:
	\begin{equation}
	E_k = \left\{\forall i\ge 1: \norm{g_{k,i}-\nabla J(\vtheta_k)}\le \epsilon(\delta_{k,i})/i\right\}.
	\end{equation}
	Under Assumption~\ref{asm:expconc}, by union bound:
	\begin{equation}
	\mathbb{P}\left(\lnot E_k\right) \le \sum_{i=1}^\infty \delta_{k,i} = \sum_{i=1}^\infty\frac{\delta_k}{i(i+1)} = \delta_k.
	\end{equation}
	So $E_k$ holds with probability at least $1-\delta_k$.
	Under $E_k$, the performance improvement guarantee is by Corollary~\ref{cor:safebatch}, Equation~\eqref{eq:stopped_batchsize}, and the choice of the step size~$\alpha$. 
\end{proof}

We have shown that the policy updates of SPG are safe with probability $1-\delta_k$, where the failure probability $\delta_k$ can be specified by the user for each iteration $k$. Typically, one would like to ensure monotonic improvement for the whole duration of the learning process. This can be achieved by appropriate confidence schedules. If the number of updates $K$ is fixed a priori, $\delta_k=\delta/K$ guarantees monotonic improvement with probability $1-\delta$. The same can be obtained by using an adaptive confidence schedule $\delta_k=\frac{\delta}{k(k+1)}$, even when the number of updates is not known in advance. Both results are easily shown by taking a union bound over $k\ge 1$. Notice how having an exponential tail bound like the one from Assumption~\ref{asm:expconc} is fundamental for the batch size to have a logarithmic dependence on the number of policy updates. 

\subsection{Towards a Practical Algorithm}\label{sec:relax}
The version of SPG we have just analyzed is very conservative. The price for guaranteeing monotonic improvement is slow convergence, even in small problems (see Section~\ref{sec:exp_lq} for an example).
In this section, we discuss possible variants and generalizations of Algorithm~\ref{algo:safepg} aimed at the development of a more practical method. In doing so, we still stay faithful to the principle of satisfying the safety requirement specified by the user with no compromises. We just list the changes here. See Appendix~\ref{app:relax} for a more rigorous discussion.

\paragraph{Improved smoothness constant.}
As mentioned in Section~\ref{ssec:perfbound}, we can use the improved smoothness constant by~\cite{yuan2021general}, denoted $L^\star$ in the following, which has a better dependence on the effective horizon. This yields a larger step size with the same theoretical guarantees, and allows to tackle problems with longer horizons in practice.

\paragraph{Mini-batches.} In the inner loop of Algorithm~\ref{algo:safepg}, instead of just one trajectory at a time, we can collect mini-batches of $n$ independent trajectories. For instance, $n\ge 2$ is required to employ the variance-reducing baselines discussed in Section~\ref{sec:pre}. Moreover, a carefully picked mini-batch size $n$ can make the early gradient estimates more stable, leading to an earlier stopping of the inner loop and a smaller batch size $N_k$. We leave the investigation of the optimal value of $n$ to future work.

\paragraph{Largest safe step size.}
The meta parameters of Algorithm~\ref{algo:safepg} were selected to maximize a lower bound on the per-trajectory performance improvement. Although we believe this is the most theoretically justified choice, we could gain some convergence speed by using a larger step size. From Theorem~\ref{th:storacle}, it is easy to check that $\alpha=1/L$ is the largest constant step size we can use with our choice of adaptive batch size from Algorithm~\ref{algo:safepg}. 
%TODO: mention lq experiment
We leave the investigation of alternative safe combinations of batch size and (possibly adaptive) step size to future work.

\paragraph{Empirical Bernstein bound.}
The stopping condition of Algorithm~\ref{algo:safepg} (line~\ref{l:stop}) is based on a Hoeffding-style bound on the gradient estimation error. In the case of policies with bounded score function, such as Softmax policies (see Appendix~\ref{ssec:gibbs}), we can use an empirical Bernstein bound instead~\citep{maurer2009empirical}. This requires some modifications to the algorithm, but yields a smaller adaptive batch size with the same safety guarantees. See Appendix~\ref{app:relax} for details. Unfortunately, we cannot use the empirical Bernstein bound with the Gaussian policy because of its unbounded score function (see Appendix~\ref{ssec:gauss}).

\paragraph{Weaker safety requirements.}
Monotonic improvement is a very strong requirement, so we do expect an algorithm with strict monotonic improvement guarantees like SPG to be very data-hungry and slow to converge. However, with little effort, Algorithm~\ref{algo:safepg} can be modified to handle weaker safety requirements. A common one is the \emph{baseline constraint}~\cite[\eg]{garcelon2020conservative,laroche2019safe}, where the performance of the policy is required to never be (significantly) lower than the performance of a baseline policy $\pi_b$. In a real safety-critical application, the reward could be designed so that policies with performance greater than $J(\pi_b)$ are always safe. In other applications, $\pi_b$ can be an existing, reliable controller that the user wants to replace with an adaptive RL agent. In this case, assuming $\pi_{\vtheta_0}=\pi_b$, the baseline constraint guarantees that the learning agent never performs worse than the original controller. In our numerical simulations of Section~\ref{sec:exp}, we will consider a stronger version of the baseline constraint that we call \emph{milestone constraint}. In this case, the agent's policy must never perform (significantly) worse than the best performance observed so far. Formally, for all $k\ge 1$:
\begin{equation}
	J(\vtheta_{k+1}) \ge \lambda \max_{j=1,2,\dots,k}\{J(\vtheta_j)\},\label{eq:milestone}
\end{equation}
where $\lambda\in[0,1]$ is a user-defined significance parameter. The idea is as follows: every time the agent reaches a new level of performance (a \emph{milestone}), it should never do significantly worse than that. When $\lambda=1$, this reduces to monotonic improvement. When $\lambda<1$, some amount of performance oscillation is allowed, but this relaxation can significantly improve the learning speed. Of course, the user has full control on this trade-off through the meta-parameter $\lambda$. In Appendix~\ref{app:relax} we show that variants of Algorithm~\ref{algo:safepg} satisfy the milestone constraint (and other requirements, such as the baseline constraint) with probability $1-\delta$ for given significance $\lambda$ and failure probability $\delta$. We experiment with the milestone constraint in Section~\ref{sec:exp_cartpole}.

\section{Related Works}\label{sec:related}
In this section, we discuss previous results on {MI} guarantees for policy gradient algorithms.

The seminal work on monotonic performance improvement is by~\citet{kakade2002approximately}. In this work, policy gradient approaches are soon dismissed because of their lack of exploration, although they guarantee MI in the limit of an infinitesimally small step size. The authors hence focus on value-based RL, proposing the Conservative Policy Iteration (CPI) algorithm, where the new policy $\pi_{k+1}$is a mixture of the old policy $\pi_k$ and a greedy one $\pi_k^+$. The guaranteed improvement of this new policy~\citep[][Theorem 4.1]{kakade2002approximately} depends on the coefficient $\alpha$ of this convex combination, which plays a similar role as the learning rate in our Theorem~\ref{th:ltwobound}: 
\begin{equation}
J(\pi_{k+1}) - J(\pi_k) \ge \frac{\alpha}{(1-\gamma)}\EVV[\substack{s\sim \Stat^{\pi_k}\\a\sim\pi^{+}_k}]{A^{\pi_k}(s,a)} - \frac{2\alpha^2\gamma\epsilon}{(1-\gamma)^2(1-\alpha)},
\end{equation}
where $\epsilon=\max_{s\in\Sspace}\vert \EVV[a\sim\pi^{+}_k(\cdot\vert s)]{A^{\pi_k}(s,a)}\vert $ and $A^{\pi}(s,a)=Q^\pi(s,a)-V^\pi(s)$ denotes the advantage function of policy $\pi$.
In fact, both lower bounds have a positive term that accounts for the expected improvement of the new policy \wrt the old one, and a penalization term due to the mismatch between the two. 
The CPI approach is refined by~\citet{pirotta2013safe}, who propose the Safe Policy Iteration (SPI) algorithm~\citep[see also][]{metelli2021safe}. 

Specific performance improvement bounds for policy gradient algorithms were first provided by~\citet{pirotta2013adaptive} by adapting previous results on policy iteration~\citep[][]{pirotta2013safe} to continuous {MDP}s. However, the penalty term can only be computed for shallow Gaussian policies (App.~\ref{ssec:gauss}) in practice. The bound for the exact framework is:
\begin{align}\label{eq:pirotta}
J(\vtheta_{k+1}) - J(\vtheta_k) &\ge \alpha_k\norm{\nabla J(\vtheta_k)}^2 -\alpha_k^2 \frac{\phimax^2\Rmax}{\sigma^2(1-\gamma)^2}\left(\frac{\vert \Aspace\vert }{\sqrt{2\pi}\sigma}+\frac{\gamma}{2(1-\gamma)}\right)\nonumber\\&\qquad\qquad\qquad\qquad\qquad\times\norm[1]{\nabla J(\vtheta_k)}^2,
\end{align}
where $\vert \Aspace\vert $ denotes the volume of the action space. From Table~\ref{tab:smoothing}, our bound for the same setting is (Corollary~\ref{cor:safestep}):
\begin{equation*}
J(\vtheta_{k+1}) - J(\vtheta_k) \ge \alpha_k\norm{\nabla J(\vtheta_k)}^2 -\alpha_k^2 \frac{\phimax^2\Rmax}{\sigma^2(1-\gamma)^2}\left(1+\frac{2\gamma}{\pi(1-\gamma)}\right)\norm{\nabla J(\vtheta_k)}^2,
\end{equation*}
which has the same dependence on the step size, the policy standard deviation $\sigma$, the effective horizon $(1-\gamma)^{-1}$, the maximum reward $\Rmax$ and the maximum feature norm $\phimax$. Besides being more general, our penalty term does not depend on the problematic $\vert \Aspace\vert $ term (the action space is theoretically unbounded for Gaussian policies) and replaces the $l_1$ norm of~\eqref{eq:pirotta} with the smaller $l_2$ norm. Due to the different constants, we cannot say our penalty is always smaller, but the change of norm could make a big difference in practice, especially for large parameter dimension $d$.~\citet{pirotta2013adaptive} also study the approximate framework. However, albeit formulated in terms of the estimated gradient, their lower bound (Theorem 5.2) \emph{still pertains exact policy gradient updates}, since $\vtheta_{k+1}$ is defined as $\vtheta_k+\alpha_k\nabla J(\vtheta_k)$. This easy-to-overlook observation makes our Theorem~\ref{th:storacle} the first rigorous monotonic improvement guarantee for stochastic policy gradient updates of the form $\vtheta_{k+1}=\vtheta_k+\alpha_k\hnabla J(\vtheta_k)$.~\citet{pirotta2013adaptive} use their results to design an adaptive step-size schedule for REINFORCE and G(PO)MDP, similarly to what we propose in this paper, but limited to Gaussian policies.~\citet{papini2017adaptive} rely on the same improvement lower bound~\eqref{eq:pirotta} to design an adaptive-batch size algorithm, the most similar to our {SPG}. Again, their monotonic improvement guarantees are limited to shallow Gaussian policies.

Another related family of performance improvement lower bounds, inspired once again by~\citet{kakade2002approximately}, is that of {TRPO}. These are very general results that apply to arbitrary pairs of stochastic policies, although they are mostly used to construct policy gradient algorithms in practice. Specializing Theorem 1 by~\citet{schulman2015trust} to our setting and applying the {KL} lower bound suggested by the authors we can get the following:
\begin{align}\label{eq:trpopg}
J(\vtheta_{k+1}) - J(\vtheta_k) &\ge \frac{1}{1-\gamma}\EVV[\substack{s\sim \Stat^{\vtheta_k}\\a\sim\pi_{\vtheta_{k+1}}}]{A^{\vtheta_k}(s,a)} \nonumber\\ &\qquad-\frac{2\gamma\Rmax}{(1-\gamma)^3}\max_{s\in\Sspace}\left\{\kl(\pi_{\vtheta_k}(\cdot\vert s)\Vert \pi_{\vtheta_{k+1}}(\cdot\vert s))\right\},
\end{align}
where $\pi_{\vtheta}$ is a stochastic policy. Unfortunately, the lower bound for a policy gradient update (exact or stochastic) cannot be computed exactly. Approximations can lead to very good practical algorithms such as {TRPO}, but not to actually implementable algorithms with rigorous monotonic improvement guarantees like our {SPG}.~\citet{achiam2017} and~\citet{pajarinen2019compatible} are able to remove some approximations, but not all.\footnote{This is not a critique of the {TRPO} algorithm per se. Besides the celebrated empirical results, {TRPO} is also theoretically justified~\citep{neu2017unified}, only not best as a monotonically improving gradient-descent algorithm~\citep[see also][]{shani2020adaptive}.} If we were to derive a computable worst-case lower bound starting from~\eqref{eq:trpopg}, we would get a result similar to~\eqref{eq:pirotta}. In fact,~\citet{pirotta2013adaptive} explicitly upper-bound the {KL} divergence in their derivations, which is why the final result is limited to Gaussian policies. We overcome this difficulty by directly upper-bounding the curvature of the objective function (Lemma~\ref{lem:polhessbound}). Furthermore, Theorem~\ref{th:main} suggests that our theory is not limited to policy gradient updates. Arbitrary update directions are considered in~\citep{papini2020balancing}.

\citet{pirotta2015policy} provide performance improvement lower bounds (Lemma 8) and adaptive-step algorithms for policy gradients under Lipschitz continuity assumptions on the {MDP} and the policy. Our assumptions on the environment are much weaker since we only require boundedness of the reward. Intuitively, stochastic policies \emph{smooth out} the irregularities of the environment in computing expected return objectives. In turn, the results of~\citet{pirotta2015policy} also apply to deterministic policies.

\citet{cohen2018diverse} provide a general safe policy improvement strategy that can be applied also to policy gradient updates. However, it requires to maintain and evaluate a \textit{set} of policies per iteration instead of a single one.

As mentioned, \citet[][Lemma 4.4]{yuan2021general} also study policy gradient with smoothing policies, providing an improved smoothness constant and proving Lipschitz continuity of the objective function. However, their main focus is sample complexity of vanilla policy gradient.

\section{Experiments}\label{sec:exp}

In this section, we test our SPG algorithm on simulated control tasks. We first test Algorithm~\ref{algo:safepg} with monotonic improvement guarantees on a small continuous-control problem. We then experiment with the milestone-constraint relaxation proposed in Section~\ref{sec:relax} on a classic RL benchmark --- cart-pole balancing.

\begin{figure}[t]
	\includegraphics[width=\textwidth]{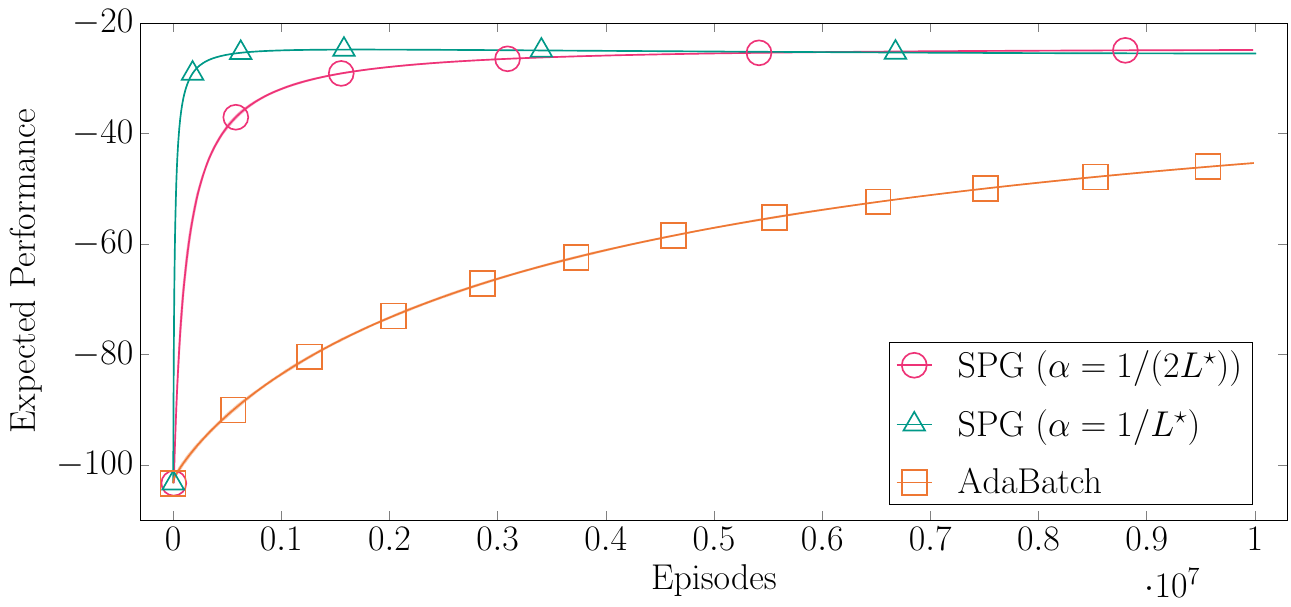}
	\caption{Performance of SPG and AdaBatch~\citep{papini2017adaptive} on the LQR task with Gaussian policy. Results are averaged over $5$ independent runs. The shaded areas correspond to $10$ standard deviations. A marker corresponds to $100$ policy updates.}
	\label{fig:lqg}
\end{figure}

\subsection{Linear-Quadratic Regulator with Gaussian Policy}\label{sec:exp_lq}
The first task is a $1$-dimensional Linear-Quadratic Regulator (LQR,~\cite{dorato1994linear}), a typical continuous-control benchmark. See Appendix~\ref{app:lqg} for a detailed task specification. 
We use a Gaussian policy (Appendix~\ref{ssec:gauss}) that is linear in the state, $\pi_{\vtheta}(a\vert s)=\mathcal{N}(a{;}\theta s, \sigma^2)$. 
The task horizon is $T=10$ and we use $\gamma=0.9$ as a discount factor. The policy mean parameter is initialized to $\theta_0=0$ and the variance is fixed as $\sigma=1$. For this task, the maximum reward (in absolute value) is $\Rmax=1$ and the only feature is the state itself, giving $\phimax=1$. Hence, the smoothness constant $L^\star\simeq 200$ is easily computed (see Table~\ref{tab:smoothing}). Similarly, the error bound can be retrieved from Table~\ref{tab:err}. 
We compare the SPG (Algorithm~\ref{algo:safepg}) with an existing adaptive-batch-size policy gradient algorithm for Gaussian policies~\citep{papini2017adaptive}, discussed in the previous section and labeled \emph{AdaBatch} in the plots.
SPG is run with a mini-batch size of $n=100$ (see Section~\ref{sec:relax}), and AdaBatch (in the version with Bernstein's inequality as recommended in the original paper) with an initial batch size of $N_0=100$. Both use the adaptive confidence schedule $\delta_k=\delta/(k*(k+1))$ discussed in Section~\ref{sec:algo}, with an overall failure probability of $\delta=0.05$. We also consider SPG with a twice-as-large step size $\alpha=1/L^\star$, as discussed in Section~\ref{sec:relax}.

Figure~\ref{fig:lqg} shows the expected performance of the algorithms on the LQR task. For this task, we are able to compute the expected performance in closed form given the policy parameters~\citep{peters2002policy}. This allows to filter out the oscillations due to the stochasticity of policy and environment, focusing on actual (expected) performance oscillations. It is also why the variability among different seeds is so small (note that, for this figure, shaded areas correspond to $10$ standard deviations. They correspond to a single standard deviation in the other figures). Performance is plotted against the total number of collected trajectories for fair comparison. The distribution of policy updates can be deduced from the markers.
We can see that indeed all the safe PG algorithms exhibit monotonic improvement.
SPG converges faster than AdaBatch. This is mostly due to the larger step size of SPG (we observed that the step size of SPG was more than $100$ times larger than the one of AdaBatch in most of the updates). This allows SPG to converge faster even with fewer policy updates. The variant of SPG with a larger step size ($\alpha=1/L^\star$) converges faster to a good policy, but the original version from Algorithm~\ref{algo:safepg} achieves higher performance on the long run. This indicates that maximizing the lower bound on per-trajectory performance improvement from Theorem~\ref{th:storacle} is indeed meaningful.

Figure~\ref{fig:lqg_batchsize} shows the batch size of the different algorithms.
The batch size of SPG is mostly larger than that of AdaBatch. From Section~\ref{sec:related} we know that the monotonic improvement guarantee of SPG is more rigorous, so a larger batch size is justified. Notice also that the batch size of SPG is smaller than that of AdaBatch in the early iterations, suggesting that the former is more adaptive.

\begin{figure}[t]
	\includegraphics[width=\textwidth]{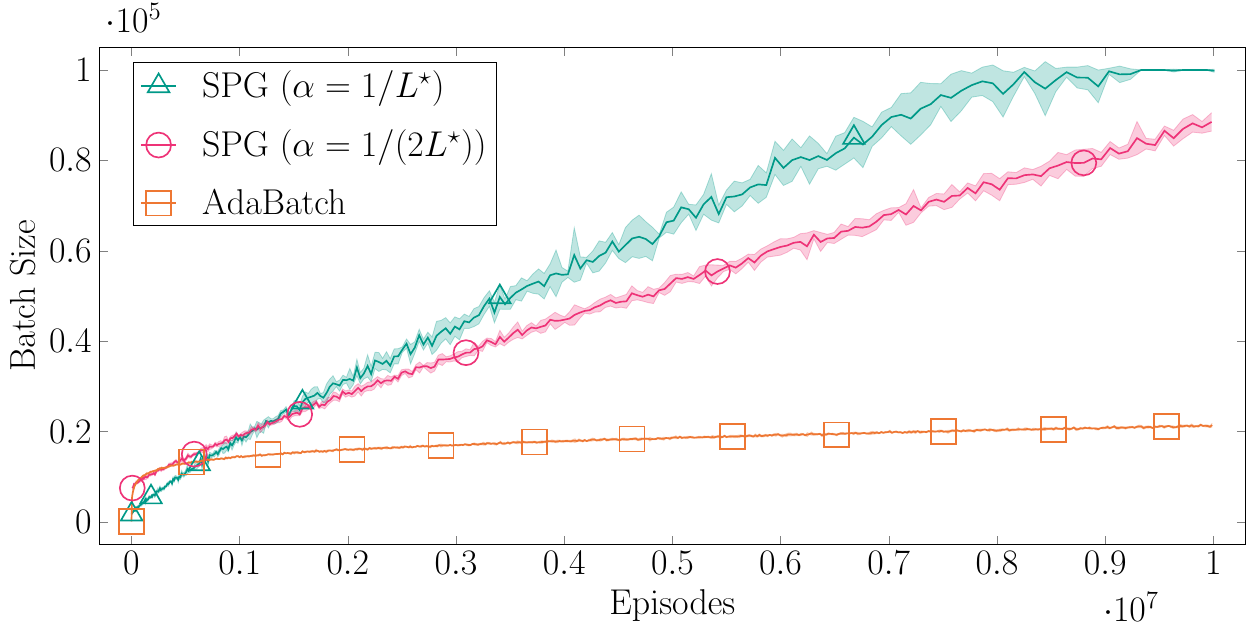}
	\caption{Batch size of SPG and AdaBatch on the LQR task. Results are averaged over $5$ independent runs. The shaded areas correspond to one standard deviation. A marker corresponds to $100$ policy updates}
	\label{fig:lqg_batchsize}
\end{figure}

\subsection{Cart-Pole with Softmax Policy}\label{sec:exp_cartpole}
The second task is cart-pole~\citep{barto1983neuronlike}. We use the implementation from \texttt{openai/gym}, which has $4$-dimensional continuous states and finite actions, $a\in\{1,2\}$. See Appendix~\ref{app:cartpole} for further details. The policy is Softmax (Appendix~\ref{ssec:gibbs}), linear in the state: $\pi_{\vtheta}(a\vert s)\propto\exp(\vtheta_a^\top s)$, with a separate parameter for each action ($\vtheta = [\vtheta_1; \vtheta_2]$). We use a fixed temperature $\tau=1$, initial policy parameters set to zero (this corresponds to a uniform policy) and $\gamma=0.9$ as a discount factor. For SPG, we employ all the practical variants proposed in Section~\ref{sec:relax}. In particular, since the Softmax policy has a bounded score function, we can use the empirical Bernstein bound. Note that we could not have done the same for the LQG task since the score function of the Gaussian policy is unbounded (see Appendix~\ref{app:bounded}). Moreover, we consider the relaxed milestone constraint for different values of the significance parameter, $\lambda\in\{0.1,0.2,0.4\}$. The overall failure probability is always $\delta=0.2$, the mini-batch size is $n=100$, and the step size is $\alpha=1/L^\star$.\footnote{Although both theory and our LQR experiments indicate that $\alpha=1/(2L^\star)$ is ultimately the best choice, we prioritize convergence speed over long-term performance on this larger task.} We compare with GPOMDP with the same step size but a fixed batch size of $N=100$, which comes with no safety guarantees, and corresponds to $\lambda=0$. In Figure~\ref{fig:cartpole} we plot the performance against the total number of collected trajectories. As expected, a more relaxed constraint yields faster convergence. However, no significant performance oscillations are observed, not even in the case of GPOMDP, suggesting that the choice of meta-parameters is still over-conservative. In Figure~\ref{fig:cartpole2} (left) we report the evolution of the batch size of SPG during the learning process. Note how, in this case, the batch size seems to converge to a constant value. In Figure~\ref{fig:cartpole2} (right) we illustrate the milestone constraint. The solid line is the performance of SPG with $\lambda=0.1$, while the dotted line is the performance lower-threshold enforced by the milestone constraint, representing $90\%$ of the highest performance achieved so far. As desired, the actual performance never falls under the threshold.

\begin{figure}[t]
	\includegraphics[width=\textwidth]{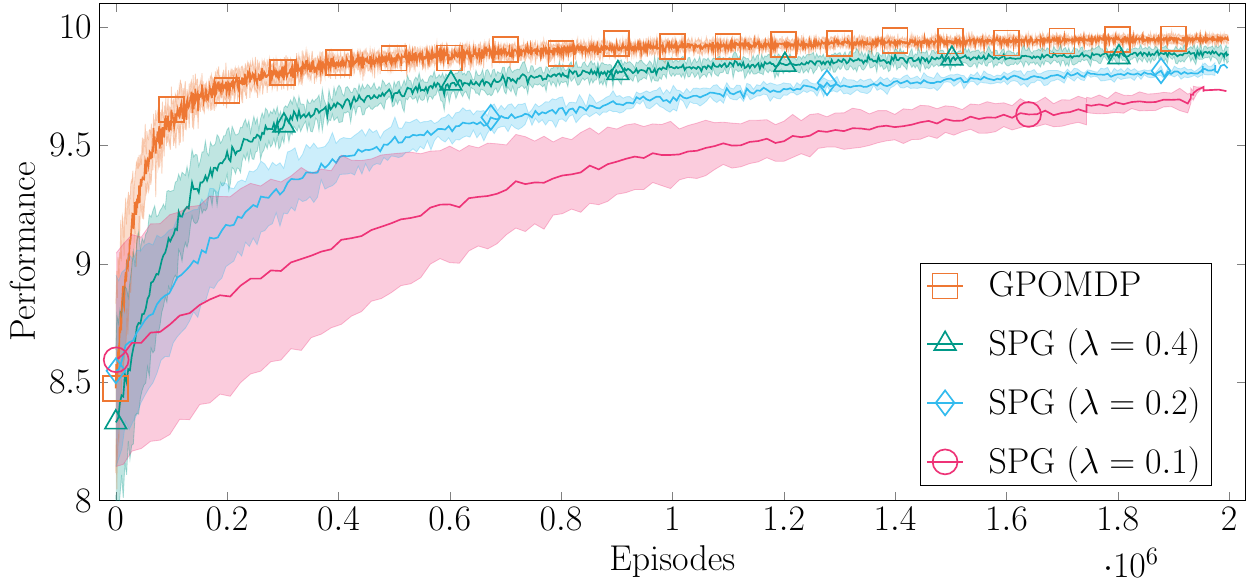}
	\caption{Performance of GPOMDP and SPG (for different values of the significance parameter $\lambda$) on the cart-pole task with Softmax policy. Results are averaged over $5$ independent runs. The shaded areas correspond to one standard deviation. A marker corresponds to $1000$ policy updates.}
	\label{fig:cartpole}
\end{figure}

\section{Conclusion}\label{sec:end}
We have identified a general class of policies, called smoothing policies, for which the performance measure (expected total reward) is a smooth function of policy parameters. We have exploited this property to select meta-parameters for actor-only policy gradient that guarantee monotonic performance improvement. We have shown that an adaptive batch size can be used in combination with a constant step size for improved efficiency, especially in the early stages of learning. We have designed a monotonically improving policy gradient algorithm, called Safe Policy Gradient (SPG), with adaptive batch size. 
We have shown how SPG can also be applied to weaker performance-improvement constraints.
Finally, we have tested SPG on simulated control tasks. 

\begin{figure}[t]
	\includegraphics[width=\textwidth]{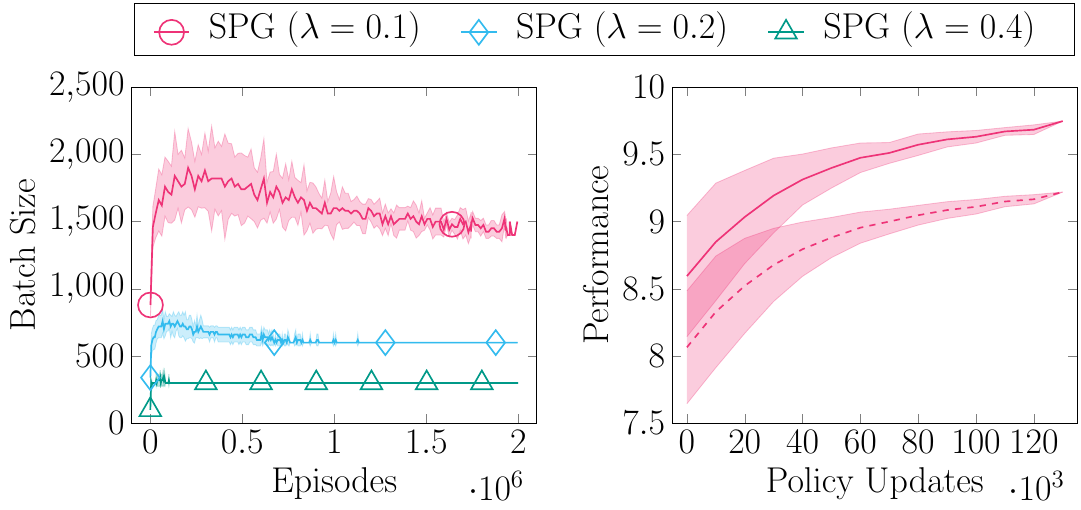}
	\caption{Further results for SPG on the cart-pole task. On the left, the batch size is plotted against the total number of trajectories. A marker corresponds to $1000$ policy updates. On the right, the performance at each policy update (solid line) is compared with the performance threshold (dashed line) when $\lambda=0.1$. In both plots, shaded areas correspond to one standard deviation.}
	\label{fig:cartpole2}
\end{figure}

Albeit the safety motivations are clearly of practical interest, our contribution is mostly theoretical. The meta-parameters proposed in Section~\ref{sec:metaparams} and used in SPG are based on worst-case problem-dependent constants that are known and easy to compute, but can be very large. This would lead to over-conservative behavior in most problems of interest. However, we believe that this work provides a solid starting point to develop safe and efficient policy gradient algorithms that are rooted in theory. 

To conclude, we propose some possible ideas for future work that are aimed to close this gap between theory and practice. While we used the empirical Bernstein bound to characterize the gradient estimation error for Softmax policies, the same cannot be done for Gaussian policies due to their unbounded score function. Tighter concentration inequalities should be studied for this case. 
The convergence rate of SPG should also be studied. The main challenge here is the growing batch size. The numerical simulations of section~\ref{sec:exp_lq} suggest the the growth is sublinear. Moreover, we have observed convergence to a fixed batch size under the weaker milestone constraint in Section~\ref{sec:exp_cartpole}. It is also worth to investigate whether SPG can be combined with stochastic variance-reduction techniques~\citep[\eg][]{papini2018stochastic,yuan2020stochastic}. Convergence to global optima should also be investigated, as is now common in the policy optimization literature~\citep{bhandari2019global,zhang2020sample,agarwal2020optimality}.
Actor-critic algorithms~\citep{konda1999actor} are more used than actor-only algorithms in practice~\cite[\eg][]{haarnoja2018soft} due to their reduced variance. Thus, extending our improvement guarantees to this class of algorithms is also important. The main challenge lies in handling the bias due to the critic. A promising first step is to consider \emph{compatible} critics that yield unbiased gradient estimates~\citep{sutton2000policy,konda1999actor}.
Although the class of smoothing policies is very broad, we have restricted our attention to Gaussian and Softmax policies with given features. Other policy classes, such as beta policies~\citep{chou2017improving} should be considered. Most importantly, \emph{deep} policies should be considered that also learn the features from data, especially given their success in practice~\citep{duan2018benchmarking}. See Appendix~\ref{app:deep} for a brief discussion.
Other possible extensions include generalizing the monotonic improvement guarantees to other concepts of safety, such as learning under constraints, or risk-averse RL~\citep{bisi2020risk}.
Finally, the conservative approach adopted in this work could prevent \emph{exploration}, making some tasks very hard to learn. We studied the case of Gaussian policies with adaptive standard deviation in~\citep{papini2020balancing}. Future work should consider the trade-off between safety, efficiency and exploration in greater generality.

\backmatter

%\bmhead{Supplementary information}
%
%If your article has accompanying supplementary file/s please state so here. 
%
%Authors reporting data from electrophoretic gels and blots should supply the full unprocessed scans for key as part of their Supplementary information. This may be requested by the editorial team/s if it is missing.
%
%Please refer to Journal-level guidance for any specific requirements.

\bmhead{Acknowledgments}
The authors would like to thank Gergely Neu for his suggestions on how to improve the Safe Policy Gradient algorithm and its theoretical analysis.

\clearpage

%\section*{Declarations}
%
%Some journals require declarations to be submitted in a standardised format. Please check the Instructions for Authors of the journal to which you are submitting to see if you need to complete this section. If yes, your manuscript must contain the following sections under the heading `Declarations':
%
%\begin{itemize}
%\item Funding
%\item Conflict of interest/Competing interests (check journal-specific guidelines for which heading to use)
%\item Ethics approval 
%\item Consent to participate
%\item Consent for publication
%\item Availability of data and materials
%\item Code availability 
%\item Authors' contributions
%\end{itemize}
%
%\noindent
%If any of the sections are not relevant to your manuscript, please include the heading and write `Not applicable' for that section. 
%
%%%===================================================%%
%%% For presentation purpose, we have included        %%
%%% \bigskip command. please ignore this.             %%
%%%===================================================%%
%\bigskip
%\begin{flushleft}%
%Editorial Policies for:
%
%\bigskip\noindent
%Springer journals and proceedings: \url{https://www.springer.com/gp/editorial-policies}
%
%\bigskip\noindent
%Nature Portfolio journals: \url{https://www.nature.com/nature-research/editorial-policies}
%
%\bigskip\noindent
%\textit{Scientific Reports}: \url{https://www.nature.com/srep/journal-policies/editorial-policies}
%
%\bigskip\noindent
%BMC journals: \url{https://www.biomedcentral.com/getpublished/editorial-policies}
%\end{flushleft}

\begin{appendices}

\section{Omitted Proofs}\label{app:proof}

\subsection{Markov Decision Processes}\label{app:mdp}
\begin{lemma}\label{th:dspi}
	For all $\pi:\Sspace\to\Delta_{\Aspace}$ and $s_0\in\Sspace$:
	\begin{equation*}
	\Stat^{\pi}_{s_0}(\cdot) = (1-\gamma) \indi{s=s_0} + \gamma \int_{\Sspace}\Stat^{\pi}_{s_0}(s)p(\cdot\vert s) \de s.
	\end{equation*}
\end{lemma}
\begin{proof}
	\begin{align*}
	\gamma \int_{\Sspace}\Stat^{\pi}_{s_0}(s)p(\cdot\vert s) \de s & = \gamma \int_{\Sspace}(1-\gamma)\sum_{t=0}^{\infty}\gamma^t\Tran_{\pi}^t(s\vert s_0)\Tran_{\pi}(\cdot\vert s) \de s \\
	&= \gamma(1-\gamma)\sum_{t=0}^{\infty}\gamma^{t}\int_{\Sspace}\Tran_{\pi}^t(s\vert s_0)\Tran_{\pi}(\cdot\vert s) \de s \\
	&= (1-\gamma)\sum_{t=0}^{\infty}\gamma^{t+1}\int_{\Sspace}\Tran_{\pi}^t(s\vert s_0)\Tran_{\pi}(\cdot\vert s) \de s \\
	&= (1-\gamma)\sum_{t=0}^{\infty}\gamma^{t+1}\Tran_{\pi}^{t+1}(\cdot\vert s_0) 
	\\& = (1-\gamma)\sum_{t=1}^{\infty}\gamma^{t}\Tran_{\pi}^{t}(\cdot\vert s_0)
	\\& = (1-\gamma)\sum_{t=0}^{\infty}\gamma^{t}\Tran_{\pi}^{t}(\cdot\vert s_0) - (1-\gamma) 
	\\& 
	= \Stat_{s_0}^{\pi}(\cdot) - (1-\gamma) \indi{s=s_0}.
	\end{align*}
\end{proof}

\eigenfunction*
\begin{proof}
	\begin{align}
	\int_{\Sspace}\Stat^{\pi}_{s}(s')g(s')\de s' 
	&= \int_{\Sspace}\Stat^{\pi}_{s}(s')f(s')\de s' - \int_{\Sspace}\Stat^{\pi}_{s}(s')\gamma\int_{\Sspace}\Tran_{\pi}(s''\vert s')f(s'')\de s''\de s' \nonumber\\
	&= \int_{\Sspace}\Stat^{\pi}_{s}(s')f(s')\de s' - \int_{\Sspace} \gamma\int_{\Sspace} \Stat_{s}^{\pi}(s')\Tran_{\pi}(s''\vert s')\de s' f(s'')\de s'' \nonumber\\
	&= \int_{\Sspace}\Stat^{\pi}_{s}(s')f(s')\de s' - \int_{\Sspace} \left(\Stat^{\pi}_{s}(s'') - (1-\gamma)\indi{s''=s}\right) f(s'')\de s'' \label{eq:c121} \\
	&= (1-\gamma) f(s),\nonumber
	\end{align}
	where~\eqref{eq:c121} is from Lemma~\ref{th:dspi}.
\end{proof}

\subsection{Lipschitz-Smooth Functions}\label{app:smooth}
The following results, reported in Section~\ref{sec:pre}, are well known in the literature~\citep{nesterov1998introductory}, but we also report proofs for the sake of completeness:

\mvi*
\begin{proof}
	Let $\vx,\vx'\in\Xspace$, $\vh\coloneqq\vx'-\vx$ and $g:[0,1]\to\Reals$, $g(\lambda)\equiv \nabla_{\vx}f(\vx+\lambda\vh)$. Convexity of $\Xspace$ guarantees $\vx+\lambda\vh\in \Xspace$ for $\lambda\in[0,1]$. 
	Twice-differentiability of $f$ implies $\nabla_{\vx}f$ is continuous, which in turn implies $g$ is continuous. From the Fundamental Theorem of Calculus:
	\begin{align}
	\nabla_{\vx}f(\vx') - \nabla_{\vx}f(\vx)
	&= \nabla_{\vx}f(\vx+\vh) - \nabla_{\vx}f(\vx)
	= g(1) - g(0) = \int_0^1g'(\lambda)\de\lambda\nonumber\\
	&= \int_0^1 \vh^{\top}\Hess_{\vx}f(\vx+\lambda\vh)\de\lambda.
	\end{align}
	Hence:
	\begin{align}
	\norm{\nabla_{\vx}f(\vx') - \nabla_{\vx}f(\vx)} 
	&= \norm[2]{\int_0^1 \vh^{\top}\Hess_{\vx}f(\vx+\lambda\vh)\de\lambda}\nonumber\\
	&\leq \int_0^1 \norm[2]{\Hess_{\vx}f(\vx+\lambda\vh)\vh}\de\lambda\nonumber\\
	&\leq \int_0^1 \norm[2]{\Hess_{\vx}f(\vx+\lambda\vh)}\norm[2]{\vh}\de\lambda\label{eq:2.1}\\
	&\leq L\norm[2]{\vh} = L\norm[2]{\vx'-\vx},
	\end{align}
	where (\ref{eq:2.1}) is from the consistency of induced norms, \ie $\norm[p]{A\vx}\leq\norm[p]{A}\norm[p]{\vx}$.
\end{proof}

\smooth*
\begin{proof}
	Let $\vx,\vx'\in\Xspace$, $\vh\coloneqq \vx'-\vx$ and $g: [0,1]\to\Reals$, $g(\lambda) \equiv f(\vx + \lambda \vh)$. Convexity of $\Xspace$ guarantees $\vx+\lambda\vh\in \Xspace$ for $\lambda\in[0,1]$. Lipschitz smoothness implies continuity of $f$, which in turn implies $g$ is continuous. From the Fundamental Theorem of Calculus:
	\begin{align}
	f(\vx') - f(\vx) = g(1) - g(0) = \int_{0}^{1}g'(\lambda)\de \lambda.
	\end{align}
	Hence:
	\begin{align}
	\left\vert f(\vx') - f(\vx) - \left\langle\vx'-
	\right.\right.&\left.\left.
	\vx,\nabla_{\vx}f(\vx)\right\rangle\right\vert
	= \left\vert\int_{0}^{1}g'(\lambda)\de \lambda  - \left\langle\vh,\nabla_{\vx}f(\vx)\right\rangle\right\vert \nonumber\\
	&= \left\vert\int_{0}^{1}
	\left\langle\vh, \nabla_{\vx}f(\vx+\lambda \vh)\right\rangle
	\de \lambda  - \left\langle\vh,\nabla_{\vx}f(\vx)\right\rangle\right\vert \nonumber\\
	&= \left\vert\int_{0}^{1}
	\left\langle\vh, \nabla_{\vx}f(\vx+\lambda \vh) - \nabla_{\vx}f(\vx)\right\rangle
	\de \lambda\right\vert \nonumber\\
	&\leq \int_{0}^{1}\left\vert
	\left\langle\vh, \nabla_{\vx}f(\vx+\lambda \vh) - \nabla_{\vx}f(\vx)\right\rangle
	\right\vert\de \lambda \nonumber\\
	&\leq \int_{0}^{1}
	\norm[2]{\nabla_{\vx}f(\vx+\lambda \vh) - \nabla_{\vx}f(\vx)}\norm[2]{\vh}
	\de \lambda \label{eq:3.4}\\
	&\leq L\norm[2]{\vh}^2\int_{0}^{1}\lambda
	\de \lambda \label{eq:3.5}\\
	&= \frac{L}{2}\norm[2]{\vx' - \vx}^2,\nonumber
	\end{align}
	where (\ref{eq:3.4}) is from the Cauchy-Schwartz inequality and (\ref{eq:3.5}) is from the Lipschitz smoothness of $f$.
\end{proof}

\subsection{Smoothing Policies and Differentiability}\label{sec:leibniz}
Our proofs of the results of Section~\ref{ssec:polhess} rely on the interchange of integrals (\wrt states and actions) and derivatives (\wrt policy parameters). In the policy gradient literature~\cite[\cf][]{sutton2000policy,konda1999actor,kakade2001optimizing}, these are typically justified by assuming the derivatives of the policy are bounded uniformly over states and actions, that is:
\begin{align}
	\left\vert\frac{\partial}{\partial\theta_i}\pi_{\vtheta}(a\vert s)\right\vert \le C_1, && \left\vert\frac{\partial^2}{\partial\theta_i\partial\theta_j}\pi_{\vtheta}(a\vert s)\right\vert \le C_2, \label{asm:uniform}
\end{align}
for all $s\in\Sspace$, $a\in\Aspace$, $\vtheta\in\Theta\subseteq\Reals^d$, and $i,j=1,2,\dots,d$. The policy gradient itself originally relies on this assumption~\citep{konda1999actor}, although weaker requirements are possible~\citep[see][Section 5.1, for a recent discussion]{bhandari2019global}. The main problem with~\eqref{asm:uniform} is that the uniform bounds may depend on huge quantities such as the diameter of the parameter space. Even worse, for (linear) Gaussian policies, the first derivative is unbounded:
\begin{equation}
	\nabla \pi_{\vtheta}(a\vert s) = \pi_{\vtheta}(a\vert s)\frac{a-\vtheta^\top\phi(s)}{\sigma^2}\phi(s),
\end{equation}
even when $\phi(s)$ is bounded, since $a\in\Aspace=\Reals$. However, these policies are smoothing (see Appendix~\ref{ssec:gauss}).

The following application of the Leibniz Integral Rule~\citep[\cf][Theorem 6.28]{klenke2013probability} shows that our smoothing-policy assumption (Definition~\ref{def:smooth}), can replace the stronger~\eqref{asm:uniform} in differentiating expectations:
\begin{lemma}\label{lem:leibniz}
Let	$\{\pi_{\vtheta}\vert\vtheta\in\Theta\}$, be a class of smoothing policies and $f:\Sspace\times\Aspace\to\Reals$ be any function such that $\sup_{a\in\Aspace}\vert f(s,a)\vert$ is integrable on $\Sspace$. Then $\int_{\Sspace}\int_{\Aspace} \pi_{\vtheta}(a\vert s)f(s,a)\de a\de s$ is twice differentiable and:
\begin{align}
	&\frac{\partial}{\partial\theta_i}\int_{\Sspace}\int_{\Aspace} \pi_{\vtheta}(a\vert s)f(s,a)\de a\de s= \int_{\Sspace}\int_{\Aspace} \frac{\partial}{\partial\theta_i}\pi_{\vtheta}(a\vert s)f(s,a)\de a\de s,\\
	&\frac{\partial^2}{\partial\theta_i\partial\theta_j}\int_{\Sspace}\int_{\Aspace} \pi_{\vtheta}(a\vert s)f(s,a)\de a\de s= \int_{\Sspace}\int_{\Aspace} \frac{\partial^2}{\partial\theta_i\partial\theta_j}\pi_{\vtheta}(a\vert s)f(s,a)\de a\de s,
\end{align}
for all $i,j=1,2,\dots,d$.
\end{lemma}
\begin{proof}
	Let $B_s=\sup_{a\in\Aspace}\vert f(s,a)\vert$ and fix an index $i\le d$. Let:
	\begin{equation}
		u_s(\vtheta) = \int_{\Aspace}\pi_{\vtheta}(a\vert s)f(s,a)\de a.
	\end{equation}
	By definition:
	\begin{equation}
		\frac{\partial}{\partial\theta_i}u_s(\vtheta) = \lim_{h\to 0}\frac{u(\vtheta + h\boldsymbol{e}_i) - u(\vtheta)}{h},
	\end{equation}
	where $\boldsymbol{e}_i$ is the element of the canonical basis of $\Reals^d$ corresponding to the $i$-th coordinate. By linearity of integration:
	\begin{equation}
		\frac{\partial}{\partial\theta_i}u_s(\vtheta) = \lim_{h\to 0}\int_\Aspace\underbrace{\frac{\pi_{\vtheta+h\boldsymbol{e}_i}(a\vert s)-\pi_{\vtheta}(a\vert s)}{h}f(s,a)}_{g_{\vtheta}(s,a)}\de a.\label{eq:dct}
	\end{equation}
	By assumption, $\pi_{\vtheta}(a\vert s)$ is differentiable, so it is continuous. Fix an $h\in\Reals$. By the mean value theorem, there exist a $\overline{\vtheta}$ on the segment connecting $\vtheta$ and $\vtheta+h\boldsymbol{e}_i$ such that:
	\begin{equation}
		\frac{\pi_{\vtheta+h\boldsymbol{e}_i}(a\vert s)-\pi_{\vtheta}(a\vert s)}{h} = \frac{\partial}{\partial\theta_i}\pi_{\vtheta}(a\vert s)\Bigg\vert_{\vtheta=\overline{\vtheta}}.
	\end{equation}
	Hence, by upper bounding the $l_\infty$ norm with the $l_2$ norm:
	\begin{equation}
		\left\vert g_{\vtheta}(s,a)\right\vert 
		\le B_s\norm{\nabla_{\overline{\vtheta}}\pi_{\overline{\vtheta}}(a\vert s)}.
	\end{equation}
	By the smoothing-policy assumption, $\Theta$ is convex, so $\overline{\vtheta}\in\Theta$, and again by the smoothing-policy assumption:
	\begin{equation}
	\int_{\Aspace}\norm{\nabla_{\overline{\vtheta}}\pi_{\overline{\vtheta}}(a\vert s)}\de a 
	\le \int_{\Aspace}\pi_{\overline{\vtheta}}(a\vert s)\norm{\nabla_{\overline{\vtheta}}\log\pi_{\overline{\vtheta}}(a\vert s)}\de a  \le \sm,\label{eq:bct}
	\end{equation}
	showing that $g_{\vtheta}(s,a)$ is bounded by a function that is integrable \wrt $a$. By the dominated convergence theorem, we can interchange the limit and the integral in~\eqref{eq:dct} to obtain:
	\begin{equation*}
		\frac{\partial}{\partial\theta_i}u_s(\vtheta) = \int_\Aspace\lim_{h\to 0}\frac{\pi_{\vtheta+h\boldsymbol{e}_i}(a\vert s)-\pi_{\vtheta}(a\vert s)}{h}f(s,a)\de a
		= \int_\Aspace\frac{\partial}{\partial\theta_i}\pi_{\vtheta}(a\vert s)f(s,a)\de a.
	\end{equation*}
	By~\eqref{eq:bct} and Holder's inequality, $\vert{\partial}/{\partial\theta_i}u_s(\vtheta)\vert \le B_s\sm$, which is integrable on $\Sspace$. We can then use the same interchange argument to show that:
	\begin{equation}
		\frac{\partial}{\partial\theta_i}\int_\Sspace u_s(\vtheta) \de s = \int_\Sspace \frac{\partial}{\partial\theta_i}u_s(\vtheta) \de s = \int_{\Sspace}\int_\Aspace\frac{\partial}{\partial\theta_i}\pi_{\vtheta}(a\vert s)f(s,a)\de a.
	\end{equation}
	
	For the second derivative, we can just repeat the whole argument from the previous paragraph on $\frac{\partial}{\partial\theta_i}u_s(\vtheta)$. Continuity of the integrand, which is necessary to apply the mean value theorem, follows from twice differentiability of the policy. To apply the dominated convergence theorem, we use the following:
	\begin{align*}
		&\int_{\Aspace}\norm{\nabla^2\pi_{\vtheta}(a\vert s)}\de a 
		=\int_{\Aspace}\pi_{\vtheta}(a\vert s)\norm{\nabla\log\pi_{\vtheta}(a\vert s)\nabla^\top\log\pi_{\vtheta}(a\vert s) + \nabla^2\log\pi_{\vtheta}(a\vert s)}\de a \\
		&\qquad\le \int_{\Aspace}\pi_{\vtheta}(a\vert s)\norm{\nabla\log\pi_{\vtheta}(a\vert s)}^2\de a + \int_{\Aspace}\pi_{\vtheta}(a\vert s)\norm{\nabla^2\log\pi_{\vtheta}(a\vert s)}\de a \\
		&\qquad\le \smm + \smmm,
	\end{align*}
	by the triangular inequality and the smoothing-policy assumption.
\end{proof}

With some work, one can use Lemma~\ref{lem:leibniz} to justify all the interchanges of differentiation and integrals from Section~\ref{ssec:polhess} and Appendix~\ref{app:pol}, as the original derivations~\citep{sutton2000policy,kakade2001optimizing} were justified by~\eqref{asm:uniform}.

\subsection{Policy Hessian}\label{app:pol}

In the following, the interchange of differentiation and integrals is justified by our smoothing-policy assumption. See Appendix~\ref{sec:leibniz} for details.

	\polhess*
	\begin{proof}
	We first compute the Hessian of the state-value function:
	\begin{align}
	\Hess V^{\vtheta}(s)
	&= \Hess\int_{\Aspace}\pi_{\vtheta}(a\vert s)Q^{\vtheta}(s,a)\de a\nonumber\\ 
	&=\int_{\Aspace}\nabla\left[ \pi_{\vtheta}(a\vert s)\left(\nabla^{\top}\log\pi_{\vtheta}(a\vert s)Q^{\vtheta}(s,a) +
	\nabla^{\top} Q^{\vtheta}(s,a)
	\right)\right]\de a \label{p:polhess.4}\\
	&=\int_{\Aspace} \pi_{\vtheta}(a\vert s)\left[(\Hess \log\pi_{\vtheta}(a\vert s) + \nabla\log\pi_{\vtheta}(a\vert s)\nabla^{\top}\log\pi_{\vtheta}(a\vert s))Q^{\vtheta}(s,a) 
	\nonumber\right.\\
	&\quad\left.+
	\nabla\log\pi_{\vtheta}(a\vert s)\nabla^{\top} Q^{\vtheta}(s,a) 
	+ \nabla Q^{\vtheta}(s,a)\nabla^{\top}\log\pi_{\vtheta}(a\vert s)
	+ \Hess Q^{\vtheta}(s,a)
	\right]\de a \label{p:polhess.5}\\
	&=\int_{\Aspace} \pi_{\vtheta}(a\vert s)\left[\phantom{\int}(\Hess \log\pi_{\vtheta}(a\vert s) + \nabla\log\pi_{\vtheta}(a\vert s)\nabla^{\top}\log\pi_{\vtheta}(a\vert s))Q^{\vtheta}(s,a) 
	\nonumber\right.\\
	&\quad\left.+
	\nabla\log\pi_{\vtheta}(a\vert s)\nabla^{\top} Q^{\vtheta}(s,a)
	+ \nabla Q^{\vtheta}(s,a)\nabla^{\top}\log\pi_{\vtheta}(a\vert s)
	\nonumber\right.\\
	&\quad\left.
	+ \Hess \left(\Rew(s,a)+\gamma\int_{\Sspace} \Tran(s'\vert s,a)V^{\vtheta}(s')\de s'\right)\right]\de a \label{p:polhess.6}\\
	&=\int_{\Aspace}\pi_{\vtheta}(a\vert s)\left[(\Hess \log\pi_{\vtheta}(a\vert s) + \nabla\log\pi_{\vtheta}(a\vert s)\nabla^{\top}\log\pi_{\vtheta}(a\vert s))Q^{\vtheta}(s,a) 
	\nonumber\right.\\
	&\quad\left.+
	\nabla\log\pi_{\vtheta}(a\vert s)\nabla^{\top} Q^{\vtheta}(s,a)
	+ \nabla Q^{\vtheta}(s,a)\nabla^{\top}\log\pi_{\vtheta}(a\vert s)\right]\de a
	\nonumber\\&\quad+ 
	\gamma\int_{\Sspace}p_{\vtheta}(s'\vert s)\Hess V^{\vtheta}(s')\de s'\nonumber\\
	&= g(s) + \frac{\gamma}{1-\gamma}\int_{\Sspace}\Stat_{s}^{\vtheta}(s')g(s')\de s',\label{p:polhess.3}
	\end{align}
	where 
	\begin{align*}
	g(s)&=\int_{\Aspace}\pi_{\vtheta}(a\vert s)\left[\left(
	\nabla\log\pi_{\vtheta}(a\vert s)\nabla^{\top} \log\pi_{\vtheta}(a\vert s) + \Hess \log\pi_{\vtheta}(a\vert s)\right)
	Q^{\vtheta}(s,a)\right.\\&\left.\qquad+\nabla \log\pi_{\vtheta}(a\vert s)\nabla^{\top} Q^{\vtheta}(s,a)
	+ \nabla Q^{\vtheta}(s,a)\nabla^{\top}\log\pi_{\vtheta}(a\vert s)\right]\de a,
	\end{align*}
	(\ref{p:polhess.4}) is from the log-derivative trick, (\ref{p:polhess.5}) is from another application of the log-derivative trick, (\ref{p:polhess.6}) is from (\ref{def:q}), and (\ref{p:polhess.3}) is from Lemma \ref{th:eigenfunction} with $\Hess V^{\vtheta}(s')$ as the recursive term.
	Computing the Hessian of the performance measure is then trivial:
	\begin{align}
	\Hess J(\vtheta) = \Hess \int_{\Sspace}\Init(s)V^{\vtheta}(s)\de s = \int_{\Sspace}\Init(s)\Hess V^{\vtheta}(s)\de s\label{p:polhess.7},
	\end{align}
	where the first equality is from (\ref{eq:defj}). Combining (\ref{p:polhess.3}), (\ref{p:polhess.7}) and (\ref{eq:dmupi}) we obtain the statement of the lemma.
\end{proof}

\subsection{Auxiliary Lemmas}
\begin{lemma}\label{lem:lambert}
	For any $a,b>0$ such that $ab>1$, a sufficient condition for $x\ge a\log(bx)$ is $x\ge2a\log(ab)$.
\end{lemma}
\begin{proof}
	This can be deduced from the properties of the Lambert function. However, it is easier to verify it directly. Letting $x=2a\log(ab)$, the first inequality becomes:
	\begin{align}
		2a\log(ab) \ge a\log(2ab\log(ab)) = a\log(ab) + a\log(2\log(ab)),
	\end{align}
	and $\log(2\log(y)) \le \log y$ for any $y>1$. Finally, notice that $x-a\log(bx)$ is increasing for $x>a$, and $2a\log(ab)>a$ for $ab>1$.
\end{proof}

\begin{lemma}[Optional Stopping]\label{lem:opt_stop}
	Let $(X_t)_{t\ge 1}$ be a $d$-dimensional vector-valued martingale difference sequence and $\tau$ be a stopping time, both with respect to a filtration $(\mathcal{F}_t)_{t\ge 0}$. If $\EV[\tau]<\infty$ and there exists $c\ge0$ such that $\EV[\norm{X_t}\vert \mathcal{F}_{t-1}] \le c$ for every $t\ge 1$, then $\EV[X_\tau]=0$.
\end{lemma}
\begin{proof}
	Consider any martingale $Y_t$ such that $X_t=Y_{t}-Y_{t-1}$. We are going to apply Doob's optional stopping theorem~\citep[See Thm 12.5.9 from][]{grimmett2020probability}\footnote{In the theorem, it is also required that $\Pro(\tau<\infty)=1$, but this is implied by $\EV[\tau]<\infty$ since $\tau$ is nonnegative.} to each element $Y_t^{(i)}$ of $Y_t$, where $i=1,\dots,d$. Sufficient conditions for $\EV[Y_{\tau}^{(i)}]=\EV[Y_0^{(i)}]$ are:
	\begin{enumerate}
		\item $\EV[\tau]<\infty$,
		\item $\EV[\vert Y^{(i)}_{t+1}-Y_t^{(i)}\vert\mid \mathcal{F}_{t}] \le c$ for all $t\ge 0$. 
	\end{enumerate}
	The first one is by hypothesis. For the second one:
	\begin{align}
		\max_{i\in[d]}\EV[\vert Y^{(i)}_{t+1}-Y_t^{(i)}\vert\mid \mathcal{F}_{t}] &= \max_{i\in[d]}\EV[\vert X_{t+1}^{(i)}\vert\mid \mathcal{F}_{t}] \nonumber\\&\le \EV[\norm[\infty]{X_{t+1}}\vert\mid \mathcal{F}_{t}] \nonumber\\&\le \EV[\norm[2]{X_t}\vert\mid \mathcal{F}_{t}] \le c,
	\end{align} 
	where the last inequality is by hypothesis. So, by optional stopping, $\EV[Y_{\tau}^{(i)}]=\EV[Y_0^{(i)}]$ for all $i\in[d]$. We can repeat the same argument for $\tau-1$. Hence $\EV[X_{\tau}]=\EV[Y_{\tau}]-\EV[Y_{\tau-1}] = \EV[Y_0]-\EV[Y_0]=0$.
\end{proof}

\section{Common Smoothing Policies}\label{sec:polclass}
In this section, we show that some of the most commonly used parametric policies are smoothing and provide the corresponding Lipschitz constants for the policy gradient. 

\subsection{Gaussian policy}\label{ssec:gauss}
Consider a scalar-action, fixed-variance, shallow Gaussian policy:\footnote{In this section, $\pi$ with no subscript always denotes the mathematical constant.}
\begin{align}\label{eq:gauss1}
\pi_{\vtheta}(a\vert s) = \Gauss\left(a\vert\vtheta^{\top}\vphi(s), \sigma^2\right) =  \frac{1}{\sqrt{2\pi}\sigma}\exp\left\{-\frac{1}{2}\left(\frac{a - \vtheta^{\top}\vphi(s)}{\sigma}\right)^2\right\},
\end{align}
where $\vtheta\in\Theta\subseteq\Reals^d$, $\sigma>0$ is the standard deviation, and $\vphi:\Sspace\to\Reals^d$ is a vector-valued feature function that is bounded in \emph{Euclidean norm}, \ie $\sup_{s\in\Sspace}\norm{\vphi(s)}<\infty$. This common policy turns out to be smoothing.
\begin{restatable}[]{lemma}{gauss}\label{lem:gauss}
	Let $\Pi_{\Theta}$ be the set of Gaussian policies defined in (\ref{eq:gauss1}), with parameter set $\Theta$, standard deviation $\sigma$ and feature function $\vphi$. Let $\phimax$ be a non-negative constant such that $\sup_{s\in\Sspace}\norm{\vphi(s)}\leq\phimax$. Then $\Pi_{\Theta}$ is $(\sm,\smm,\smmm)$-smoothing with the following constants:
	\begin{align*}
	&\sm = \frac{2\phimax}{\sqrt{2\pi}\sigma}, &\smm = \smmm = \frac{\phimax^2}{\sigma^2}.
	\end{align*}
	The corresponding Lipschitz constant of the policy gradient is:
	\begin{align}
	L = \frac{2\phimax^2\Rmax}{\sigma^2(1-\gamma)^2}\left(1+\frac{2\gamma}{\pi(1-\gamma)}\right).
	\end{align}
\end{restatable}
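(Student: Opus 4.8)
The plan is to differentiate the log-density explicitly and then evaluate the three smoothing quantities of Definition~\ref{def:smooth} using standard Gaussian moments. First I would write $\log\pi_{\vtheta}(a|s) = -\frac{1}{2}\left(\frac{a-\vtheta^T\vphi(s)}{\sigma}\right)^2 - \log(\sqrt{2\pi}\sigma)$ and differentiate twice with respect to $\vtheta$. This gives the score $\nabla\log\pi_{\vtheta}(a|s) = \sigma^{-2}\left(a-\vtheta^T\vphi(s)\right)\vphi(s)$ and the observed information $\Hess\log\pi_{\vtheta}(a|s) = -\sigma^{-2}\vphi(s)\vphi(s)^T$, which is \emph{deterministic} (independent of $a$). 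The crucial structural fact is that under $a\sim\pi_{\vtheta}(\cdot|s)$ the centered variable $a-\vtheta^T\vphi(s)$ is distributed as $\Gauss(0,\sigma^2)$ irrespective of $\vtheta$ and $s$; this cancellation of the mean is exactly what makes the bounding constants parameter- and state-independent, as required by the definition.

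Next I would bound each quantity. For \eqref{eq:sm1}, note $\norm{\nabla\log\pi_{\vtheta}(a|s)} = \sigma^{-2}\abs{a-\vtheta^T\vphi(s)}\,\norm{\vphi(s)}$ and take the expectation using the first absolute moment of a Gaussian, $\mathbb{E}\left[\abs{Z}\right] = \sigma\sqrt{2/\pi}$ for $Z\sim\Gauss(0,\sigma^2)$; invoking $\norm{\vphi(s)}\leq\phimax$ yields $\sm = \sqrt{2/\pi}\,\phimax/\sigma = 2\phimax/(\sqrt{2\pi}\sigma)$. For \eqref{eq:sm2}, the squared score has expectation $\sigma^{-4}\,\mathbb{E}\left[(a-\vtheta^T\vphi(s))^2\right]\norm{\vphi(s)}^2 = \sigma^{-2}\norm{\vphi(s)}^2 \leq \phimax^2/\sigma^2 = \smm$, using that the variance equals $\sigma^2$. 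For \eqref{eq:sm3}, since $\vphi(s)\vphi(s)^T$ is rank one with spectral norm $\norm{\vphi(s)}^2$, we get $\norm{\Hess\log\pi_{\vtheta}(a|s)} = \norm{\vphi(s)}^2/\sigma^2 \leq \phimax^2/\sigma^2 = \smmm$, and the expectation is trivial because the Hessian does not depend on $a$.

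With the three constants in hand I would simply substitute into the Lipschitz constant from Lemma~\ref{lem:smoothj}. Using $\sm^2 = 2\phimax^2/(\pi\sigma^2)$ and $\smm+\smmm = 2\phimax^2/\sigma^2$, the bracketed factor $\frac{2\gamma\sm^2}{1-\gamma}+\smm+\smmm$ collapses to $\frac{2\phimax^2}{\sigma^2}\left(1+\frac{2\gamma}{\pi(1-\gamma)}\right)$, and multiplying by $\Rmax/(1-\gamma)^2$ recovers the stated $L = \frac{2\phimax^2\Rmax}{\sigma^2(1-\gamma)^2}\left(1+\frac{2\gamma}{\pi(1-\gamma)}\right)$.

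I do not anticipate a serious obstacle: the entire argument is routine. The only points requiring a little care are recalling the half-normal mean $\sigma\sqrt{2/\pi}$ for the first moment in \eqref{eq:sm1}, recognizing that the spectral norm of the rank-one matrix $\vphi(s)\vphi(s)^T$ equals $\norm{\vphi(s)}^2$, and checking that all three bounds are genuinely uniform in $(\vtheta,s)$, which follows from the mean cancellation noted above.
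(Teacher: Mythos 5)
Your proposal is correct and follows essentially the same route as the paper: explicit differentiation of the Gaussian log-density, bounding the three smoothing quantities via the absolute and second moments of the standardized Gaussian (the paper does this through the substitution $x=(a-\vtheta^T\vphi(s))/\sigma$ and direct integration, which is equivalent to your use of the half-normal mean and the variance), and substitution into Lemma~\ref{lem:smoothj}. Your observation that the observed information is deterministic, so its expectation is trivial, is a slightly cleaner handling of the $\smmm$ term than the paper's integral, but the argument is otherwise identical.
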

\begin{proof}
	Fix a $\vtheta\in\Theta$. Let $x \equiv \frac{a-\vtheta^{\top}\vphi(s)}{\sigma}$. Note that $\Aspace = \Reals$ and $\de a = \sigma \de x$.
	We need the following derivatives:
	\begin{align}
	&\nabla \log \pi_{\vtheta}(a\vert s) = \frac{\vphi(s)}{\sigma}x,\\
	&\Hess \log\pi_{\vtheta}(a\vert s) = -\frac{\vphi(s)\vphi(s)^{\top}}{\sigma^2}.
	\end{align}
	
	First, we compute $\sm$:
	\begin{align}
	\EVV[a\sim\pi_{\vtheta}(\cdot\vert s)]{\norm{\nabla\log\pi_{\vtheta}(a\vert s)}}
	&= \int_{\Reals}\frac{1}{\sqrt{2\pi}\sigma}e^{- \nicefrac{x^2}{2}}
	\norm{\frac{\vphi(s)}{\sigma}x}\sigma \de x \nonumber\\
	&\leq\frac{\phimax}{\sqrt{2\pi}\sigma}\int_{\Reals}e^{- \nicefrac{x^2}{2}}\vert x\vert\de x \nonumber\\
	&= \frac{2\phimax}{\sqrt{2\pi}\sigma} \coloneqq \sm.
	\end{align}
	
	Then, we compute $\smm$:
	\begin{align}
	\EVV[a\sim\pi_{\vtheta}(\cdot\vert s)]{\norm{\nabla\log\pi_{\vtheta}(a\vert s)}^2}
	&=\int_{\Reals}\frac{1}{\sqrt{2\pi}\sigma}e^{- \nicefrac{x^2}{2}}
	\norm{\frac{\vphi(s)}{\sigma}x}^2\sigma \de x\nonumber\\
	&\leq\frac{\phimax^2}{\sqrt{2\pi}\sigma^2}\int_{\Reals}e^{- \nicefrac{x^2}{2}}x^2\de x\nonumber\\
	&= \frac{\phimax^2}{\sigma^2} \coloneqq \smm.
	\end{align}
	Finally, we compute $\smmm$:
	\begin{align}
	\EVV[a\sim\pi_{\vtheta}(\cdot\vert s)]{\norm{\Hess \log\pi_{\vtheta}(a\vert s)}}
	&=\int_{\Reals}\frac{1}{\sqrt{2\pi}\sigma}e^{- \nicefrac{x^2}{2}}
	\norm{\frac{\vphi(s)}{\sigma}x}^2\sigma \de x\nonumber\\
	&\leq\frac{\phimax^2}{\sqrt{2\pi}\sigma^2}\int_{\Reals}e^{- \nicefrac{x^2}{2}}x^2\de x\nonumber\\
	&= \frac{\phimax^2}{\sigma^2} \coloneqq \smmm.
	\end{align}
	From these constants, the Lipschitz constant of the policy gradient is easily computed (Lemma~\ref{lem:smoothj}).
\end{proof}

\subsection{Softmax policy}\label{ssec:gibbs}
Consider a fixed-temperature, shallow Softmax policy for a discrete action space:
\begin{align}\label{eq:gibbs}
\pi_{\vtheta}(a\vert s) = \frac{
	\exp\left\{\frac{\vtheta^{\top}\vphi(s,a)}{\tau}\right\}}{\sum_{a'\in\Aspace}
	\exp\left\{\frac{\vtheta^{\top}\vphi(s,a')}{\tau}\right\}},
\end{align}
where $\vtheta\in\Theta\subseteq\Reals^d$, $\tau>0$ is the temperature, and $\vphi:\Sspace\times\Aspace\to\Reals^d$ is a vector-valued feature function that is bounded in \emph{Euclidean norm}, \ie $\sup_{s\in\Sspace,a\in\Aspace}\norm{\vphi(s,a)}<\infty$.
This policy is smoothing.
\begin{restatable}[]{lemma}{gibbs}\label{lem:gibbs}
	Let $\Pi_{\Theta}$ be the set of Softmax policies defined in (\ref{eq:gibbs}), with parameter set $\Theta$, temperature $\tau$ and feature function $\vphi$. Let $\phimax$ be a non-negative constant such that $\sup_{s\in\Sspace,a\in\Aspace}\norm{\vphi(s,a)}\leq\phimax$. Then, $\Pi_{\Theta}$ is ($\sm,\smm,\smmm$)-smoothing with the following constants:
	\begin{align*}
	&\sm = \frac{2\phimax}{\tau}, &\smm =\frac{4\phimax^2}{\tau^2}, & &\smmm = \frac{2\phimax^2}{\tau^2}.
	\end{align*}
	The corresponding Lipschitz constant of the policy gradient is:
	\begin{align}
	L = \frac{2\phimax^2\Rmax}{\tau^2(1-\gamma)^2}\left(3+\frac{4\gamma}{1-\gamma}\right).
	\end{align}
\end{restatable}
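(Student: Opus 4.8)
The plan is to mirror the Gaussian case of Lemma~\ref{lem:gauss}: compute the score function and the observed information in closed form, bound the three quantities of Definition~\ref{def:smooth} uniformly over states and actions, and finally invoke Lemma~\ref{lem:smoothj}. First I would differentiate the log-policy. Writing $\bar{\vphi}(s) \coloneqq \EV_{a\sim\pi_{\vtheta}(\cdot|s)}[\vphi(s,a)]$ for the expected feature, the log-sum-exp derivative gives the score function
\[
\nabla\log\pi_{\vtheta}(a|s) = \frac{1}{\tau}\left(\vphi(s,a) - \bar{\vphi}(s)\right),
\]
\ie $\tau^{-1}$ times the centered feature. Differentiating once more, and using the log-trick $\nabla\pi_{\vtheta} = \pi_{\vtheta}\nabla\log\pi_{\vtheta}$ to account for the $\vtheta$-dependence of $\bar{\vphi}(s)$, I would obtain
\[
\Hess\log\pi_{\vtheta}(a|s) = -\frac{1}{\tau^2}\Cov_{a'\sim\pi_{\vtheta}(\cdot|s)}\left[\vphi(s,a')\right],
\]
which is notably independent of the action $a$, so the expectation in~\eqref{eq:sm3} is trivial.

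With these expressions, the bounds $\sm$ and $\smm$ follow from elementary norm inequalities. Since $\norm{\bar{\vphi}(s)} \leq \EV_{a}[\norm{\vphi(s,a)}] \leq \phimax$ by Jensen's inequality and the feature bound, the triangle inequality gives $\norm{\vphi(s,a) - \bar{\vphi}(s)} \leq 2\phimax$ for every state and action. Dividing by $\tau$ yields $\sm = 2\phimax/\tau$, and squaring yields $\smm = 4\phimax^2/\tau^2$; since both bounds are uniform, they pass unchanged through the expectations in~\eqref{eq:sm1} and~\eqref{eq:sm2}.

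The delicate step is the bound $\smmm$ on the spectral norm of the observed information, \ie of the feature covariance. I would decompose it as the second-moment matrix minus the outer product of the mean,
\[
\Cov_{a'\sim\pi_{\vtheta}(\cdot|s)}[\vphi(s,a')] = \EV_{a'}\left[\vphi(s,a')\vphi(s,a')^T\right] - \bar{\vphi}(s)\bar{\vphi}(s)^T,
\]
and bound each term separately in spectral norm. Using the identity $\norm{\vx\vx^T} = \norm{\vx}^2$ together with Jensen's inequality gives $\norm{\EV_{a'}[\vphi(s,a')\vphi(s,a')^T]} \leq \EV_{a'}[\norm{\vphi(s,a')}^2] \leq \phimax^2$, while $\norm{\bar{\vphi}(s)\bar{\vphi}(s)^T} = \norm{\bar{\vphi}(s)}^2 \leq \phimax^2$; the triangle inequality then bounds the spectral norm by $2\phimax^2$, hence $\smmm = 2\phimax^2/\tau^2$. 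I expect this to be the main obstacle, as it requires recognizing the observed information as a (negative) feature covariance and bounding the spectral norm of the resulting matrix. Finally, substituting $\sm = 2\phimax/\tau$, $\smm = 4\phimax^2/\tau^2$ and $\smmm = 2\phimax^2/\tau^2$ into Lemma~\ref{lem:smoothj} and simplifying the factor $\frac{2\gamma\sm^2}{1-\gamma} + \smm + \smmm = \frac{2\phimax^2}{\tau^2}\left(3 + \frac{4\gamma}{1-\gamma}\right)$ yields the stated $L = \frac{2\phimax^2\Rmax}{\tau^2(1-\gamma)^2}\left(3 + \frac{4\gamma}{1-\gamma}\right)$.
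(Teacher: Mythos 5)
Your proposal is correct and follows essentially the same route as the paper: compute the score and observed information in closed form, bound them uniformly over states and actions via the triangle inequality and $\norm{\vphi(s,a)}\leq\phimax$, and plug the resulting constants into Lemma~\ref{lem:smoothj}. The only cosmetic difference is in the $\smmm$ step, where the paper bounds $\frac{1}{\tau^2}\EV_{a'}\bigl[\norm{\vphi(s,a')}\,\norm{\bar{\vphi}(s)-\vphi(s,a')}\bigr]$ directly via the rank-one norm identity, while you split the covariance into its second-moment and mean-outer-product parts; both yield the same constant $2\phimax^2/\tau^2$.
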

\begin{proof}
	In this case, we can simply bound $\norm{\nabla\log\pi_{\vtheta}(a\vert s)}$ and $\norm{\Hess \log\pi_{\vtheta}(a\vert s)}$ uniformly over states and actions. The smoothing conditions follow trivially. We need the following derivatives:
	\begin{align}
	&\nabla\log\pi_{\vtheta}(a\vert s) =\frac{1}{\tau}\left(\vphi(s,a) - \EVV[a'\sim\pi_{\vtheta}(\cdot\vert s)]{\vphi(s,a')}\right),\\
	&\Hess \log\pi_{\vtheta}(a\vert s) = \frac{1}{\tau^2}\EVV[a'\sim\pi_{\vtheta}(\cdot\vert s)]{\vphi(s,a')\left(\EVV[a''\sim\pi_{\vtheta}(\cdot\vert s)]{\vphi(s,a'')} - \vphi(s,a')\right)^{\top}}.
	\end{align}
	First, we compute $\sm$ and $\smm$:
	\begin{align}
	\norm{\nabla\log\pi_{\vtheta}(a\vert s)}
	&\leq \frac{1}{\tau}\left(\norm{\vphi(s,a)} + \norm{\EVV[a'\sim\pi_{\vtheta}(\cdot\vert s)]{\vphi(s,a')}}\right)\nonumber\\
	&\leq \frac{2\phimax}{\tau},\label{eq:12.1}
	\end{align}
	hence $\sup_{s\in\Sspace}\EVV[a\sim\pi_{\vtheta}]{\norm{\nabla\log\pi_{\vtheta}(a\vert s)}} \leq \frac{2\phimax}{\tau} \coloneqq \sm$ and $\sup_{s\in\Sspace}\EVV[a\sim\pi_{\vtheta}]{\norm{\nabla\log\pi_{\vtheta}(a\vert s)}^2} \leq \frac{4\phimax^2}{\tau^2} \coloneqq \smm$.
	
	Finally, we compute $\smmm$:
	\begin{align}
	\norm{\Hess\log\pi_{\vtheta}(a\vert s)} 
	&\leq\frac{1}{\tau^2}\EV_{a'\sim\pi_{\vtheta}(\cdot\vert s)}\left[\norm{\vphi(s,a')\left(
		\EV_{a''\sim\pi_{\vtheta}(\cdot\vert s)}\left[\vphi(s,a'')\right] - \vphi(s,a')
		\right)^{\top}}\right] \nonumber\\
	&\leq\frac{1}{\tau^2}\EV_{a'\sim\pi_{\vtheta}(\cdot\vert s)}\left[
	\norm{\vphi(s,a')}\norm{\EV_{a''\sim\pi_{\vtheta}(\cdot\vert s)}\left[\vphi(s,a'') - \vphi(s,a')\right]}
	\right]\nonumber\\
	&\leq \frac{1}{\tau^2}\EV_{a'\sim\pi_{\vtheta}(\cdot\vert s)}\left[
	\norm{\vphi(s,a')}\EV_{a''\sim\pi_{\vtheta}(\cdot\vert s)}\left[\norm{\vphi(s,a'')} + \norm{\vphi(s,a')}\right]
	\right]\nonumber\\
	&\leq \frac{2\phimax^2}{\tau^2},\label{eq:12.2}
	\end{align}
	hence $\sup_{s\in\Sspace}\EVV[a\sim\pi_{\vtheta}]{\norm{\Hess\log\pi_{\vtheta}(a\vert s)}} \leq \frac{2\phimax^2}{\tau^2} \coloneqq \smmm$.
	From these constants, the Lipschitz constant of the policy gradient is easily computed (Lemma~\ref{lem:smoothj}).
\end{proof}
Note the similarity with the Gaussian constants from Lemma \ref{lem:gauss}. The temperature parameter $\tau$ plays a similar role to the standard deviation $\sigma$.

The smoothness constants for Gaussian and Softmax policies are summarized in Table~\ref{tab:smoothing}.

\subsection{Preliminary Results on Deep Policies}\label{app:deep}
The policies we have considered so far rely on given feature maps from state (and action) space to low-dimensional linear space. For many applications, a linear map is not expressive enough to represent good policies. Deep policies~\citep{duan2018benchmarking} use Neural Networks (NN) to extract more powerful representations from data. Here we provide a first analysis on how the properties of the NN affect the smoothing properties of the policy. 

As an example, consider a Gaussian policy with mean parametrized by a NN, that is:
\begin{equation}
	\pi_{\vtheta}(a\vert s) \sim \Gauss(a\vert \mu_{\vtheta}(s),\sigma^2),
\end{equation}
where $\mu_{\vtheta}:\Sspace\to\Aspace$ is a NN with weights $\vtheta$. The score function is then:
\begin{equation}
	\nabla_{\vtheta}\log\pi_{\vtheta}(a\vert s) = \frac{a-\mu_{\vtheta}(s)}{\sigma^2}\nabla_{\vtheta}\mu_{\vtheta}(s),
\end{equation}
and its second-order counterpart:
\begin{equation}
	\nabla_{\vtheta}\log\pi_{\vtheta}(a\vert s) = \frac{a-\mu_{\vtheta}(s)}{\sigma^2}\nabla^2_{\vtheta}\mu_{\vtheta}(s) - \frac{\nabla_{\vtheta}\mu_{\vtheta}(s)\nabla^\top_{\vtheta}\mu_{\vtheta}(s)}{\sigma^2}.
\end{equation}
For the policy to be smoothing, we need bounds on the gradient and Hessian of the NN \wrt its weights (in Euclidean and spectral norm, respectively), both uniformly over the state space. This may suggest the use of activation functions that are smooth and have bounded derivatives for any input, such as $\tanh$ or sigmoid activations. We shall study the impact of the network architecture on the smoothing constants in future work.

\section{Exponential Concentration of Policy Gradient Estimators}\label{app:bounded}
In this section, we provide exponential tail inequalities for REINFORCE and G(PO)MDP (see Section~\ref{sec:pre}) policy gradient estimators with policy classes of interest. 
For the G(PO)MDP estimator, it is useful to notice that it can be equivalently written as~\citep{sutton2000policy,peters2008reinforcement}:
\begin{align}\label{eq:pgt}
\hnabla J(\vtheta{;}\Dataset) = \frac{1}{N}\sum_{i=1}^{N}\sum_{t=0}^{T-1}\left[\nabla\log\pi_{\vtheta}(a_t^i\vert s_t^i)\sum_{h=t}^{T-1}\gamma^hR(a_h^i,s_h^i)\right],
\end{align}
just by reordering. For simplicity, we will consider estimators without variance-reducing baselines.

First, let us consider the case of a bounded score function:

\begin{lemma}\label{lem:bounded_score}
	Let $\norm{\nabla \log \pi_{\vtheta}(a\vert s)}\le W$ for all $\vtheta\in\Reals^d$, $s\in\Sspace$ and $a\in\Aspace$. Then, for any $\vtheta\in\Reals^d$, with probability $1-\delta$:
	\begin{equation}
		\norm{\hnabla J(\vtheta)-\nabla J(\vtheta)} \le 2WR_T\sqrt{\frac{2d\log(6/\delta)}{N}},
	\end{equation}
	where $R_T=\frac{\Rmax T(1-\gamma^{\top})}{1-\gamma}$ for REINFORCE and $R_T=\Rmax \frac{1-\gamma^{\top} - T(\gamma^{T}-\gamma^{T+1})}{(1-\gamma)^2}$ for~G(PO)MDP.
\end{lemma}
\begin{proof}
	For REINFORCE let:
	\begin{align}
		&\mathcal{R}_t(\tau) \coloneqq \sum_{h=0}^{T-1}\gamma^h r_h \le \frac{\Rmax(1-\gamma^{\top})}{1-\gamma} \coloneqq \overline{\mathcal{R}}_t \qquad \text{for all $t\ge 0$}, \\ 
		&R_T\coloneqq\sum_{t=0}^{T-1}\overline{\mathcal{R}}_t = \frac{\Rmax T(1-\gamma^{\top})}{1-\gamma}.
	\end{align}
	For G(PO)MDP, let:
	\begin{align}
		&\mathcal{R}_t(\tau) \coloneqq \sum_{h=t}^{T-1}\gamma^h r_h \le \frac{\Rmax(\gamma^t - \gamma^{\top})}{1-\gamma} \coloneqq \overline{\mathcal{R}}_t, \\
		 & R_T\coloneqq\sum_{t=0}^{T-1}\overline{\mathcal{R}}_t = \Rmax \frac{1-\gamma^{\top} - T(\gamma^{T}-\gamma^{T+1})}{(1-\gamma)^2}.
	\end{align}
	Let $S^{d-1}=\{v\in\Reals^d: \norm{v}=1\}$ be the unit sphere in $\Reals^d$. Fix a vector $v\in S^{d-1}$ and let $\hnabla J(\vtheta)$ denote the policy gradient estimate obtained from a single trajectory $\tau$ sampled from $p_{\vtheta}$. For both gradient estimators:
	\begin{align}
		\langle v, \hnabla J(\vtheta)\rangle &= \sum_{t=0}^{T-1}\langle v, \nabla \log\pi_{\vtheta}(a_t\vert s_t)\rangle \mathcal{R}_t(\tau)\\
		&\le  \sum_{t=0}^{T-1}\norm{\nabla\log\pi_{\vtheta}(a_t\vert s_t)}\mathcal{R}_t(\tau) \\
		&\le WR_T,\label{eq:sg_bound}
	\end{align}
	where the first inequality uses the fact that, for any $x\in\Reals^d$, $\norm{x} = \max_{v\in S^{d-1}}\langle v,x\rangle$.
	By linearity of expectation and unbiasedness of the gradient estimator, $\EV[\langle v,\hnabla J(\vtheta)\rangle] = \langle v, \nabla J(\vtheta)\rangle$. Hence, by~\eqref{eq:sg_bound} and Hoeffding's inequality, with probability $1-\delta_v$:
	\begin{align}
	\langle v,\hnabla J(\vtheta{;}\Dataset)-\nabla J(\vtheta)\rangle \le WR_T\sqrt{\frac{2\log(1/\delta_v)}{N}},
	\end{align}
	where $N=\vert\Dataset\vert$. To turn this into a bound on the Euclidean norm, we need a covering argument. For arbitrary $\eta>0$, consider an $\eta$-cover $C_{\eta}$ of $S^{d-1}$, that is:
	\begin{equation}\label{eq:cover}
		\max_{v\in S^{d-1}, w\in C_{\eta}}\norm{v-w} \le \eta.
	\end{equation}
	It is a well known result that a finite cover $C_\eta$ exists such that $\vert C_{\eta}\vert\le (3/\eta)^d$. Then, with probability $1-\delta$:
	\begin{align}
	\norm{\hnabla J(\vtheta)-\nabla J(\vtheta)} &=\max_{v\in S^{d-1}}\langle v,\hnabla J(\vtheta{;}\Dataset)-\nabla J(\vtheta)\rangle \\
	&\le \max_{v\in C_\eta}\langle v,\hnabla J(\vtheta{;}\Dataset)-\nabla J(\vtheta)\rangle + \eta\norm{\hnabla J(\vtheta)-\nabla J(\vtheta)} \\
	&\le WR_T\sqrt{\frac{2\log(\vert C_\eta\vert/\delta)}{N}} + \eta\norm{\hnabla J(\vtheta)-\nabla J(\vtheta)} \\
	&\le WR_T\sqrt{\frac{2d\log\left(\frac{3}{\eta\delta}\right)}{N}} + \eta\norm{\hnabla J(\vtheta)-\nabla J(\vtheta)},
	\end{align}
	where the first inequality is by Cauchy-Schwarz inequality and definition of $C_\eta$, the second one is by union bound over the finite elements of $C_\eta$, and the last inequality uses the covering number of the sphere in $\Reals^d$. Finally, by letting $\eta=1/2$:
	\begin{equation}
		\norm{\hnabla J(\vtheta)-\nabla J(\vtheta)} \le \frac{WR_T}{1-\eta}\sqrt{\frac{2d\log\left(\frac{3}{\eta\delta}\right)}{N}} = 2WR_T\sqrt{\frac{2d\log(6/\delta)}{N}}.
	\end{equation}
\end{proof}

The Softmax policy described in Appendix~\ref{ssec:gibbs} satisfies the assumption of Lemma~\ref{lem:bounded_score} with $W=\frac{2M}{\tau}$ where $M$ is an upper bound on $\norm{\phi(s)}$, as shown in the proof of Lemma~\ref{lem:gibbs}.

Unfortunately, the Gaussian policy class from Appendix~\ref{ssec:gauss} is not covered by Lemma~\ref{lem:bounded_score}, since its score function is unbounded. Motivated by the broad use of Gaussian policies in applications, we provide an ad-hoc bound for this class:

\begin{lemma}\label{lem:sg_score}
	Let $\Pi_{\Theta}$ be the class of shallow Gaussian policies from Lemma~\ref{lem:gauss}. Then, for any $\vtheta\in\Theta$, with probability $1-\delta$:
		\begin{equation}
	\norm{\hnabla J(\vtheta)-\nabla J(\vtheta)} \le \frac{4MR_T}{\sigma}\sqrt{\frac{14d\log(6/\delta)}{N}},
	\end{equation}
	where $R_T=\frac{\Rmax T(1-\gamma^{\top})}{1-\gamma}$ for REINFORCE and $R_T=\Rmax \frac{1-\gamma^{\top} - T(\gamma^{T}-\gamma^{T+1})}{(1-\gamma)^2}$ for~G(PO)MDP.
\end{lemma}
\begin{proof}
	Let $\mathcal{R}_t(\tau)$, $\overline{\mathcal{R}}_t$, and $R_T$ be defined (differently for the two gradients estimators) as in the proof of Lemma~\ref{lem:bounded_score}. Again, let $S^{d-1}$ be the unit sphere in $\Reals^d$ , fix a vector $v\in S^{d-1}$ and let $\hnabla J(\vtheta)$ denote the policy gradient estimate obtained from a single trajectory $\tau$ sampled from $p_{\vtheta}$. Consider the filtration $(\mathcal{F}_t)_{t=0}^{T-1}$ where $\mathcal{F}_t = \sigma(s_0,a_0,\dots,s_{t})$ is the sigma-algebra representing all the knowledge up to the $t$-th state included. Conditional on $s_t$, $a_t\sim\mathcal{N}(\vtheta^\top\phi(s),\sigma^2)$. Hence, conditionally on $\mathcal{F}_t$:
	\begin{align}
		\langle v, \nabla \log \pi_{\vtheta}(a_t\vert s_t)\rangle = \frac{a_t-\vtheta^\top\phi(s_t)}{\sigma^2}\langle v,\phi(s_t)\rangle \sim \mathcal{N}\left(0, \frac{\langle v,\phi(s_t)\rangle^2}{\sigma^2}\right).\label{eq:gaussian_score}
	\end{align}
	Let $X_t=\langle v, \nabla \log \pi_{\vtheta}(a_t\vert s_t)\rangle$ for brevity. Since $X_t$ is $\mathcal{F}_t$-measurable and $\EV[X_t\vert \mathcal{F}_{t-1}]=0$, $(X_t)_t$ is a martingale difference sequence adapted to $(\mathcal{F}_t)_t$. Furthermore,~\eqref{eq:gaussian_score} shows that, for any $\lambda>0$:
	\begin{equation}
		\EV[\exp(\lambda X_t)\vert \mathcal{F}_t] = \exp\left(\frac{\lambda\langle v,\phi(s_t)\rangle^2}{2\sigma^2}\right) \le \exp\left(\frac{\lambda\norm{\phi(s_t)}^2}{2\sigma^2}\right) \le \exp\left(\frac{\lambda M^2}{2\sigma^2}\right),
	\end{equation}
	where the first inequality is by $\norm{x}=\max_{v\in S^{d-1}}\langle v,x\rangle$ for any $x\in\Reals^d$. Hence, $X_t$ is conditionally $(M/\sigma)$-subgaussian and $\overline{\mathcal{R}}_tX_t$ is conditionally $(\overline{\mathcal{R}}_tM/\sigma)$-subgaussian. Using Azuma's inequality, for any $b>0$:\footnote{We use the version by~\citet{shamir2011variant}.}
	\begin{align}
		\Pro\left(\langle v,\hnabla J(\vtheta)\rangle>b\right) &= \Pro\left(\sum_{t=0}^{T-1}X_t\mathcal{R}_t(\tau)>b\right) \\
		&\le \Pro\left(\sum_{t=0}^{T-1}X_t\overline{\mathcal{R}}_t>b\right) \\
		&\le \exp\left(-\frac{\sigma^2b^2}{56M^2\sum_{t=0}^{T-1}\overline{\mathcal{R}}_t^2}\right)\\
		&\le \exp\left(-\frac{\sigma^2b^2}{56M^2R_T^2}\right),
	\end{align}
	showing that $\langle v,\hnabla J(\vtheta)\rangle$ is $\sqrt{28}\phimax R_T/\sigma$-subgaussian. From this and $\EV[\langle v,\hnabla J(\vtheta{;}\Dataset)\rangle]=\langle v,\nabla J(\vtheta)\rangle$, using Hoeffding's inequality for averages of i.i.d. subgaussian random variables:
		\begin{align}
	\langle v,\hnabla J(\vtheta{;}\Dataset)-\nabla J(\vtheta)\rangle \le \frac{MR_T}{\sigma}\sqrt{\frac{56\log(1/\delta_v)}{N}},
	\end{align}
	with probability $1-\delta_v$.
	Finally, using the same covering argument as in the proof of Lemma~\ref{lem:bounded_score}, with probability $1-\delta$:
	\begin{equation}
	\norm{\hnabla J(\vtheta)-\nabla J(\vtheta)} \le \frac{2MR_T}{\sigma}\sqrt{\frac{56d\log(6/\delta)}{N}}.
	\end{equation}
\end{proof}

The values of $\epsilon(\delta)$ for Gaussian and Softmax policies are summarized in Table~\ref{tab:err}.

\subsection{Empirical Bernstein Bound}
For bounded-score policies (such as the Softmax), we can improve Lemma~\ref{lem:bounded_score} using an empirical Bernstein inequality~\citep{maurer2009empirical}:
\begin{lemma}\label{lem:empirical_bernstein}
	Let $\norm{\nabla \log \pi_{\vtheta}(a\vert s)}\le W$ for all $\vtheta\in\Reals^d$, $s\in\Sspace$ and $a\in\Aspace$. Then, for any $\vtheta\in\Reals^d$, with probability $1-\delta$:
	\begin{equation}
		\norm{\hnabla J(\vtheta;\Dataset)-\nabla J(\vtheta)} \le \sqrt{\frac{8d\widehat{V}\log(12/\delta)}{N}}+ \frac{14dWR_T\log(6/\delta)}{3(N-1)}.
	\end{equation}
	where $\hat{V}=\frac{1}{N-1}\sum_{i=1}^{N}\norm{\hnabla J(\vtheta;\tau_i)-\hnabla J(\vtheta;\Dataset)}^2$, and $R_T$ is defined as in Lemma~\ref{lem:bounded_score}.
\end{lemma}
\begin{proof}
	Recall that $\Dataset=\{\tau_1,\dots,\tau_N\}$ is a set of trajectories sampled independently from $p_{\vtheta}$.
	Let $\hnabla J(\vtheta;\tau_i)$ denote the policy gradient estimate obtained from trajectory $\tau_i$, and recall $J(\vtheta{;}\Dataset)=\frac{1}{N}\sum_{i=1}^{N}\hnabla J(\vtheta;\tau_i)$ denotes the sample mean.
	Fix a vector $v\in S^{d-1}$, the unit sphere in $\Reals^d$, and let $X_i=\langle v, \hnabla J(\vtheta;\tau_i)\rangle$ for short. Then, as shown in~\eqref{eq:sg_bound}:
	\begin{align}
		\vert X_i\vert\le WR_T,\label{eq:sg_bound_bis}
	\end{align}
	and $\EV[X_i] = \langle v, \nabla J(\vtheta)\rangle$. Moreover, $(X_i)_{i=1}^N$ are \iid (conditionally on $\vtheta$, which is fixed in this case). By Theorem 4 from~\citep{maurer2009empirical}, with probability $1-\delta_v$:
	\begin{align}
		\langle v,\hnabla J(\vtheta{;}\Dataset)-\nabla J(\vtheta)\rangle \le \sqrt{\frac{2\widehat{V}_v\log(2/\delta_v)}{N}} + \frac{7WR_T\log(2/\delta_v)}{3(N-1)},
	\end{align}
	where  and $\widehat{V}_v$ is the (unbiased) sample variance of the $(X_i)_{i=1}^N$:
	\begin{align}
		\widehat{V}_v &= \frac{1}{N-1}\sum_{i=1}^{N}\left\langle v, \hnabla J(\vtheta;\tau_i)-\hnabla J(\vtheta;\Dataset)\right\rangle^2 \\
		&\le \frac{1}{N-1}\sum_{i=1}^{N}\norm{\hnabla J(\vtheta;\tau_i)-\hnabla J(\vtheta;\Dataset)}^2 \coloneqq \hat{V},
	\end{align}
	where the inequality is by Cauchy-Schwarz and  $\norm{v}=1$. Since $\hat{V}$ does not depend on $v$, we can use the same covering argument as in the proof of Lemma~\ref{lem:bounded_score} to obtain the desired result.
\end{proof}
To use this concentration inequality in SPG, Algorithm~\ref{algo:safepg} must be modified, as discussed in Appendix~\ref{app:relax}.

\subsection{Infinite-Horizon Estimators}\label{app:geom}
To obtain an unbiased estimate of the gradient for the original infinite-horizon performance measure considered in the paper, we can modify our simulation protocol as suggested in~\citep[][]{bedi2021sample}. Consider a random-horizon G(PO)MDP estimator that, for each episode:
\begin{enumerate}
	\item Samples a random horizon $T\sim \mathrm{Geom}(1-\gamma^{t/2})$ from a geometric distribution;
	\item Generates a trajectory $\tau$ of length $T$ with the current policy $\pi_{\vtheta}$;
	\item Outputs $\hnabla J(\vtheta;\tau,T)=\sum_{t=0}^{T-1}\left(\gamma^{t/2}\Rew(a_t^i,s_t^i)\sum_{h=0}^{t}\nabla\log\pi_{\vtheta}(a_h^i\vert s_h^i)\right)$.
\end{enumerate}
The result can be averaged over a batch of independent trajectories, each with its own independently sampled random length. This policy gradient estimator is unbiased~\citep[][Lemma 1]{bedi2021sample}. The random horizon should be accounted for in the concentration bounds of Lemma~\ref{lem:bounded_score},~\ref{lem:sg_score}, and~\ref{lem:empirical_bernstein}. However, note that the term $R_T$, for the G(PO)MDP estimator, is bounded as follows:
\begin{equation}
	R_T=\Rmax \frac{1-\gamma^{\top} - T(\gamma^{T}-\gamma^{T+1})}{(1-\gamma)^2} \le \frac{\Rmax}{(1-\gamma)^2},
\end{equation}
for any $T\ge 0$. Hence, Lemma~\ref{lem:bounded_score},~\ref{lem:sg_score}, and~\ref{lem:empirical_bernstein} all hold for the random-horizon estimator with $R_T=\Rmax/(1-\gamma^{1/2})^2$. The corresponding error bounds are reported in Table~\ref{tab:err}. We leave a more refined analysis of the variance and tail behavior of this random-horizon estimator to future work.

\section{Variance of Policy Gradient Estimators}\label{sec:var}
In this section, we provide upper bounds on the variance of the (finite-horizon) \textsc{REINFORCE} and \textsc{G(PO)MDP} estimators, generalizing existing results for Gaussian policies~\citep{zhao2011analysis,pirotta2013adaptive} to smoothing policies.
We begin by bounding the variance of the \textsc{REINFORCE} estimator:
\begin{restatable}{lemma}{reinforcevar}\label{lem:reinforcevar}
	Given a $(\sm,\smm,\smmm)$-smoothing policy class $\Pi_{\Theta}$ and an effective task horizon $T$, for every $\vtheta\in\Theta$, the variance of the \textsc{REINFORCE} estimator (with zero baseline) is upper-bounded as follows:
	\begin{align}
	\Var\left[\hnabla J(\vtheta{;} \Dataset)\right] \leq \frac{T\smm\Rmax^2(1-\gamma^{\top})^2}{N(1-\gamma)^2}.
	\end{align}
\end{restatable}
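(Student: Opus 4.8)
The plan is to exploit the i.i.d.\ structure of the batch and then reduce everything to a second-moment computation on a single trajectory. Since $\Dataset$ consists of $N$ independent trajectories and the estimator is their empirical average, the variance scales as $\Var[\hnabla_N J(\vtheta)] = \frac{1}{N}\Var[\hnabla_1 J(\vtheta)]$, where $\hnabla_1 J(\vtheta) = \left(\sum_{t=0}^{T-1}\gamma^t R(a_t,s_t)\right)\left(\sum_{t=0}^{T-1}\nabla\log\pi_\vtheta(a_t|s_t)\right)$ is the single-trajectory estimator. It thus suffices to bound $\Var[\hnabla_1 J(\vtheta)]$, and for this I would use $\Var[\vv] = \EV[\norm{\vv}^2] - \norm{\EV[\vv]}^2 \leq \EV[\norm{\vv}^2]$, discarding the non-negative squared-mean term.

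Next I would separate the two factors. The discounted return is bounded deterministically by the maximum reward, $\left|\sum_{t=0}^{T-1}\gamma^t R(a_t,s_t)\right| \leq \Rmax \sum_{t=0}^{T-1}\gamma^t = \Rmax\frac{1-\gamma^T}{1-\gamma}$, so its square pulls out of the expectation as the factor $\Rmax^2(1-\gamma^T)^2/(1-\gamma)^2$. This leaves the problem of bounding $\EV\left[\norm{\sum_{t=0}^{T-1}\nabla\log\pi_\vtheta(a_t|s_t)}^2\right]$, the expected squared norm of the summed score functions.

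The core of the argument, and the step I expect to be the main obstacle, is expanding this squared norm into $\sum_{t,t'}\langle\nabla\log\pi_\vtheta(a_t|s_t), \nabla\log\pi_\vtheta(a_{t'}|s_{t'})\rangle$ and showing that all off-diagonal terms vanish in expectation. This relies on the martingale property of the score function: for every state $s$, $\EV_{a\sim\pi_\vtheta(\cdot|s)}[\nabla\log\pi_\vtheta(a|s)] = \int \de\pi_\vtheta(a|s)\,\nabla\log\pi_\vtheta(a|s) = \nabla\int\de\pi_\vtheta(a|s) = 0$. For $t' > t$, conditioning on the trajectory history up to $s_{t'}$ makes $\nabla\log\pi_\vtheta(a_t|s_t)$ deterministic, and taking the inner conditional expectation over $a_{t'}$ annihilates the cross term; symmetry handles $t' < t$. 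Hence only the $T$ diagonal terms survive.

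Finally, I would bound each surviving diagonal term using the smoothing assumption~\eqref{eq:sm2}: conditioning on $s_t$, the bound $\EV_{a_t\sim\pi_\vtheta(\cdot|s_t)}[\norm{\nabla\log\pi_\vtheta(a_t|s_t)}^2] \leq \smm$ holds uniformly in the state, so taking the outer expectation over the state marginal preserves it, giving $\EV\left[\norm{\sum_{t=0}^{T-1}\nabla\log\pi_\vtheta(a_t|s_t)}^2\right] \leq T\smm$. Multiplying by the return factor and dividing by $N$ yields exactly $\frac{T\smm\Rmax^2(1-\gamma^T)^2}{N(1-\gamma)^2}$. The only delicate bookkeeping is ensuring that the smoothing bounds, stated as suprema over states of conditional expectations over actions, compose correctly with the outer expectation over the trajectory's state-visitation process.
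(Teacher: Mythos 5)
Your proposal is correct and follows essentially the same route as the paper's proof: the i.i.d.\ reduction to a single trajectory, bounding the variance by the second moment, factoring out the deterministic return bound $\Rmax(1-\gamma^T)/(1-\gamma)$, killing the off-diagonal score cross-terms via the zero-mean property of the score under nested conditional expectations, and bounding each diagonal term by $\smm$ using condition~(\ref{eq:sm2}). The ``delicate bookkeeping'' you flag at the end is resolved exactly as you suggest, by writing the trajectory expectation as iterated conditional expectations so that the state-uniform bound $\smm$ applies at each step.
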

\begin{proof}
	Let $g_{\vtheta}(\tau) \coloneqq \left(\sum_{t=0}^{T-1}\gamma^t\Rew(a_t,s_t)\right)\left(\sum_{t=0}^{T-1}\nabla\log\pi_{\vtheta}(a_t\vert s_t)\right)$ with $s_t,a_t\in\tau$ for $t=0,\dots,T-1$.
	Using the definition of \textsc{REINFORCE}~(\ref{eq:reinforce}) with $b=0$:
	\begin{align}
	\Var_{\Dataset\sim p_{\vtheta}}&\left[\hnabla J(\vtheta{;} \Dataset)\right] 
	=\frac{1}{N}\Var_{\tau\sim p_{\vtheta}}\left[g_{\vtheta}(\tau)\right] \nonumber\\
	&\leq \frac{1}{N}\EV_{\tau\sim p_{\vtheta}}\left[\norm{g_{\vtheta}(\tau)}^2\right]\nonumber\\
	&\leq\frac{\Rmax^2(1-\gamma^{\top})^2}{N(1-\gamma)^2}\EV_{\tau\sim p_{\vtheta}}\left[\norm{\sum_{t=0}^{T-1}\nabla\log\pi_{\vtheta}(a_t\vert s_t)}^2\right]\nonumber\\
	&\leq \frac{\Rmax^2(1-\gamma^{\top})^2}{N(1-\gamma)^2}\sum_{i=1}^{m}\EV_{\tau\sim p_{\vtheta}}\left[\sum_{t=0}^{T-1}\left(D_i\log\pi_{\vtheta}(a_t\vert s_t)\right)^2 \right.\nonumber\\&\qquad+\left. 2\sum_{t=0}^{T-2}\sum_{h=t+1}^{T-1}D_i\log\pi_{\vtheta}(a_t\vert s_t)D_i\log\pi_{\vtheta}(a_h\vert s_h)\right] \nonumber\\
	& = \frac{\Rmax^2(1-\gamma^{\top})^2}{N(1-\gamma)^2}\EV_{\tau\sim p_{\vtheta}}\left[\sum_{t=0}^{T-1}\norm{\nabla\log\pi_{\vtheta}(a_t\vert s_t)}^2 \right] \label{eq:9.1}\\
	& = \frac{\Rmax^2(1-\gamma^{\top})^2}{N(1-\gamma)^2}\sum_{t=0}^{T-1}\EVV[s_0\sim\Init]{\dots
		\EV_{a_t\sim\pi_{\vtheta}(\cdot\vert s_t)}\left[\norm{\nabla\log\pi_{\vtheta}(a_t\vert s_t)}^2\,\middle\vert\, s_t\right]\dots}\nonumber\\
	&\leq \frac{T\smm\Rmax^2(1-\gamma^{\top})^2}{N(1-\gamma)^2},
	\end{align}
	where (\ref{eq:9.1}) is from the following:
	\begin{align}
	&\EVV[\tau\sim p_{\vtheta}]{\sum_{t=0}^{T-2}\sum_{h=t+1}^{T-1}D_i\log\pi_{\vtheta}(a_t\vert s_t)D_i\log\pi_{\vtheta}(a_h\vert s_h)} \nonumber\\
	&\quad=\sum_{t=0}^{T-2}\EV_{s_0\sim\Init}\left[\dots\EV_{a_t\sim\pi_{\vtheta}(\cdot\vert s_t)}\left[
	D_i\log\pi_{\vtheta}(a_t\vert s_t)
	\right.\right.\\&\quad\quad\left.\left.
	\sum_{h=t+1}^{T-1}\EV_{s_{t+1}\sim p(\cdot\vert s_t,a_t)}\left[
	\dots
	\EV_{a_h\sim\pi_{\vtheta}(\cdot\vert s_h)}\left[D_i\log\pi_{\vtheta}(a_h\vert s_h)\,\middle\vert\,s_h\right]
	\dots \,\middle\vert\,a_t
	\right]\,\middle\vert\,s_t\right]
	\dots\right]\nonumber\\
	&\quad= 0,
	\end{align}
	where the last equality is from $\EVV[a_h\sim\pi_{\vtheta}(\cdot\vert s_h)]{D_i\log\pi_{\vtheta}(a_h\vert s_h)} = 0$.
\end{proof}
This is a generalization of Lemma 5.3 from~\cite{pirotta2013adaptive}, which in turn is an adaptation of Theorem 2 from~\cite{zhao2011analysis}. In the Gaussian case, the original lemma is recovered by plugging the smoothing constant $\smm=\frac{\phimax^2}{\sigma^2}$ from Lemma~\ref{lem:gauss}. Note also that, from the definition of smoothing policy, only the second condition (\ref{eq:sm2}) is actually necessary for Lemma~\ref{lem:reinforcevar} to hold.

For the \textsc{G(PO)MDP} estimator, we obtain an upper bound that does not grow linearly with the horizon $T$:
\begin{restatable}{lemma}{gmdpvar}\label{lem:gmdpvar}
	Given a $(\sm,\smm,\smmm)$-smoothing policy class $\Pi_{\Theta}$ and an effective task horizon $T$, for every $\vtheta\in\Theta$, the variance of the \textsc{G(PO)MDP} estimator (with zero baseline) is upper-bounded as follows:
	\begin{align}
	\Var\left[\hnabla J(\vtheta{;} \Dataset)\right] \leq \frac{\smm\Rmax^2 \Big( 1-\gamma^{\top} \Big)}{N(1-\gamma)^3}.
	\end{align}
	%    Moreover, for any horizon $T$: 
	%	\begin{align}
	%		\Var\left[\hnabla J(\vtheta{;} \Dataset)\right] \leq \frac{\smm\Rmax^2}{N(1-\gamma)^3}.
	%	\end{align}
\end{restatable}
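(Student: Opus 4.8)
The plan is to follow exactly the template of the REINFORCE variance proof (Lemma~\ref{lem:reinforcevar}): reduce to a single trajectory, bound the variance by the raw second moment, and then exploit the zero-mean property of the score function to annihilate the cross terms. The one genuinely new ingredient — and the reason the horizon factor $T$ does \emph{not} reappear as in the REINFORCE bound — is a weighted Cauchy--Schwarz step that \emph{separates} the bounded rewards from the cumulative score \emph{before} taking expectations, so that the martingale cancellation can still be applied to the score part in isolation.

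First I would use the i.i.d. structure of the batch. Writing the single-trajectory estimator as $g_{\vtheta}(\tau) = \sum_{t=0}^{T-1}\gamma^t\Rew(a_t,s_t)\,\vzeta_t$ with the partial score sum $\vzeta_t \coloneqq \sum_{h=0}^t\nabla\log\pi_{\vtheta}(a_h|s_h)$, the GPOMDP estimator (zero baseline) is an empirical mean of $N$ independent copies of $g_{\vtheta}(\tau)$, so $\Var\left[\hnabla_N J(\vtheta)\right] = \frac{1}{N}\Var_{\tau\sim p_{\vtheta}}\left[g_{\vtheta}(\tau)\right] \le \frac{1}{N}\EVV[\tau\sim p_{\vtheta}]{\norm{g_{\vtheta}(\tau)}^2}$. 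It then remains to bound the second moment of $g_{\vtheta}(\tau)$ by $\frac{\smm\Rmax^2(1-\gamma^T)}{(1-\gamma)^3}$.

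The key step is to apply the weighted Cauchy--Schwarz inequality $\norm{\sum_t u_t}^2 \le \big(\sum_t w_t\big)\big(\sum_t \norm{u_t}^2/w_t\big)$ with $u_t = \gamma^t\Rew(a_t,s_t)\,\vzeta_t$ and weights $w_t = \gamma^t$. Since $\sum_{t=0}^{T-1}\gamma^t = \frac{1-\gamma^T}{1-\gamma}$ and $\abs{\Rew(a_t,s_t)}\le\Rmax$, this yields the pointwise bound $\norm{g_{\vtheta}(\tau)}^2 \le \frac{(1-\gamma^T)\Rmax^2}{1-\gamma}\sum_{t=0}^{T-1}\gamma^t\norm{\vzeta_t}^2$. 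The whole point of this manoeuvre is that the rewards are now absorbed into constants, leaving the purely score-dependent quantities $\norm{\vzeta_t}^2$ isolated — which is precisely what lets the martingale cancellation go through in the final step. I would then take the expectation and handle $\EVV{\norm{\vzeta_t}^2}$ exactly as in the REINFORCE proof: expanding $\norm{\vzeta_t}^2 = \sum_{h,l\le t}\langle\nabla\log\pi_{\vtheta}(a_h|s_h),\nabla\log\pi_{\vtheta}(a_l|s_l)\rangle$ and conditioning the later of the two actions on the preceding history, every off-diagonal term vanishes because $\EVV[a\sim\pi_{\vtheta}(\cdot|s)]{\nabla\log\pi_{\vtheta}(a|s)}=0$; hence $\EVV{\norm{\vzeta_t}^2} = \sum_{h=0}^t\EVV{\norm{\nabla\log\pi_{\vtheta}(a_h|s_h)}^2}\le(t+1)\smm$ by the smoothing condition~(\ref{eq:sm2}).

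Substituting and using $\sum_{t=0}^{T-1}(t+1)\gamma^t \le \sum_{t=0}^{\infty}(t+1)\gamma^t = \frac{1}{(1-\gamma)^2}$ collapses everything to the stated bound, after dividing by $N$. The main obstacle here is conceptual rather than computational: a naive triangle-inequality bound $\norm{g_{\vtheta}(\tau)}\le\sum_{t=0}^{T-1}\gamma^t\abs{\Rew(a_t,s_t)}\norm{\vzeta_t}$ destroys the orthogonality of the score increments and produces the strictly worse rate $\frac{(1-\gamma^T)^2}{(1-\gamma)^4}$. The challenge is thus to find the reward/score split — the weighted Cauchy--Schwarz with weights $\gamma^t$ — that simultaneously discharges the bounded rewards into clean geometric sums and preserves the zero-mean structure of the score that drives the cancellation; note that, as in Lemma~\ref{lem:reinforcevar}, only the second smoothing condition~(\ref{eq:sm2}) is actually used.
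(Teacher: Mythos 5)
Your proposal is correct and follows essentially the same route as the paper: the "weighted Cauchy--Schwarz with weights $\gamma^t$" is exactly the paper's step of splitting $\gamma^t=\gamma^{t/2}\cdot\gamma^{t/2}$ before applying Cauchy--Schwarz, and the remaining steps (i.i.d.\ reduction, second-moment bound, martingale cancellation of the score cross terms giving $(t+1)\smm$, and the arithmetico-geometric sum) coincide with the paper's proof. The only cosmetic difference is that you bound $\sum_{t=0}^{T-1}(t+1)\gamma^t$ by the infinite series while the paper evaluates the finite sum exactly before relaxing it; both yield the same final constant.
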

\begin{proof}
	Let $g_{\vtheta}(\tau) \coloneqq \sum_{t=0}^{T-1}\gamma^t\Rew(a_t,s_t)\left(\sum_{h=0}^{t}\nabla\log\pi_{\vtheta}(a_h\vert s_h)\right)$ with $s_t,a_t\in\tau$ for $t=0,\dots,T-1$.
	Using the definition of \textsc{G(PO)MDP}~(\ref{eq:gpomdp}) with $b=0$:
	\begin{align}
	\Var_{\Dataset\sim p_{\vtheta}}&\left[\hnabla J(\vtheta{;} \Dataset)\right]
	=\frac{1}{N}\Var_{\tau\sim p_{\vtheta}}\left[\sum_{t=0}^{T-1}\gamma^t\Rew(a_t,s_t)\left(\sum_{h=0}^{t}\nabla\log\pi_{\vtheta}(a_h\vert s_h)\right)\right] \label{eq:10.1}\\
	&\leq\frac{1}{N}\EV_{\tau\sim p_{\vtheta}}\left[\left(\sum_{t=0}^{T-1}\gamma^{\nicefrac{t}{2}}\Rew(a_t,s_t)\gamma^{\nicefrac{t}{2}}\left(\sum_{h=0}^{t}\nabla\log\pi_{\vtheta}(a_h\vert s_h)\right)\right)^2\right] \nonumber\\
	&\leq\frac{1}{N}\EV_{\tau\sim p_{\vtheta}}\left[\left(\sum_{t=0}^{T-1}\gamma^{t}\Rew(a_t,s_t)^2\right)\left(\sum_{t=0}^{T-1}\gamma^{t}\left(\sum_{h=0}^{t}\nabla\log\pi_{\vtheta}(a_h\vert s_h)\right)^2\right)\right] \label{eq:10.2}\\
	&\leq\frac{\Rmax^2(1-\gamma^{\top})}{N(1-\gamma)}\EV_{\tau\sim p_{\vtheta}}\left[\sum_{t=0}^{T-1}\gamma^{t}\left(\sum_{h=0}^{t}\nabla\log\pi_{\vtheta}(a_h\vert s_h)\right)^2\right]\nonumber\\
	&\leq\frac{\smm\Rmax^2(1-\gamma^{\top})}{N(1-\gamma)}\sum_{t=0}^{T-1}\gamma^{t}(t+1)\label{eq:10.3}\\
	&= \frac{\smm\Rmax^2(1-\gamma^{\top})}{N(1-\gamma)^3}\left[1-T\Big(\underbrace{\gamma^{\top} - \gamma^{T+1}}_{\geq 0}\Big) - \gamma^{\top}\right]\label{eq:10.4}\\
	&\leq \frac{\smm\Rmax^2 \Big( 1-\gamma^{\top} \Big)}{N(1-\gamma)^3}\nonumber,
	\end{align}
	where (\ref{eq:10.1}) is from the fact that the trajectories are \iid, (\ref{eq:10.2}) is from the Cauchy-Schwarz inequality, (\ref{eq:10.3}) is from the same argument used for (\ref{eq:9.1}) in the proof of Lemma~\ref{lem:reinforcevar}, and \eqref{eq:10.4} is from the sum of the arithmetico-geometric sequence.
\end{proof}
This is a generalization of Lemma 5.5 from~\cite{pirotta2013adaptive}. Again, in the Gaussian case, the original lemma is recovered by plugging the smoothing constant $\smm=\frac{\phimax}{\sigma^2}$ from Lemma~\ref{lem:gauss}. Note that this variance upper bound stays finite in the limit $T\to\infty$, which is not the case for \textsc{REINFORCE}.

\begin{table}[t]
	\caption{Upper bounds on the variance $\Vv(\vtheta)$ for common policy gradient estimators (single trajectory, no baseline), assuming the policy is smoothing (Definition~\ref{def:smooth}). Here $\Rmax$ is the maximum absolute-valued reward, $\gamma$ is the discount factor, $T$ is the task horizon, and the smoothing constant $\smm$ can be retrieved from Table~\ref{tab:smoothing} depending on the policy class.}
	\centering
	\begin{tabular}{cc}
		\toprule
		\textbf{REINFORCE} & \textbf{G(PO)MDP} \\
		\midrule
		$\frac{T\smm\Rmax^2(1-\gamma^{\top})^2}{(1-\gamma)^2}$
		&
		$\frac{\smm\Rmax^2 ( 1-\gamma^{\top} )}{(1-\gamma)^3}$
		\\
		\bottomrule
	\end{tabular}
	\label{tab:var}
\end{table}

\section{Analysis of Relaxed Algorithm}\label{app:relax}
In this section, we will analyze in more detailed the variants of SPG introduced in Section~\ref{sec:relax}. In particular, we will consider a very general \emph{relaxed} improvement guarantee, then we will specialize it to the baseline and milestone constraints discussed in the main paper.

The pseudocode for the relaxed version of SPG is provided in Algorithm~\ref{algo:spg2}.
\begin{algorithm}[t]
	\caption{Relaxed SPG}
	\label{algo:spg2}
	\begin{algorithmic}[1] 
		\State \textbf{Input:} initial policy parameter $\vtheta_0$, smoothness constant $L$, concentration bound $\epsilon$, failure probabilities $(\delta_k)_{k\ge 1}$, degradation thresholds $(\Delta_k)_{k\ge 1}$, mini-batch size~$n$
		\State $\alpha=\frac{1}{L}$ \Comment{fixed step size}
		\For{$k=1,2,\dots$}
		\State $i=0$, $\Dataset_{k,0} = \emptyset$
		\Do
		\State $i=i+1$
		\State Collect trajectories $\tau_{k,i,1}\dots\tau_{k,i,n}\simiid p_{\vtheta_k}$ 
		\State $\Dataset_{k,i}=\Dataset_{k,i-1}\cup\{\tau_{k,i,1}\dots\tau_{k,i,n}\}$
		\State Compute policy gradient estimate $g_{k,i} = \hnabla  J(\vtheta_k{;} \Dataset_{k,i})$
		\State $\delta_{k,i}=\frac{\delta_k}{i(i+1)}$
		\doWhile{$\epsilon(ni,\delta_{k,i}) > \frac{\norm{g_{k,i}}}{2} + \frac{L\Delta_k}{\norm{g_{k,i}}}$}
		\State $N_k=ni$, $\Dataset_k=\Dataset_{k,i}$ \Comment{adaptive batch size}
		\State Update policy parameters as $\vtheta_{k+1} \gets \vtheta_k + 
		\alpha\hnabla  J(\vtheta_k{;} \Dataset_k)$
		\EndFor
	\end{algorithmic}
\end{algorithm}
The algorithm takes as additional inputs the mini-batch size $n$ and a sequence of degradation thresholds $\Delta_k\ge 0$. Moreover, it assumes access to a generic gradient estimation error function $\epsilon$ with the following property:
\begin{assumption}\label{asm:general}
	Fixed a parameter $\vtheta\in\Theta$, a batch size $N\in\Naturals$ and a failure probability $\delta\in(0,1)$, with probability at least $1-\delta$: 
	\begin{equation*}
		\norm{\hnabla J(\vtheta{;}\Dataset)-\nabla J(\vtheta)} \le \epsilon(N, \delta),
	\end{equation*}
	where $\vert\Dataset\vert$ is a dataset of $N$ \iid trajectories collected with $\pi_{\vtheta}$ and:
	\begin{equation}
		\epsilon(N,\delta)=\mathcal{O}\left(\frac{\log(1/\delta)}{\sqrt{N}}\right).
	\end{equation}
\end{assumption}

The assumption, as the analysis that will follow, is less precise than Assumption~\ref{asm:expconc}, but more general. Indeed, it allows to use the empirical Bernstein bound from Lemma~\ref{lem:empirical_bernstein} for Softmax and other bounded-score policies.
We can prove that the per-iteration performance degradation of Algorithm~\ref{algo:spg2} is bounded by the user-defined threshold $\Delta_k$ with high probability. Of course, when $\Delta_k=0$, this is still a monotonic improvement guarantee.
\begin{theorem}
	Consider Algorithm~\ref{algo:spg2} applied to a smoothing policy, where $\hnabla J$ is an unbiased policy gradient estimator.
	Under Assumption~\ref{asm:general}, for any iteration~${k\ge 1}$, provided $\nabla J(\vtheta_k)\neq0$, with probability at least $1-\delta_k$:
	\begin{equation*}
		J(\vtheta_{k+1}) - J(\vtheta_k) \ge -\Delta_k.
	\end{equation*}
\end{theorem}
\begin{proof}
		Let the filtration $(\mathcal{F}_{k,i})_{i\ge1}$ be defined as in Section~\ref{sec:algo} and note that $N_k$ is a stopping time \wrt this filtration. Consider the event $E_{k,i}=\big\{\norm{g_{k,i}-\nabla J(\vtheta_k)}\le{\epsilon(i, \delta_{k,i})}\big\}$. By Assumption~\ref{asm:expconc}, $\Pro(\lnot E_{k,i})\le\delta_{k,i}$.
	Hence, by the same arguments used in the proof of Lemma~\ref{lem:stopping}:
	\begin{align}
		\EV[N_k] &\le \EV\left[\sum_{i=1}^\infty\mathbb{I}\left(\epsilon(ni,\delta_{k,i}) > \frac{\norm{g_{k,i}}}{2} + \frac{L\Delta_k}{\norm{g_{k,i}}}\right)\right]\\
		&\le \EV\left[\sum_{i=1}^\infty\mathbb{I}\left(\epsilon(ni,\delta_{k,i}) > \frac{\norm{g_{k,i}}}{2} \right)\right]\\
		&=\EV\left[\sum_{i=1}^\infty\mathbb{I}\left(\epsilon(ni,\delta_{k,i}) > \frac{\norm{g_{k,i}}}{2} , E_{k,i}\right)\right] \nonumber\\&\qquad+ \EV\left[\sum_{i=1}^\infty\mathbb{I}\left(\epsilon(i,\delta_{k,i}) > \frac{\norm{g_{k,i}}}{2} , \lnot E_{k,i}\right)\right] \\
		&\le\sum_{i=1}^\infty\mathbb{I}\left(\epsilon(ni,\delta_{k,i}) > \frac{\left(\norm{\nabla J(\vtheta_k)}-\epsilon(ni,\delta_{k,i})\right)}{2}\right) + \sum_{i=1}^\infty\Pro(\lnot E_{k,i}) \\
		&\le \min_{i\ge 1}\left\{\epsilon(ni,\delta_{k,i}) \le \frac{\left(\norm{\nabla J(\vtheta_k)}-\epsilon(ni,\delta_{k,i})\right)}{2}\right\} + \sum_{i=1}^\infty \delta_{k,i} \\
		&\le\min_{i\ge 1}\left\{\epsilon(ni,\delta_{k,i}) \le \frac{\norm{\nabla J(\vtheta_k)}}{3}\right\}) + \delta_k\sum_{i=1}^\infty \frac{1}{i(i+1)} \\
		&=\min_{i\ge 1}\left\{\epsilon(ni,\delta_{k,i}) \le \frac{\norm{\nabla J(\vtheta_k)}}{3}\right\}) + \delta_k,
	\end{align}
	which is finite since, by Assumption~\ref{asm:general}:
	\begin{equation}
		\epsilon(ni,\delta_{k,i})=O\left(\frac{\log(1/\delta_{k,i})}{\sqrt{ni}}\right) = O\left(\frac{\log i}{\sqrt{i}}\right).
	\end{equation}
	This shows that the inner loop of Algorithm~\ref{algo:spg2} always terminates with a finite batch size. By the same optional-stopping argument as in the proof of Theorem~\ref{th:mi}, $\EV[\hnabla J(\vtheta_k{;}\Dataset_k)]=\nabla J(\vtheta_k)$, which means the gradient estimate is unbiased. By the stopping condition, for all $k$:
	\begin{equation}\label{pp:threshold}
		\epsilon(N_k,\delta_{k,N_k}) \le \frac{\norm{\hnabla  J(\vtheta_k{;} \Dataset_k)}}{2} + \frac{L\Delta_k}{\norm{\hnabla  J(\vtheta_k{;} \Dataset_k)}},
	\end{equation}
	with probability at least $1-\sum_{i=1}^\infty\delta_{k,i}=1-\sum_{i=1}^\infty\delta_k/(i(i+1))=1-\delta_k$.
	By Theorem~\ref{th:storacle} and the choice of step size $\alpha=1/L$, with the same probability:
	\begin{align}
		J(\vtheta_{k+1}) - J(\vtheta_{k}) &\ge \frac{\norm{\hnabla J(\vtheta_k;\Dataset_k)}}{L}\left(\frac{\norm{\hnabla J(\vtheta_k;\Dataset_k)}}{2} - \epsilon(N_k, \delta_{k,N_k})\right)\\
		& \ge -\Delta_k,
	\end{align}
	where the last inequality is from~\eqref{pp:threshold}.
\end{proof}

In the following we discuss some applications of Algorithm~\ref{algo:spg2} to specific safety requirements.

\paragraph{Baseline Constraint.} A common requirement is for the updated policy not to perform (significantly) worse than a known baseline policy~\citep[\eg][]{garcelon2020conservative,laroche2019safe}. The safety constraint is thus:
\begin{equation}
	J(\vtheta_{k+1}) \ge \lambda J_b,
\end{equation}
where $J_b$ is the (discounted) performance of the baseline policy and $\lambda\in[0,1]$ is a user-defined significance parameter. Equivalently, $J(\vtheta_{k+1})-J(\vtheta_{k})\ge \lambda J_b-J(\vtheta_{k})$, and Algorithm~\ref{algo:spg2} satisfies this safety requirement if we set the degradation threshold as follows:
\begin{equation}
	\Delta_k = \max\{J(\vtheta_k) - \lambda J_b, 0\}.
\end{equation}
However, the performance of the current policy must also be estimated from data, and accidentally over-estimating it may result in excessive performance degradation. Hence, we replace it with a lower confidence bound based on the empirical Bernstein inequality~\citep{maurer2009empirical}. See Algorithm~\ref{algo:baseline_pg} for details. Note how the failure probability in line 10 is adjusted \wrt Algorithm~\ref{algo:spg2} to account for this additional estimation step. With this small caveat, the analysis of Algorithm~\ref{algo:baseline_pg} can be carried out analogously to the one of Algorithm~\ref{algo:spg2}.
\begin{algorithm}[t]
	\caption{SPG with baseline constraint}
	\label{algo:baseline_pg}
	\begin{algorithmic}[1] 
		\State \textbf{Input:} initial policy parameter $\vtheta_0$, smoothness constant $L$, concentration bound $\epsilon$, failure probabilities $(\delta_k)_{k\ge 1}$, baseline performance $J_b$, significance parameter $\lambda$, mini-batch size~$n$
		\State $\alpha=\frac{1}{L}$ \Comment{fixed step size}
		\For{$k=1,2,\dots$}
		\State $i=0$, $\Dataset_{k,0} = \emptyset$
		\Do
		\State $i=i+1$
		\State Collect trajectories $\tau_{k,i,1}\dots\tau_{k,i,n}\simiid p_{\vtheta_k}$ 
		\State $\Dataset_{k,i}=\Dataset_{k,i-1}\cup\{\tau_{k,i,1}\dots\tau_{k,i,n}\}$
		\State Compute policy gradient estimate $g_{k,i} = \hnabla  J(\vtheta_k{;} \Dataset_{k,i})$
		\State $\delta_{k,i}=\frac{\delta_k}{2i(i+1)}$
		\State Estimate performance mean $\hat{J}$ and variance $\hat{V}$  from $\Dataset_{k,i}$
		\State $\underline{J} = \hat{J} - \sqrt{\frac{2\hat{V}\log(2/\delta_{k,i})}{ni}} - \frac{7\Rmax\log(2/\delta_{k,i})}{3(1-\gamma)(ni-1)}$
		\State $\Delta_{k,i}= \max\{\underline{J} - \lambda J_b,0\}$\Comment{baseline constraint}
		\doWhile{$\epsilon(ni,\delta_{k,i}) > \frac{\norm{g_{k,i}}}{2} + \frac{L\Delta_{k,i}}{\norm{g_{k,i}}}$}
		\State $N_k=ni$, $\Dataset_k=\Dataset_{k,i}$ \Comment{adaptive batch size}
		\State Update policy parameters as $\vtheta_{k+1} \gets \vtheta_k + 
		\alpha\hnabla  J(\vtheta_k{;} \Dataset_k)$
		\EndFor
	\end{algorithmic}
\end{algorithm}

\paragraph{Milestone Constraint.}
In our numerical simulations we consider the following safety constraint:
\begin{equation}
	J(\vtheta_{k+1}) \ge \lambda \max_{h=1,\dots,k}J(\vtheta_h),
\end{equation}
which can be enforced by setting the degradation threshold in Algorithm~\ref{algo:spg2} as:
\begin{equation}
	\Delta_k = \max\left\{J(\vtheta_k) - \lambda\max_{h=1,\dots,k}J(\vtheta_h), 0\right\}.
\end{equation}
Again, we must replace the unknown performance $J(\vtheta_k)$ with a lower confidence bound. In this case, we also need to overestimate the best historical performance. See Algorithm~\ref{algo:milestone_pg} for details.
Note that this safety constraint reduces to monotonic improvement if the significance parameter is set to $\lambda=1$, since $\max_{h=1,\dots,k}J(\vtheta_h)\ge J(\vtheta_k)$.

\begin{algorithm}[t]
	\caption{SPG with milestone constraint}
	\label{algo:milestone_pg}
	\begin{algorithmic}[1] 
		\State \textbf{Input:} initial policy parameter $\vtheta_0$, smoothness constant $L$, concentration bound $\epsilon$, failure probabilities $(\delta_k)_{k\ge 1}$, baseline performance $J_b$, significance parameter $\lambda$, mini-batch size~$n$
		\State $\alpha=\frac{1}{L}$ \Comment{fixed step size}
		\State $J^\star =-\infty$
		\For{$k=1,2,\dots$}
		\State $i=0$, $\Dataset_{k,0} = \emptyset$
		\Do
		\State $i=i+1$
		\State Collect trajectories $\tau_{k,i,1}\dots\tau_{k,i,n}\simiid p_{\vtheta_k}$ 
		\State $\Dataset_{k,i}=\Dataset_{k,i-1}\cup\{\tau_{k,i,1}\dots\tau_{k,i,n}\}$
		\State Compute policy gradient estimate $g_{k,i} = \hnabla  J(\vtheta_k{;} \Dataset_{k,i})$
		\State $\delta_{k,i}=\frac{\delta_k}{2i(i+1)}$
		\State Estimate performance mean $\hat{J}$ and variance $\hat{V}$  from $\Dataset_{k,i}$
		\State $\underline{J} = \hat{J} - \sqrt{\frac{2\hat{V}\log(2/\delta_{k,i})}{ni}} - \frac{7\Rmax\log(2/\delta_{k,i})}{3(1-\gamma)(ni-1)}$
		\State $\overline{J} = \max\left\{\hat{J} + \sqrt{\frac{2\hat{V}\log(2/\delta_{k,i})}{ni}} + \frac{7\Rmax\log(2/\delta_{k,i})}{3(1-\gamma)(ni-1)}, J^\star\right\}$
		\State $\Delta_{k,i}= \max\{\underline{J} - \lambda\overline{J},0\}$\Comment{milestone constraint}
		\doWhile{$\epsilon(ni,\delta_{k,i}) > \frac{\norm{g_{k,i}}}{2} + \frac{L\Delta_{k,i}}{\norm{g_{k,i}}}$}
		\State $N_k=ni$, $\Dataset_k=\Dataset_{k,i}$ \Comment{adaptive batch size}
		\State $J^\star = \max\{\overline{J}, J^\star\}$
		\State Update policy parameters as $\vtheta_{k+1} \gets \vtheta_k + 
		\alpha\hnabla  J(\vtheta_k{;} \Dataset_k)$
		\EndFor
	\end{algorithmic}
\end{algorithm}

\section{Task Specifications}\label{app:task}
In this Appendix, we provide detailed descriptions of the control tasks used in the numerical simulations.

\subsection{LQR}\label{app:lqg}
The {LQR} is a classical optimal control problem~\citep{dorato1994linear}. It models the very general task of controlling a set of variables to zero with the minimum effort. Given a state space $\Sspace\subseteq\Reals^n$ and an action space $\Aspace\subseteq\Reals^m$, the next state is a linear function of current state and action:\footnote{A zero-mean Gaussian noise is typically added to the next state to model disturbances. However, since we always consider Gaussian policies with fixed standard deviation, we can ignore the system noise without loss of generality. Indeed, from linearity of the next state, said $\overline{a}_t$ the expected action under~\eqref{eq:lqpol}, $s_{t+1}=As_t+B\overline{a}_t+B\epsilon$, where $\epsilon\sim\mathcal{N}(0,\sigma^2\Id)$. From the property of Gaussians, we can write $\epsilon = \epsilon_a+\epsilon_b$ where $\epsilon_a\sim\mathcal{N}(0,\sigma_a^2\Id)$ is from the actual stochasticity of the agent and $\epsilon_b\sim\mathcal{N}(0,\sigma_b^2B^{\dagger})$ is the system noise, which can be subsumed by the policy noise in numerical simulations for simplicity.}
\begin{equation}
s_{t+1} = As_t + Ba_t,
\end{equation}
where $A\in\Reals^{n\times n}$ and $B\in\Reals^{n\times m}$. The reward is quadratic in both state and action:
\begin{equation}
r_{t+1} = s_t^{\top}Cs_t + a_t^{\top}Da_t,
\end{equation}
where $C\in\Reals^{n\times n}$ and $D\in \Reals^{m\times m}$ are positive definite matrices. A linear controller is optimal for this task~\citep{dorato1994linear} and can be computed in closed form with dynamic-programming techniques.  In our experiments, we always consider shallow Gaussian policies of the form:
\begin{equation}\label{eq:lqpol}
\pi(\cdot\vert s_t) = \mathcal{N}(\vtheta^{\top}s_t, \sigma^2\Id),
\end{equation}
where $\vtheta\in\Reals^n$ and $\sigma>0$ can be fixed or learned as an additional policy parameter. This version of {LQR} with Gaussian policies is also called LQG~\citep[Linear-Quadratic Gaussian Regulator,][]{peters2008reinforcement}. States and actions are clipped in practice when they happen to fall outside $\Sspace$ and $\Aspace$, respectively. We have ignored nonlinearities stemming from this fact.

The LQR problem used in Section~\ref{sec:exp} is $1$-dimensional with $\mathcal{S}=\Aspace=[-1,1]$, $A=B=C=D=1$.

\subsection{Cart-Pole}\label{app:cartpole}
This is the \texttt{CartPole-v1} environment from \texttt{openai/gym}~\citep{brockman2016openai}. It has $4$-dimensional continuous states and finite (two) actions. The goal is to keep a pole balanced by controlling a cart to which the pole is attached. Reward is $+1$ for every time-step until the pole falls. We set a maximum episode length of $100$. See the official documentation for more details~(\url{https://gym.openai.com/envs/CartPole-v1/}).

%%=============================================%%
%% For submissions to Nature Portfolio Journals %%
%% please use the heading ``Extended Data''.   %%
%%=============================================%%

%%=============================================================%%
%% Sample for another appendix section			       %%
%%=============================================================%%

%% \section{Example of another appendix section}\label{secA2}%
%% Appendices may be used for helpful, supporting or essential material that would otherwise 
%% clutter, break up or be distracting to the text. Appendices can consist of sections, figures, 
%% tables and equations etc.

\end{appendices}

%%===========================================================================================%%
%% If you are submitting to one of the Nature Portfolio journals, using the eJP submission   %%
%% system, please include the references within the manuscript file itself. You may do this  %%
%% by copying the reference list from your .bbl file, paste it into the main manuscript .tex %%
%% file, and delete the associated \verb+\bibliography+ commands.                            %%
%%===========================================================================================%%

%\section*{Declarations}
%\begin{itemize}
%	\item {\bfseries Funding:} there has been no significant financial support for this work that could have influenced its outcome.
%	\item {\bfseries Conflicts of interest:} 
%	there are no known conflicts of interest associated with this publication.
%	\item {\bfseries Ethics approval:} not applicable.
%	\item {\bfseries Consent to participate:} not applicable.
%	\item {\bfseries Consent for publication:} not applicable.
%	\item {\bfseries Availability of data and material:} not applicable.
%	\item {\bfseries Code availability:} custom code was used for the numerical evaluations and is available at \url{https://github.com/T3p/potion}.
%	\item {\bfseries Authors' contributions:} all authors --- Matteo Papini, Matteo Pirotta, and Marcello Restelli --- have given substantial contributions to the conception and design of this work. 
%\end{itemize}

\clearpage

\bibliography{biblio}% common bib file
%% if required, the content of .bbl file can be included here once bbl is generated
%%\input sn-article.bbl

%% Default %%
%%\input sn-sample-bib.tex%

\end{document}